\newtheorem{theorem}{Theorem}
\newtheorem{definition}{Definition}
\newtheorem{assumption}{Assumption}
\newtheorem{property}{Property}
\newtheorem{proposition}{Proposition}
\newcommand{\sgn}{\mathrm{sgn}}
\journal{Journal of Computational Physics}
\begin{document}

\begin{frontmatter}

%% Title, authors and addresses

\title{An Imbalanced Learning-based Sampling Method for Physics-informed Neural Networks}

\author[first]{Jiaqi Luo}
\ead{jluo@fields.utoronto.ca}
\affiliation[first]{organization={The Fields Institute for Research in Mathematical Sciences},%Department and Organization
            addressline={222 College Street}, 
            city={Toronto},
            postcode={M5T3J1}, 
            state={Ontario},
            country={Canada}}
            
\author[second]{Yahong Yang}
\ead{yxy5498@psu.edu}
\affiliation[second]{organization={Department of Mathematics,Pennsylvania State University},
addressline={208 McAllister Building},
city = {State College},
postcode={16801},
state={PA},
country={USA}}

% \affiliation[third]{organization={Department of Mathematics,Pennsylvania State University},
% addressline={208 McAllister Building},
% city = {State College},
% postcode={16801},
% state={PA},
% country={USA}}

\author[third]{Yuan Yuan}
\ead{y.yuan@dukekunshan.edu.cn}
\affiliation[third]{organization={Data Science Research Center, Duke Kunshan University},%Department and Organization
            addressline={No.8 Duke Avenue}, 
            city={Kunshan},
            postcode={215000}, 
            state={Jiangsu Province},
            country={China}}
\author[fourth,third]{Shixin Xu\corref{cor}}
\cortext[cor]{Corresponding author}
\ead{shixin.xu@dukekunshan.edu.cn}
\affiliation[fourth]{organization={Zu Chongzhi Center for Mathematics and Computational Sciences, Duke Kunshan University},%Department and Organization
            addressline={No.8 Duke Avenue}, 
            city={Kunshan},
            postcode={215000}, 
            state={Jiangsu Province},
            country={China}}
            
\author[second]{Wenrui Hao}
%\cortext[cor1]{Corresponding author}
\ead{wxh64@psu.edu}

\begin{abstract}
% This paper presents RSmote, a novel local adaptive sampling technique designed to enhance Physics-Informed Neural Networks (PINNs) by leveraging imbalanced learning. While traditional residual-based adaptive sampling methods have demonstrated potential in boosting PINN accuracy, they often fall short in efficiency and resource management, especially in high-dimensional settings. RSmote overcomes these limitations by focusing on regions with large residuals and applying over-sampling strategies from imbalanced learning. We also provide a rigorous theoretical analysis that underscores our findings.
% Through extensive evaluations, we compare RSmote with the state-of-the-art Residual-based Adaptive Distribution (RAD) method across various dimensions and equations. Our results reveal that RSmote not only matches or surpasses RAD in accuracy but also dramatically reduces memory consumption. This efficiency becomes increasingly crucial in high-dimensional scenarios, positioning RSmote as a powerful tool for tackling complex partial differential equations, particularly when computational resources are constrained.
This paper introduces Residual-based Smote (RSmote), an innovative local adaptive sampling technique tailored to improve the performance of Physics-Informed Neural Networks (PINNs) through imbalanced learning strategies. Traditional residual-based adaptive sampling methods, while effective in enhancing PINN accuracy, often struggle with efficiency and high memory consumption, particularly in high-dimensional problems. RSmote addresses these challenges by targeting regions with high residuals and employing oversampling techniques from imbalanced learning to refine the sampling process. 
Our approach is underpinned by a rigorous theoretical analysis that supports the effectiveness of RSmote in managing computational resources more efficiently. Through extensive evaluations, we benchmark RSmote against the state-of-the-art Residual-based Adaptive Distribution (RAD) method across a variety of dimensions and differential equations. The results demonstrate that RSmote not only achieves or exceeds the accuracy of RAD but also significantly reduces memory usage, making it particularly advantageous in high-dimensional scenarios. These contributions position RSmote as a robust and resource-efficient solution for solving complex partial differential equations, especially when computational constraints are a critical consideration.
\end{abstract}

% %%Graphical abstract
% \begin{graphicalabstract}
% %\includegraphics{grabs}
% \end{graphicalabstract}

% %%Research highlights
% \begin{highlights}
% \item Research highlight 1
% \item Research highlight 2
% \end{highlights}

\begin{keyword}
Sampling method \sep Imbalanced learning \sep Physics-informed neural networks
\end{keyword}

\end{frontmatter}

%% \linenumbers

%% main text
\section{Introduction}
\label{s:intro}

Partial differential equations (PDEs) are powerful modeling techniques with numerous applications in materials, engineering, and physics. One of the primary challenges in PDE research is finding numerical solutions. Fortunately, several fantastic methods can achieve excellent performance while benefiting from theoretical guarantees, such as the finite element method \cite{hughes2012finite}, finite difference method \cite{thomas2013numerical}, and spectral method \cite{shen2011spectral}. 
However, some bottlenecks limit their applications.
First, traditional methods encounter the "curse of dimensionality" \cite{hu2024tackling, powell2007approximate}, which poses significant challenges for PDE solvers because the computational cost increases dramatically when the dimensionality of the problem becomes very high, making the computation time of the problem unacceptable.
Second, traditional solvers have difficulties incorporating data from experiments and cannot handle situations where the governing equations are (partially) unknown.
Further, these methods struggle with handling complex geometry.
These limitations restrict the applicability and scalability of many existing numerical techniques.

With remarkable success in computer vision \cite{he2016deep} and natural language processing \cite{vaswani2017attention}, deep learning \cite{lecun2015deep} has been proven to be an efficient approach to solving high-dimensional problems. Besides, deep learning is flexible, mesh-free, and easy to incorporate data. As a result, deep learning approaches offer an alternative for tackling PDE problems that are difficult for conventional methods \cite{long2018pde, long2019pde, sirignano2018dgm, weinan2018deep,zang2020weak}.

In deep learning for PDEs, neural networks are employed to learn and approximate the solution to a given PDE. These neural networks, often referred to as "PDE solvers", are trained on available data or by leveraging the known PDE equations. They take inputs, such as spatial coordinates and initial/boundary conditions, and produce an approximation of the solution. 
One of the most effective deep learning-based approaches is called physics-informed neural networks (PINNs) \cite{raissi2019physics, karniadakis2021physics}. 
PINNs aim to learn the underlying physics of a system by combining a neural network with the governing equations that describe that system. They have been applied to tackle diverse problems in computational science and engineering \cite{raissi2020hidden,yazdani2020systems,lu2021physics}. Additionally, PINNs have been extended to solve other types of PDEs, such as integro-differential equations \cite{lu2021deepxde}, fractional PDEs \cite{pang2019fpinns}, stochastic PDEs \cite{zhang2019quantifying}, and nonlinear PDEs \cite{zheng2024hompinns,huang2022hompinns}.

Directly training a PINN to obtain accurate and convergent predictions for a wide range of PDE problems with increasing levels of complexity can be challenging due to the lack of theoretical guarantees \cite{siegel2023greedy,chen2022randomized}.
Residual-based adaptive sampling methods in PINNs can accelerate convergence and improve performance, playing an important role in training deep neural networks \cite{wu2023comprehensive}. The effect of residual points on PINNs is analogous to the effect of mesh points on FEM, suggesting that the location and distribution of these residual points are crucial to the performance of PINNs. However, existing methods face challenges in efficiently handling high-dimensional problems. Global methods that estimate the distribution over the entire domain often require significant computational resources, particularly GPU memory, as evidenced by the increasing memory usage of RAD methods with higher dimensions and sample sizes in our results.

To address these issues, we propose an imbalanced learning-based local adaptive sampling method called RSmote. This method is motivated by the observation that regions with large residuals are typically localized and is based on the theoretical findings that a less smooth part significantly requires more samples. By focusing on these specific regions, we can enhance efficiency while using fewer resources. Our results demonstrate that RSmote consistently achieves better or comparable accuracy to RAD methods across various dimensions, while significantly reducing memory usage.

Specifically, our method first categorizes residual points into two classes: negative (small residuals) and positive (large residuals). We then apply over-sampling techniques from imbalanced learning to generate new samples in the positive class. Finally, we resample the dataset to further optimize memory usage. This approach allows us to maintain high accuracy even in high-dimensional problems (up to d=100 in our experiments) while keeping memory requirements low. The efficiency of RSmote is particularly evident in higher dimensions, where it outperforms the state-of-the-art methods in both accuracy and resource utilization, making it especially valuable for complex, high-dimensional PDEs or scenarios with limited computational resources.

The main contributions are summarized as follows:
\begin{itemize}
    \item  \textbf{Introduction of RSmote}: A novel local adaptive sampling method specifically designed for PINNs, incorporating techniques from imbalanced learning to improve performance in high-dimensional problems.
    \item \textbf{Efficiency Enhancement}: RSmote addresses the inefficiencies of existing global adaptive sampling methods by focusing on localized regions with high residuals, leading to significant improvements in resource utilization and memory efficiency.
    \item \textbf{Theoretical Support}: The paper provides a theoretical analysis to validate the effectiveness of the local sampling method, adding a solid foundation to the empirical observations.
    \item \textbf{Empirical Validation}: Extensive comparisons with the state-of-the-art RAD method demonstrate that RSmote consistently achieves comparable or superior accuracy while significantly reducing memory usage, particularly in high-dimensional cases.
\end{itemize}

\section{Preliminaries}
\label{s:pre}
\subsection{Physics-informed neural networks}
Let $\Omega \subseteq \mathbb{R}^d$ be a spatial domain and $\mathbf{x} \in \Omega$ be a spatial variable.
The PDE problem is to find a solution $u(\mathbf{x})$ such that
\begin{equation}
\label{e.pde}
\begin{aligned} 
&\mathcal{L}u(\mathbf{x}) = f(\mathbf{x}), ~\forall{\mathbf{x}} \in \Omega \\ 
&\mathcal{B}u(\mathbf{x}) = g(\mathbf{x}), ~\forall{\mathbf{x}} \in \partial\Omega,
\end{aligned}  
\end{equation}
where $\mathcal{L}$ is the partial differential operator, $\mathcal{B}$ is the boundary operator, $f(\mathbf{x})$ is the source function, and $g(\mathbf{x})$ represents the
boundary conditions.

We consider a general form of a PINN, where the neural network is used to approximate a function that satisfies a PDE along with some observed data.

Let $\hat{u}(\mathbf{x}; \Theta)$ be the neural network approximated solution with parameters $\Theta$. 
The PINN is trained to minimize the following loss function:
\begin{equation}
\label{e.obj}
   \min_{\Theta} \mathbb{E}_{\mathbf{x}\in \Omega}\|\mathcal{L}\hat{u}(\mathbf{x}; \Theta)-f(\mathbf{x})\| + \gamma\ \mathbb{E}_{\mathbf{x}\in \partial\Omega}\|\mathcal{B}\hat{u}(\mathbf{x}; \Theta)-g(\mathbf{x})\|, 
\end{equation}
where $\|\cdot\|$ is usually the $L_2$-norm and $\gamma$ is a parameter for weighting the sum.
%\textcolor{red}{Also, if there are observation data, should we also have another term ?}

\subsection{Residual-based sampling method for PINNs}

Residual point sampling methods can be divided into two main categories: uniform sampling and nonuniform sampling. Uniform sampling involves placing points across the computational domain either on a uniform grid or scattered randomly according to a continuous uniform distribution \cite{raissi2019physics, pang2019fpinns}. While effective for straightforward PDEs, this method may lack efficiency when dealing with more complex equations \cite{wu2023comprehensive}.

To enhance accuracy, researchers have delved into nonuniform sampling approaches. In 2019, Lu et al. \cite{lu2021deepxde} introduced the first adaptive nonuniform sampling method for PINNs, dubbed residual-based adaptive refinement (RAR), inspired by adaptive mesh refinement in finite element methods. This method identifies regions with significant PDE residuals and adds new points accordingly. Another technique introduced in 2021 \cite{nabian2021efficient} resamples all residual points based on a probability density function (PDF) proportional to the PDE residual.

In the pursuit of greater sampling efficiency and accuracy, recent studies \cite{wu2023comprehensive} propose two novel residual-based adaptive sampling methods: residual-based adaptive distribution (RAD) and residual-based adaptive refinement with distribution (RAR-D). These methods dynamically adjust the distribution of residual points during training based on PDE residuals. Additionally, some researchers \cite{gao2023failure, gao2023failure2} have introduced an effective failure probability derived from the residual as a posterior error indicator to generate new training points, similar to adaptive finite element methods, resulting in improved performance.

Other methodologies also utilize residual points but do not estimate the probability distribution. For instance, Gu et al. \cite{gu2021selectnet} introduced SelectNet, a novel self-paced learning framework aimed at enhancing the convergence of network-based least squares models. In DAS-PINNs \cite{tang2023pinns}, researchers model the residual as a probability density function, approximating it with a deep generative model. Gao et al. \cite{gao2023active} leverage active learning to adaptively select new points, while \cite{zeng2022adaptive} proposes various adaptive training strategies, including adaptive loss function, activation function, and sampling methods, to improve accuracy. Moreover, \cite{mao2023physics} employs residual points and gradients to address equations with sharp solutions.

\subsection{Imbalanced Learning}
Imbalanced learning \cite{he2009learning} refers to a scenario in machine learning where the distribution of classes in the training dataset is highly uneven, with one class significantly outnumbering the others. This imbalance can pose challenges for models, as they may become biased toward the majority class and perform poorly on minority classes. 

To address this issue, various techniques have been developed, and one such method is the Synthetic Minority Over-sampling Technique (SMOTE) \cite{chawla2002smote}. SMOTE is an algorithm designed to balance class distribution by generating synthetic samples for the minority class. It works by creating synthetic instances along the line segments connecting existing minority class instances. By doing so, SMOTE enhances the representation of the minority class, enabling the model to better generalize and improve its performance on underrepresented classes.

The SMOTE algorithm focuses on the minority class $C$. For each instance $\mathbf{x_i}$ in $C$, SMOTE generates synthetic instances by interpolating between $\mathbf{x_i}$ and its $k$-nearest neighbors in the feature space.

\textbf{Step 1: Nearest Neighbors}
Use a distance metric (e.g., Euclidean distance) to find the $k$ nearest neighbors of $\mathbf{x_i}$ in the feature space.
\[ \text{NearestNeighbors}(\mathbf{x_i}, k) = \{\mathbf{x_{i1}}, \mathbf{x_{i2}}, \ldots, \mathbf{x_{ik}}\} \]

\textbf{Step 2: Synthetic Instance Generation}
For each neighbor $\mathbf{x_{ij}}$, where $j = 1, 2, \ldots, k$, generate a synthetic instance $\mathbf{\hat{x}_{j}}$ using the following formula:
\[ \mathbf{\hat{x}_{j}} = \mathbf{x_i} + \textcolor{black}{\alpha} (\mathbf{x_{ij}} - \mathbf{x_i}) \]
where \textcolor{black}{$\alpha$} is a random value between 0 and 1.

\textbf{Step 3: Combine Original and Synthetic Instances}
Combine the original instances with the newly generated synthetic instances. This increases the size of the minority class $C$ and balances the class distribution.

This approach is particularly beneficial in situations where the rarity of certain classes can lead to suboptimal model performance and misclassification.
% In this paper, we use the vanilla SMOTE algorithm, for more algorithms, we refer the readers to the survey \cite{kovacs2019empirical}.

\section{Imbalanced learning-based sampling algorithm}
\label{s:method}

\subsection{Imbalanced phenomenon}
When a deep neural network is trained to approximate the solution of a PDE, a notable phenomenon occurs: the gradual reduction in the number of large residual points. This trend is a key indicator of the network's improving performance and its increasing ability to capture the underlying structure of the PDE solution.

We use the following simple case to further explain this phenomenon. Consider the 1-D elliptic equation:
\begin{equation}
    % -\bigtriangleup u = \pi^2\sin(\pi x),~~ x\in [-1, 1]\\
    -\bigtriangleup u = 200(\tanh(10x))^3+200\tanh(10x),~~ x\in [-5, 5]\\
\end{equation}

Fig.~\ref{f.imb1} shows the changes of the PDE dynamic, PDE residuals, and histograms of the residuals with the iteration increasing. In the early iterations (as seen in iteration 100), the PINN struggles to capture the full dynamics of the PDE, leading to noticeable discrepancies between the predicted and numerical solutions, particularly near regions of steep gradients or discontinuities. 

As training progresses to iteration 500, the PINN begins to improve its approximation of the PDE's dynamics, but the residuals still indicate that certain regions are being learned better than others. This imbalance is reflected in the residuals, \textcolor{black}{where the heights of the two bars differ significantly.}

By iteration 1000, the PINN's prediction closely aligns with the numerical solution across most of the domain. However, the residuals and their distribution continue to show a non-uniform pattern, implying that while the overall error is decreasing, certain parts of the solution space are still not as well-learned as others. 
This persistence of imbalanced residuals, even with more training, highlights the challenge of achieving uniform accuracy across the entire domain, which is a key characteristic of the imbalanced phenomenon in training PINNs.

\begin{figure}[!ht]
    \centering
    \subfigure[Iteration=100]{\includegraphics[width=\linewidth]{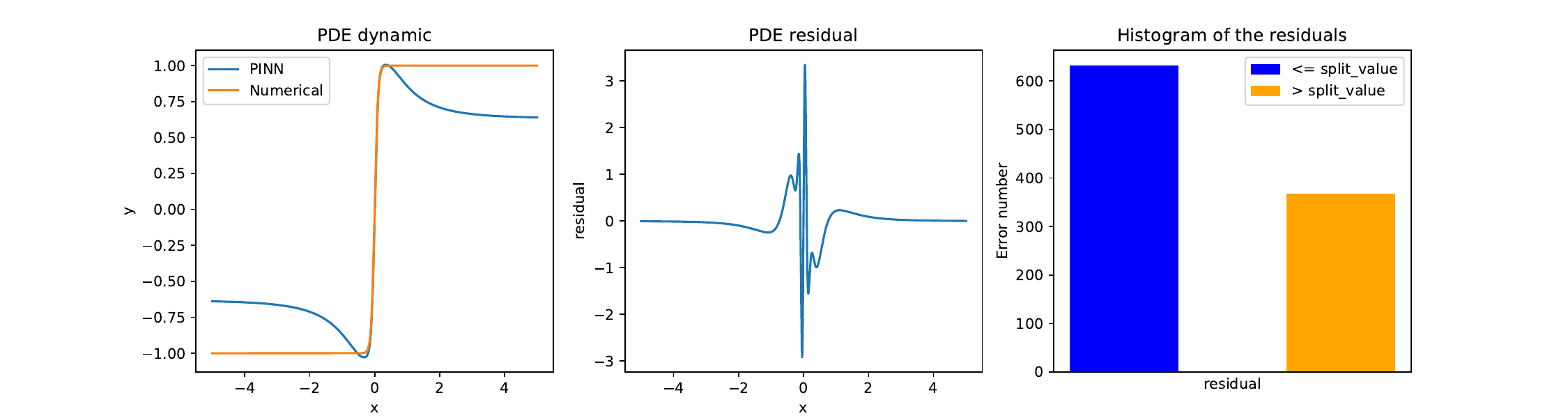}}
    \subfigure[Iteration=500]{\includegraphics[width=\linewidth]{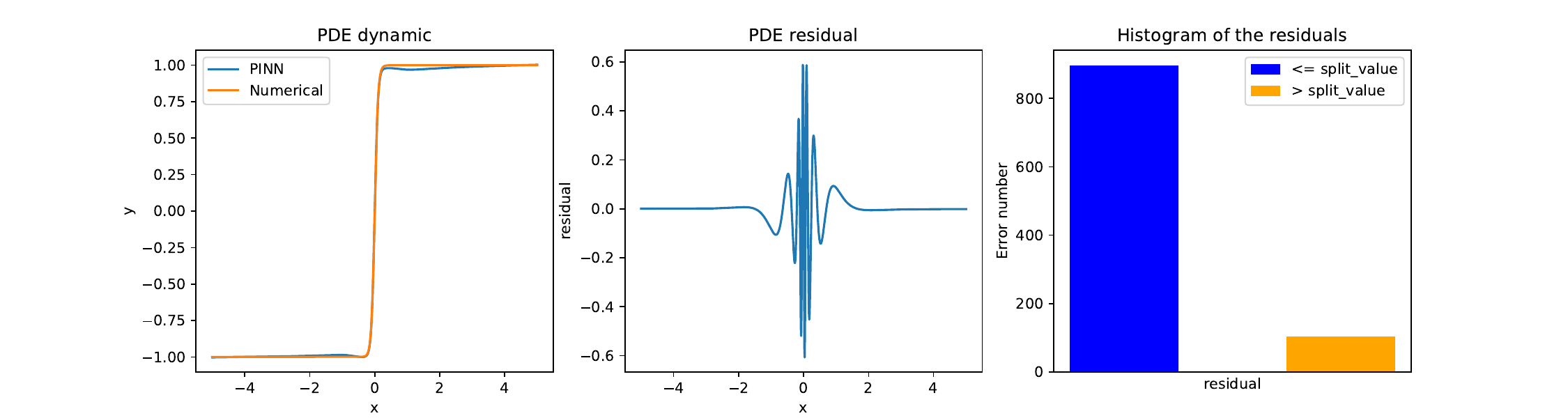}}
    \subfigure[Iteration=1000]{\includegraphics[width=\linewidth]{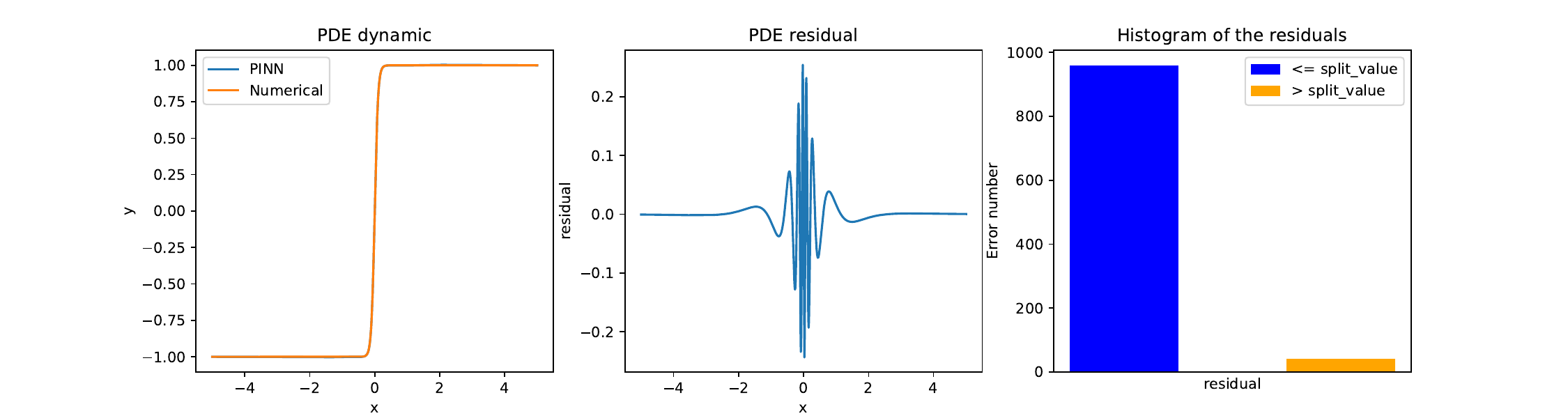}}

    \caption{\textcolor{black}{Evolution of the imbalanced phenomenon in PDE dynamics over iterations. The figure displays the PDE dynamic, PDE residuals, and histograms of the residuals for three different iterations (100, 500, and 1000). The left column shows the comparison between the predicted solution from the PINN and the numerical solution. The middle column visualizes the residuals of the PDE at each point, and the right column presents the distribution of these residuals splited by a fixed threshold. The plots demonstrate how the imbalanced phenomenon evolves as training progresses, with the residuals showing different patterns at each stage. Here the split value is chosen as 0.1. }}
    \label{f.imb1}
\end{figure}

% Ideally, as the network approaches convergence, its output closely approximates the true PDE solution across the entire domain. At this stage, very few points, if any, exhibit large residuals. The remaining high-residual points often represent either particularly challenging regions of the solution or areas where the network struggles to fully capture the PDE's behavior. 
This phenomenon underscores the difficulty of ensuring that the PINN learns the PDE dynamics evenly across the domain, often requiring specialized techniques or loss functions to address the imbalance and achieve a more uniform approximation. Therefore, we conclude that there are only a few large residual points when the network convergences.
The following property gives a mathematical intuition of the imbalanced phenomenon:

\begin{property}[Imbalanced residuals]
Let $r_1, r_2, ..., r_n$ be non-negative residuals. If  $\sum_{i=1}^n r_i < \epsilon$, where $\epsilon$ is the total error tolerance, then for any given threshold $0<\tau<\epsilon$,  the number of residual $r_i$,  that is greater than or equal to $\tau$, denoted by $k_{\tau}$, always has $k_{\tau} < \frac{\epsilon}{\tau}$.
\end{property}

\begin{proof}
Consider the sum of all $r_i$ that are greater than or equal to $\tau$, we have:
\begin{equation}
\label{e.proof}
   \sum_{i=1}^n r_i \geq \sum_{r_i \geq \tau} r_i \geq k_{\tau}\tau \geq \frac{\epsilon}{\tau}\tau = \epsilon 
\end{equation}
This contradicts our given condition. Thus, $k_{\tau} < \frac{\epsilon}{\tau}$.

Since $\tau$ is chosen to be less than $\epsilon$ but not too small, $\frac{\epsilon}{\tau}$ is a small number. This completes the proof that there is a small number of $r_i$ that have big values.
\end{proof}

\subsection{Adaptive sampling algorithm}
Since there are only a few large residual points, according to the continuity of the solution, we can add new samples near these points. This can be achieved through local interpolation, which aligns with the idea of the SMOTE algorithm.

We describe the proposed method for data augmentation based on the above discussion. The process involves iterative steps to enhance the training dataset by focusing on instances with significant residuals.

\paragraph{Residual Classification}
We begin by categorizing the residual points into two classes:
a) Negative class: Points with small residuals;
b) Positive class: Points with large residuals.

This classification is based on the observation that regions with large residuals are typically localized, allowing us to focus our computational efforts on these specific areas. 
A direct way is to use a threshold to split the data. However, in practice, it is hard to determine the threshold.
Instead, we introduce a ratio and use the following simple and easy-to-implement method to split the residual points:

Let $\mathbf{x}_1, \mathbf{x}_2, ..., \mathbf{x}_n$ be the data points and $r_1, r_2, ..., r_n$ be the corresponding residuals. We first sort the residuals in descending order and then select the top $\lambda$×100\% ($\lambda \in (0, 0.5)$) of residuals as the positive samples. The remaining data points are selected as the negative samples. See Fig.\ref{f.ratio} for a further explanation.

\begin{figure}[!ht]
    \centering
    \subfigure[Compute the residuals]{\includegraphics[width=0.45\linewidth]{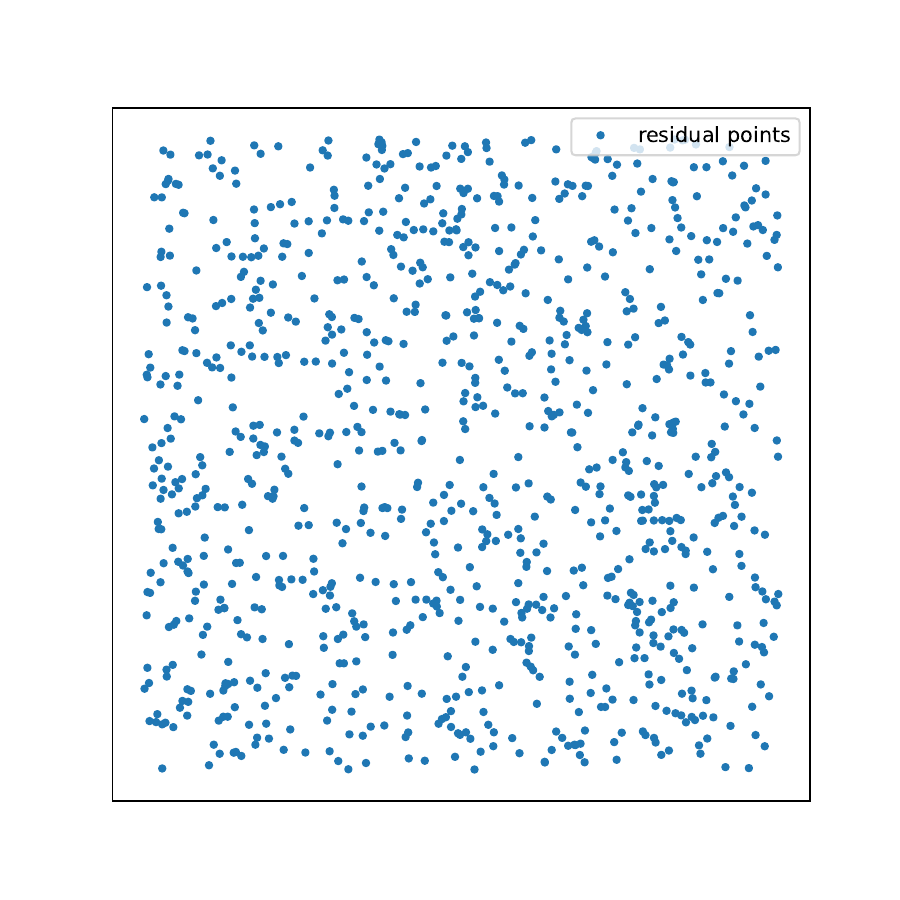}}
    \subfigure[Sort the residuals]{\includegraphics[width=0.45\linewidth]{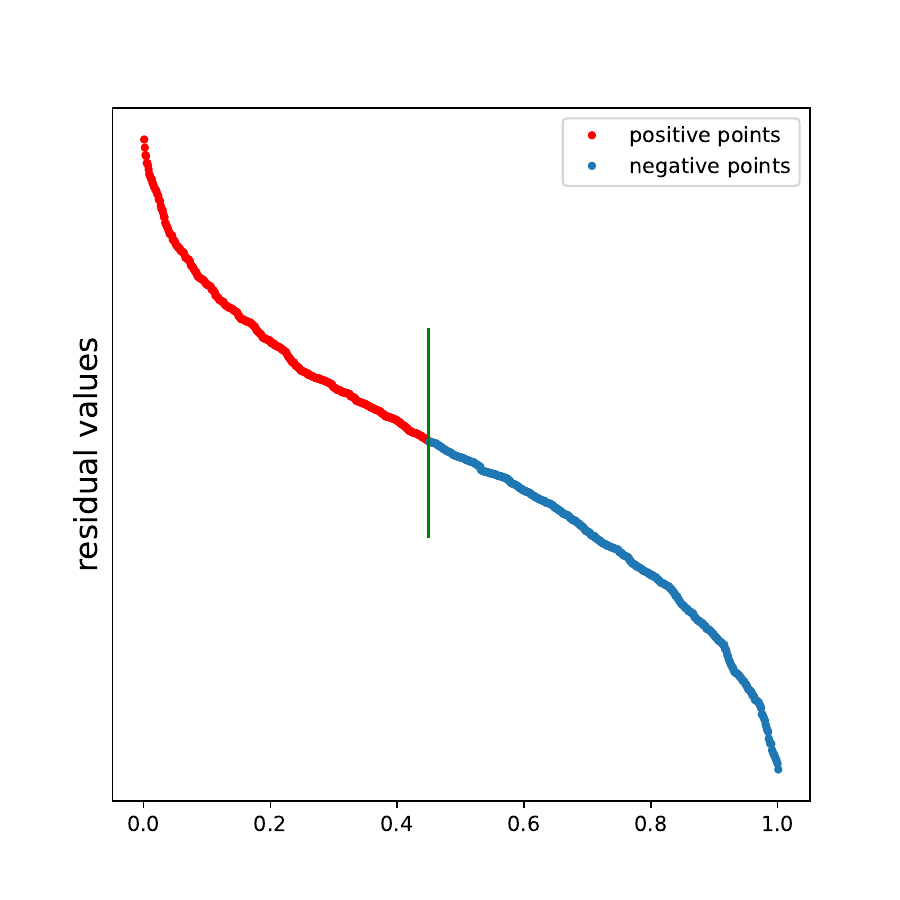}}
    \subfigure[Split the dataset]{\includegraphics[width=0.45\linewidth]{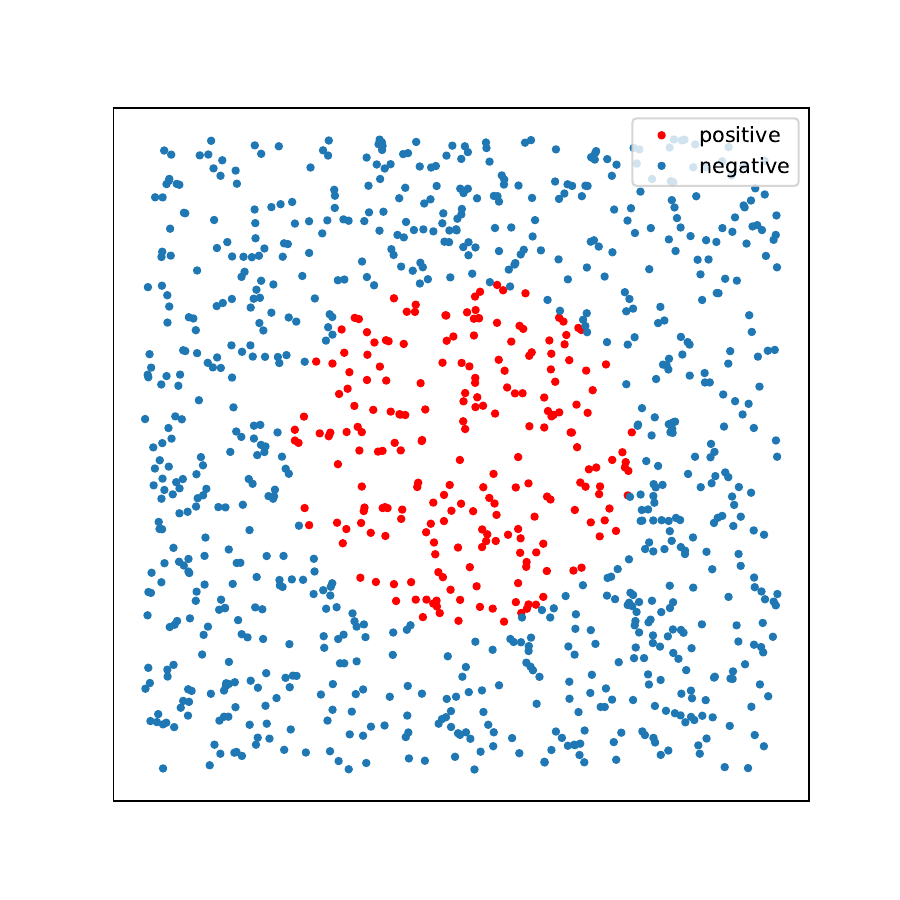}}
    \subfigure[Resample the data]{\includegraphics[width=0.45\linewidth]{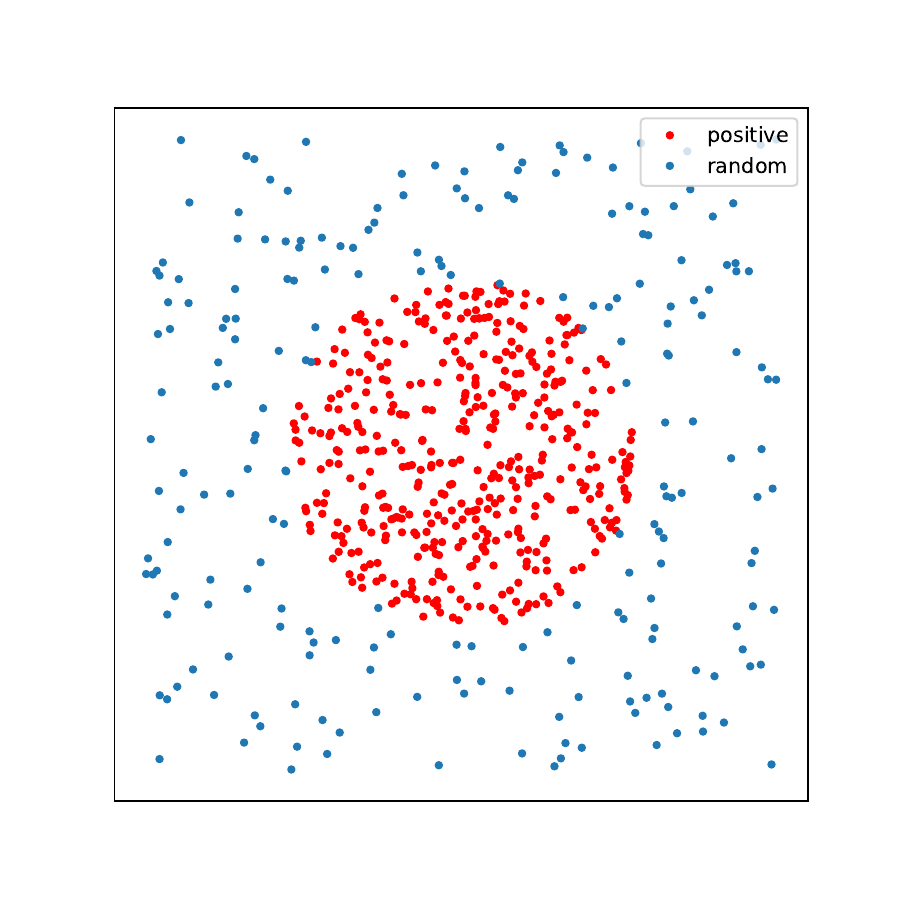}}
    \caption{\textcolor{black}{(a) Compute the absolute residuals. (b) Sort the residuals in descending order and use a ratio, $\lambda$, to split the dataset. The horizontal axis represents the ratio value, while the vertical axis represents the residual values. (c) Select the positive and negative samples based on the ratio. (d) Apply SMOTE to perform resampling and create a new training dataset.}}
    \label{f.ratio}
\end{figure}

Define the label $y_i$ of $\mathbf{x}_i$ as:
\begin{equation}
\label{e.clf}
y_i =
\begin{cases} 
      1, & \text{if }  r_i \text{ is in the top } \lambda \times 100\% \text{ of residuals}, \\
      0, & \text{otherwise}. \\
   \end{cases}
\end{equation}
and we get the training dataset $\mathcal{D} = \{\mathbf{x_i}, y_i\}_{i=1}^{n}$. Since ratio $\lambda$ is less than 0.5, the binary classification dataset $\mathcal{D}$ is imbalanced.

\paragraph{Over-sampling}
We then apply over-sampling techniques to generate new samples in the positive class ($y_i=1$). 
Here, we use the vanilla SMOTE algorithm \cite{chawla2002smote}, for more algorithms, we refer the readers to the survey \cite{kovacs2019empirical}. 

Suppose we have $n_1$ positive samples and $n_2 = n-n_1$ negative samples ($n_1 < n_2$). The SMOTE algorithms generate $n_2-n_1$ new samples and we finally have $n_2$ positive samples and $n_2$ negative samples, respectively.

\paragraph{Dataset Resampling}
We do not use all the $2n_2$ data to continue training. Instead, we only employ $n_2$ positive samples and sample $n_1$ random points in the space.
By doing so, we enable the network to focus more on points with large residuals while preventing the samples from overly concentrating on these points, thereby maintaining a balance in global errors. Moreover, since the number of training data does not change, this step can also help us save the GPU memory, ensuring a balance between accuracy and computational efficiency.

\paragraph{Algorithm}
The RSmote algorithm is implemented in Algorithm \ref{alg:smote}. \textcolor{black}{Here, the termination condition can be either a fixed number of iterations or the achievement of a specified threshold.}

\begin{algorithm}
\caption{Resample with SMOTE}\label{alg:smote}
\begin{algorithmic}[1]
\State Sample the initial training points;
\State Train the PINN for a certain number of iterations;
\While{Not meet the termination condition}
\State Calculate the absolute residuals (Fig.~\ref{f.ratio} (a));
\State Split the data according to the Eq.\eqref{e.clf} (Fig.~\ref{f.ratio} (b-c));
\State Perform SMOTE to oversample the positive class (Fig.~\ref{f.ratio} (d));
\State Generate additional $n_1$ random data points and combine them and $n_2$ positive samples to form a new dataset (Fig.~\ref{f.ratio} (d));
\State Train the PINN for a certain number of iterations;
\EndWhile
\end{algorithmic}
\end{algorithm}

\section{Theoretical analysis}
 \subsection{Error in Solving PDEs}
  
For simplicity, we first consider the Poisson equation for the theoretical part, i.e., we consider the following Poisson equation:
\begin{equation}
\begin{cases}
-\Delta u = f & \text{in } \Omega, \\
u = 0 & \text{on } \partial \Omega.
\end{cases}
\label{PDE}
\end{equation}

Using a neural network structure, the training loss function of PINNs \cite{raissi2019physics} is defined as:
\begin{equation}
\mathcal{R}_S(\Theta) := L_\text{F}(\Theta) + \textcolor{black}\gamma L_\text{B}(\Theta) = \frac{|\Omega|}{M}\sum_{i=1}^M |\Delta \phi(\mathbf{x}_i; \Theta) + f(\mathbf{x})|^2 + \frac{\textcolor{black}\gamma}{M} \sum_{i=1}^{\widehat{M}} |\phi(\mathbf{y}_i; \Theta)|^2.
\label{PINN}
\end{equation}

Here, $L_\text{F}(\Theta)$ represents the residual on the PDE equations, while $L_\text{B}(\Theta)$ represents the boundary/initial condition. The variables $\mathbf{x}_j \in \Omega$ and $\mathbf{y}_j \in \partial \Omega$ are independent and identically distributed (i.i.d.) uniform samples within each domain, and $\textcolor{black}\gamma$ is a constant used to balance the contributions from the domain and boundary terms. The domain $\Omega \subset [0,1]^d$ has a smooth boundary.

In the theoretical part, we consider $\textcolor{black}\gamma=0$, i.e., we study only the interior of the PDEs. This is because we observe that the residual error is primarily in the interior of the domain rather than at the boundary. What we aim to explain in this section is the advantage of sampling methods, therefore, we set $\textcolor{black}\gamma=0$. Consequently, we denote the continuous and discrete loss functions as:
\begin{align}
\Theta_D &:= \arg \inf_{\phi(\mathbf{x}; \Theta) \in \mathcal{F}} \mathcal{R}_D(\Theta) := \arg \inf_{\phi(\mathbf{x}; \Theta) \in \mathcal{F}} \int_{\Omega} |f(\mathbf{x}) + \Delta \phi(\mathbf{x}; \Theta)|^2 \,d \mathbf{x},
\label{thetaD}\\
\Theta_S &:= \arg \inf_{\phi(\mathbf{x}; \Theta) \in \mathcal{F}} \mathcal{R}_S(\Theta) := \arg \inf_{\phi(\mathbf{x}; \Theta) \in \mathcal{F}} \frac{|\Omega|}{M} \sum_{i=1}^M |f(\mathbf{x}_i) + \Delta \phi(\mathbf{x}_i; \Theta)|^2.
\label{thetaS}
\end{align}

The overall inference error is denoted as $\mathbb{E} \mathcal{R}_D(\Theta_S)$, which can be decomposed into two components:
\begin{align}
\mathbb{E} \mathcal{R}_D(\Theta_S) =& \mathcal{R}_D(\Theta_D) + \mathbb{E} \mathcal{R}_S(\Theta_D) - \mathcal{R}_D(\Theta_D) \notag\\&+ \mathbb{E} \mathcal{R}_S(\Theta_S) - \mathbb{E} \mathcal{R}_S(\Theta_D) + \mathbb{E} \mathcal{R}_D(\Theta_S) - \mathbb{E} \mathcal{R}_S(\Theta_S) \notag \\
\le& \underbrace{\mathcal{R}_D(\Theta_D)}_{\text{approximation error}} + \underbrace{\mathbb{E} \mathcal{R}_D(\Theta_S) - \mathbb{E} \mathcal{R}_S(\Theta_S)}_{\text{generalization error}}.
\end{align}
The last inequality arises from $\mathbb{E} \mathcal{R}_S(\Theta_S) \le \mathbb{E} \mathcal{R}_S(\Theta_D)$ due to the definition of $\Theta_S$ and the properties of integration. The expectation $\mathbb{E}$ accounts for the randomness in the sampling process.

\subsection{Neural network spaces and hypothesis spaces}
In this paper, we consider a resampling method where we first sample across the entire domain and then identify regions where the neural network is underperforming, followed by localized sampling. Consequently, we divide the domain $\Omega$ into two parts, $\Omega = \Omega_1 \cup \Omega_2$, where $\Omega_1 \cap \Omega_2 = \emptyset$, $|\Omega_1| = |\Omega_2|$, and both have $C^2$ smooth boundaries. In the first sampling process, the sample points located in $\Omega_1$ are denoted as $M_1$, while those in $\Omega_2$ are denoted as $M_2$, with $M = \frac{1}{2}M_1 = \frac{1}{2}M_2$.

This setup is consistent with our experimental observations, where we found that nearly half of the data points are learned well, while the other half require additional data to achieve satisfactory learning. Our method can be easily extended to more generalized cases where $|\Omega_1| \neq |\Omega_2|$, which would simply require introducing the ratio of the two regions in the analysis.

For the hypothesis space, we consider $\Omega_1$ as the region where learning is challenging and $\Omega_2$ as the region where learning is successful under the initial uniform sampling. The difference in learning difficulty between these two regions can be attributed to the following reasons. First, the regularity of the function in these two parts may differ; smoother regions are generally easier to learn, as seen in cases like the Burgers' equation or the Allen-Cahn equation when $\epsilon = 0$. Second, even if the regularity is the same, a significant difference in norms—such as the Sobolev norm or Hölder norm—between the two regions can also result in different learning difficulties, as in the Allen-Cahn equation for $\epsilon > 0$. Our method applies to both scenarios. Without loss of generality, we will consider the case where the regularity differs between the two regions in this section.

Furthermore, in this paper, the activation function we use is $\tanh(x)$. The second-order derivative we use in solving PDEs is \begin{equation}
\frac{d^2}{d x^2}(\tanh (x))=-2 \tanh (x) \operatorname{sech}^2(x),
\end{equation} which is a highly localized function. Therefore, we assume that the neural network is divided into two parts, $\phi_1$ and $\phi_2$, such that
\[
\phi(\mathbf{x}; \Theta) = \phi_1(\mathbf{x}; \Theta_1) + \phi_2(\mathbf{x}; \Theta_2),
\]
where $\phi_1$ is used to approximate the solution in $\Omega_1$ and is close to zero outside this region. Similarly, $\phi_2$ is used to approximate the solution in $\Omega_2$ and is close to zero outside this region. We then define $\mathcal{F}_1$ as the set of neural networks with support in $\Omega_1$, and $\mathcal{F}_2$ as the set of neural networks with support in $\Omega_2$.

The overall loss function can be divided into two parts:
For the continuous loss functions, they are expressed as
\begin{align}
\Theta_{D,1} &:= \arg \inf_{\phi(\mathbf{x}; \Theta_1) \in \mathcal{F}_1} \mathcal{R}_{D,1}(\Theta_1) := \arg \inf_{\phi(\mathbf{x}; \Theta) \in \mathcal{F}_1} \int_{\Omega_1} |f(\mathbf{x}) + \Delta \phi_1(\mathbf{x}; \Theta_1)|^2 \, d\mathbf{x},
\label{thetaD1}\\
\Theta_{D,2} &:= \arg \inf_{\phi_2(\mathbf{x}; \Theta_2) \in \mathcal{F}_2} \mathcal{R}_{D,2}(\Theta_2) := \arg \inf_{\phi(\mathbf{x}; \Theta) \in \mathcal{F}_2} \int_{\Omega_2} |f(\mathbf{x}) + \Delta \phi_2(\mathbf{x}; \Theta_2)|^2 \, d\mathbf{x}.
\label{thetaD2}
\end{align}
For the discrete loss functions, they are expressed as
\begin{align}
\Theta_{S,1} &:= \arg \inf_{\phi_1(\mathbf{x}; \Theta_1) \in \mathcal{F}_1} \mathcal{R}_{S,1}(\Theta_1) := \arg \inf_{\phi_1(\mathbf{x}; \Theta_1) \in \mathcal{F}_1} \frac{1}{2M_1} \sum_{i=1}^{M_1} |f(\mathbf{x}_i) + \Delta \phi_1(\mathbf{x}_i; \Theta_1)|^2,
\label{thetaS1}\\
\Theta_{S,2} &:= \arg \inf_{\phi_2(\mathbf{x}; \Theta_2) \in \mathcal{F}_2} \mathcal{R}_{S,2}(\Theta_2) := \arg \inf_{\phi_2(\mathbf{x}; \Theta_2) \in \mathcal{F}_2} \frac{1}{2M_2} \sum_{i=M_1+1}^M |f(\mathbf{x}_i) + \Delta \phi_2(\mathbf{x}_i; \Theta_2)|^2.
\label{thetaS2}
\end{align}

Similarly, we can divide the overall error into two parts: the approximation error and the generalization error. Specifically, for $i = 1, 2$, we have
\begin{equation}
    \mathbb{E} \mathcal{R}_{D,i}(\Theta_{S,i}) \le \mathcal{R}_{D,i}(\Theta_{D,i}) + \mathbb{E} \mathcal{R}_{D,i}(\Theta_{S,i}) - \mathbb{E} \mathcal{R}_{S,i}(\Theta_{S,i}).
\end{equation}

For the approximation error, due to the smoothness of each boundary, based on \cite{evans2022partial}, we know that
\[
\mathcal{R}_{D,i}(\Theta_{D,i}) \le \inf_{\phi_i(\mathbf{x}; \Theta_i)\in\mathcal{F}_i} \|\phi_i(\mathbf{x}; \Theta_i) - u_i(\mathbf{x})\|_{W^{2,\infty}(\Omega_i)}^2,
\]
where $u_*$ is the exact solution of Eq.~(\ref{PDE}), $u_1(\mathbf{x}) := u_*(\mathbf{x})$ for $\mathbf{x} \in \Omega_1$, and $u_2(\mathbf{x}) := u_*(\mathbf{x})$ for $\mathbf{x} \in \Omega_2$. The approximation error arises from the construction of the neural networks and depends on the richness and complexity of the neural network spaces. To describe the richness of the neural network spaces $\mathcal{F}_i$, we use a concept called the Vapnik-Chervonenkis (VC) dimension:
\begin{definition}[VC-dimension \cite{abu1989vapnik}]
	Let $\mathcal{H}$ denote a class of functions from $\mathcal{X}$ to $\{0,1\}$. For any non-negative integer $m$, define the growth function of $\mathcal{H}$ as \[\Pi_H(m):=\max_{x_1,x_2,\ldots,x_m\in \mathcal{X}}\left|\{\left(h(x_1),h(x_2),\ldots,h(x_m)\right): h\in \mathcal{H} \}\right|.\] The VC-dimension of $H$, denoted by $\text{VCdim}(\mathcal{H})$, is the largest $m$ such that $\Pi_H(m)=2^m$. For a class $\mathcal{G}$ of real-valued functions, define $\text{VCdim}(\mathcal{G}):=\text{VCdim}(\sgn(\mathcal{G}))$, where $\sgn(\mathcal{G}):=\{\sgn(f):f\in\mathcal{G}\}$ and $\sgn(x)=1[x>0]$.
\end{definition}
The following proposition establishes a link between the approximation ability and the VC dimension. Without loss of generality, we consider the domain to be $[0,1]^d$. For other domains, the difference will only introduce a constant coefficient.
\begin{proposition}\label{link}
 Given any $s, d \in \mathbb{N}^{+}$, there exists a (small) positive constant $C_{s, d}$ determined by $s$ and $d$ such that the following holds: For any $\varepsilon>0$ and a function set $\mathcal{H}$ with all elements defined on $[0,1]^d$, if $\operatorname{VCDim}(\mathcal{H}) \geq 1$ and
\begin{equation}\label{epsilon}
\inf _{\phi \in \mathcal{H}}\|\phi-f\|_{W^{2,\infty}\left([0,1]^d\right)} \leq \varepsilon \quad \text { for any } \|f\|_{ W^{s,\infty}\left([0,1]^d\right)}\le 1
\end{equation}
then $\operatorname{VCDim}(\mathcal{H}_2) \geq C_{s, d} \varepsilon^{-\frac{d} { s-2}} ,$ where \[\mathcal{H}_2:=\left\{p:=\frac{\partial^2 h}{\partial x_i\partial x_j}\mid h\in\mathcal{H}, 1\le i,j\le d\right\}.\]
\end{proposition}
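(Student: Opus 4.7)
The plan is to establish the VC-dimension lower bound by a bit-encoding/shattering argument: I would construct a parametric family of target functions sitting inside the $W^{s,\infty}$ unit ball whose second derivatives pointwise encode arbitrary binary patterns on a grid, and then use the hypothesis \eqref{epsilon} to transfer that shattering capability from the target family to $\mathcal{H}_2$.

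First I would fix a smooth bump $\psi\in C_c^\infty(\mathbb{R}^d)$ supported in the unit ball with controlled $W^{s,\infty}$ norm and with a strictly positive second partial at the origin, say $\partial^2_{x_1}\psi(\mathbf{0})>0$. Set the scale $h := c_0\,\varepsilon^{1/(s-2)}$, where $c_0>0$ depends only on $s,d,\psi$ and is pinned down at the end. Pack $[0,1]^d$ with $N := \lfloor(1/(2h))^d\rfloor = \Theta_{s,d}(\varepsilon^{-d/(s-2)})$ centers $\mathbf{x}_1,\ldots,\mathbf{x}_N$ whose $h$-balls are pairwise disjoint, and for each pattern $\mathbf{b}=(b_1,\ldots,b_N)\in\{0,1\}^N$ define the target
\[
f_\mathbf{b}(\mathbf{x}) \;:=\; C\,h^{s}\sum_{j=1}^N (2b_j-1)\,\psi\!\left(\tfrac{\mathbf{x}-\mathbf{x}_j}{h}\right),
\]
where $C=C(s,d,\psi)$ is chosen so that $\|f_\mathbf{b}\|_{W^{s,\infty}([0,1]^d)}\le 1$ uniformly in $\mathbf{b}$. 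The factor $h^s$ is precisely what absorbs the blow-up $\|\partial^k(\psi(\cdot/h))\|_\infty = h^{-k}\|\partial^k\psi\|_\infty$ for every derivative order $k\le s$, and the disjoint supports prevent interference between bumps when taking pointwise derivatives, so the bound on the $W^{s,\infty}$ norm reduces to a single-bump estimate.

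Next I would apply the hypothesis \eqref{epsilon} to each $f_\mathbf{b}$, obtaining $\phi_\mathbf{b}\in\mathcal{H}$ with $\|\phi_\mathbf{b}-f_\mathbf{b}\|_{W^{2,\infty}}\le\varepsilon$. A direct computation at a grid point gives
\[
\partial^2_{x_1}f_\mathbf{b}(\mathbf{x}_j) \;=\; C\,h^{s-2}\,(2b_j-1)\,\partial^2_{x_1}\psi(\mathbf{0}),
\]
whose magnitude is $\Theta(h^{s-2})=\Theta(c_0^{s-2}\varepsilon)$. I would then pick $c_0$ large enough (depending on $s,d,\psi$) so that this magnitude strictly exceeds $2\varepsilon$. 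The $W^{2,\infty}$-approximation bound then forces $\partial^2_{x_1}\phi_\mathbf{b}(\mathbf{x}_j)$ to inherit the sign of $2b_j-1$, so the subclass $\{\partial^2_{x_1}\phi:\phi\in\mathcal{H}\}\subseteq\mathcal{H}_2$ realizes every one of the $2^N$ sign patterns on $\{\mathbf{x}_1,\ldots,\mathbf{x}_N\}$. By the definition $\mathrm{VCdim}(\mathcal{G}):=\mathrm{VCdim}(\mathrm{sgn}(\mathcal{G}))$, this shatters these $N$ points, yielding $\mathrm{VCdim}(\mathcal{H}_2)\ge N\ge C_{s,d}\,\varepsilon^{-d/(s-2)}$.

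The main obstacle I expect is the simultaneous calibration of the scale $h$: it must be small enough to make $N=\Theta(h^{-d})$ grow at the claimed rate, yet large enough that the ``signal'' $h^{s-2}$ carried by the second derivative of a single bump strictly dominates the ``noise'' $\varepsilon$ coming from the $W^{2,\infty}$ approximation error. This tension is exactly what forces the choice $h\asymp \varepsilon^{1/(s-2)}$ and is also where the implicit regularity requirement $s>2$ enters; for $s\le 2$ the construction cannot separate patterns. Verifying that $f_\mathbf{b}$ stays in the $W^{s,\infty}$ unit ball, uniformly in $\mathbf{b}$, is a routine derivative-by-derivative check thanks to the disjoint supports, and the sign bookkeeping is clean because we encoded bits via $(2b_j-1)$ rather than via thresholding.
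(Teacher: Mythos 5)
Your construction is essentially the paper's proof: both build a family of disjointly supported bumps scaled by $h^s$ (with $h=1/M$) so as to stay in the $W^{s,\infty}$ unit ball, calibrate $h\asymp\varepsilon^{1/(s-2)}$ so that a single bump's second derivative of size $\Theta(h^{s-2})$ dominates the $W^{2,\infty}$ approximation error $\varepsilon$, and then transfer the $2^N$ sign patterns to $\{\partial^2_{x_1}\phi:\phi\in\mathcal H\}\subseteq\mathcal H_2$. The only differences are cosmetic (packing balls vs.\ a uniform $M^d$ grid of sub-cubes, and where the normalizing constant is absorbed), so the argument matches the paper's route.
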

\begin{proof}
    The proof can be divided into two parts. First, we construct a set of functions denoted as
\[
\left\{\lambda_\beta, \beta \in \mathcal{B} \mid \lambda_\beta = \frac{\partial^2 g}{\partial x_1^2},~\|g\|_{W^{s,\infty}\left([0,1]^d\right)} \le 1\right\},
\]
which can shatter $\mathcal{O}\left(\varepsilon^{-\frac{d}{s-2}}\right)$ points, where $\mathcal{B}$ will be defined later. Next, we will demonstrate that we can also find functions in $\mathcal{H}_2$ that can shatter $\mathcal{O}\left(\varepsilon^{-\frac{d}{s-2}}\right)$ points. Based on the definition of VC-dimension, this will complete the proof.

For the first step, fix $i=1,\ldots,d$, and there exists $\widetilde{g} \in C^{\infty}\left(0,1\right)^d$ such that $\frac{\partial^2{\widetilde{g}(\mathbf{0})}}{\partial x_1^2}=1$ and $\widetilde{g}(\boldsymbol{x})=0$ for $\|\boldsymbol{x}\|_2 \geq 1 / 3$. And we can find a constant $\bar{C}_{s,d}>0$ such that $g:=\widetilde{g} / \bar{C}_{s,d}$ with $\|g\|_{W^{s,\infty}\left([0,1]^d\right)} \le 1$.

For any $\varepsilon$, denote $M=\mathcal{O}\left(\varepsilon^{-\frac{1}{s-2}}\right)$ as an integer determined later. Divide $[0,1]^d$ into $M^d$ non-overlapping sub-cubes $\left\{Q_{\boldsymbol{\theta}}\right\}_\theta$ as follows:
				$$
				Q_{\boldsymbol{\theta}}:=\left\{\boldsymbol{x}=\left[x_1, x_2, \cdots, x_d\right]^T \in[0,1]^d\mid x_i \in\left[\frac{\theta_i-1}{M}, \frac{\theta_i}{M}\right], i=1,2, \cdots, d\right\},
				$$
				for any index vector $\boldsymbol{\theta}=\left[\theta_1, \theta_2, \cdots, \theta_d\right]^T \in\{1,2, \cdots, M\}^d$. Denote the center of $Q_{\boldsymbol{\theta}}$ by $\boldsymbol{x}_{\boldsymbol{\theta}}$ for all $\boldsymbol{\theta} \in\{1,2, \cdots, M\}^d$. Define
				$$
				\mathcal{B}:=\left\{\beta: \beta \text { is a map from }\{1,2, \cdots, M\}^d \text { to }\{-1,1\}\right\} .
				$$
				For each $\beta \in \mathcal{B}$, we define, for any $\boldsymbol{x} \in \mathbb{R}^d$, $$
				h_\beta(\boldsymbol{x}):=\sum_{\boldsymbol{\theta} \in\{1,2, \cdots, M\}^d} M^{-s} \beta(\boldsymbol{\theta}) g_{\boldsymbol{\theta}}(\boldsymbol{x}), \quad \text { where } g_{\boldsymbol{\theta}}(\boldsymbol{x})=g\left(M \cdot\left(\boldsymbol{x}-\boldsymbol{x}_{\boldsymbol{\theta}}\right)\right) \text {. }
				$$ Due to $|{\rm supp}~\widetilde{g}(\boldsymbol{x})|\le \frac{2}{3}$ and $|D^{\boldsymbol{\alpha}}	h_\beta(\boldsymbol{x})|\le M^{-s+|\boldsymbol{\alpha}|}\|g\|_{W^{n,\infty}}\le 1$, we obtain that\[|D^{\boldsymbol{\alpha}}h_\beta(\boldsymbol{x})|\le 1\] for any $|\boldsymbol{\alpha}|\le n$
				Therefore, $\|h_{\beta}\|_{W^{s,\infty}([0,1]^d)}\le 1$. And it is easy to check $\{ \frac{\partial^2{{h_\beta}}}{\partial x_1^2}=\lambda_\beta\mid \beta\in\mathcal{B}\}$ can scatters $M^d$ points since $\frac{\partial^2{\widetilde{g}(\mathbf{0})}}{\partial x_1^2}=1$ and $\widetilde{g}(\boldsymbol{x})=0$ for $\|\boldsymbol{x}\|_2 \geq 1 / 3$.

    For the second step, due to (\ref{epsilon}), we have that there exists $p_\beta \in \mathcal{H}_2$ such that 
\begin{equation}
    \|p_\beta - \lambda_\beta\|_{L^\infty([0,1]^d)} < \frac{3}{2}\varepsilon.
\end{equation}
For simplicity in notation, we set 
\begin{equation}
    |p_\beta(\boldsymbol{x}_\beta) - \lambda_\beta(\boldsymbol{x}_\beta)| < \frac{3}{2}\varepsilon,
\end{equation}
otherwise, we simply remove a zero-measure set. Therefore, we have 
\begin{align}
|\lambda_\beta(\boldsymbol{x}_\theta)| \ge \frac{M^{-s+2}}{\bar{C}_{s,d}} \ge 2\varepsilon \ge |p_\beta(\boldsymbol{x}_\theta) - \lambda_\beta(\boldsymbol{x}_\theta)|,
\end{align}
where the penultimate inequality is obtained by setting $M = \lfloor(2\bar{C}_{s,d}\varepsilon)^{-\frac{1}{s-2}}\rfloor$. Therefore, the set $\left\{ p_\beta \mid \beta \in \mathcal{B} \right\}\in\mathcal{H}_2$ can shatter $M^d$ points since $p_\beta(\boldsymbol{x}_\theta)$ and $\lambda_\beta(\boldsymbol{x}_\theta)$ have the same sign at $M^d$ points. Therefore, we have $\operatorname{VCDim}(\mathcal{H}_2) \geq C_{s, d} \varepsilon^{-\frac{d} { s-2}} ,$ where $C_{s, d} =(2\bar{C}_{s,d})^{-\frac{d}{s-2}}-1$.

\end{proof}
The above proposition shows that the VC-dimension can represent the approximation capability of neural networks in the $W^{2,\infty}$-norm. However, it is important to note that the corresponding result for the $W^{2,p}$-norm remains an open question. In \cite{siegel2022optimal}, the authors prove the case for the $L^{p}$-norm in the context of translation-invariant classes of functions. For more general cases, this question is still unresolved.

This proposition provides the optimal approximation rate for neural networks, and achieving this rate has been demonstrated in various works, including those on smooth functions \cite{lu2021deep}, Sobolev spaces \cite{yang2023nearly,yang2023nearlys}, and Korobov spaces \cite{yang2024near}. In this paper, we focus on the sampling process, and thus, we assume that our network can achieve this optimal approximation rate.

\begin{assumption}\label{assump}
    Set $u_*$ is the exact solution of Eq.~(\ref{PDE}), $u_1(\mathbf{x}) := u_*(\mathbf{x})$ for $\mathbf{x} \in \Omega_1$ belongs to $W^{s_1, \infty}(\Omega_1)$, and $u_2(\mathbf{x}) := u_*(\mathbf{x})$ for $\mathbf{x} \in \Omega_2$ belongs to $W^{s_2, \infty}(\Omega_2)$, and $2<s_1 < s_2$. For any $\varepsilon>0$, we can find $\mathcal{F}_1$ and $\mathcal{F}_2$, and constant $C_*$ such that
\[
\mathcal{R}_{D,i}(\Theta_{D,i}) \le\varepsilon^2,
\] for $i=1,2$ with $\operatorname{VCDim}(\mathcal{F}_{1,2})=C_*\varepsilon^{-\frac{d}{s_1-2}}$ and $\operatorname{VCDim}(\mathcal{F}_{2,2})=C_*\varepsilon^{-\frac{d}{s_2-2}},$ where \[\mathcal{F}_{i,2}:=\left\{p:=\frac{\partial^2 h}{\partial x_i\partial x_j}\mid h\in\mathcal{F}_i, 1\le i,j\le d\right\}.\]
\end{assumption}
Based on the above proof in Proposition \ref{link}, we understand that if the norm of the target function $\|f\|_{W^{s,\infty}\left([0,1]^d\right)}$ is large, it will cause $\bar{C}_{s,d}$ to be small, which in turn makes $C_{s,d}$ large. Therefore, even when the regularity is the same, if the norms are significantly different, this will also affect the lower bound of the VC dimension. Our subsequent analysis remains valid for this scenario as well.

%If we want to study the case where the regularity is the same, but there is a significant difference in norms—such as the Sobolev norm or Hölder norm—between the two regions, this difference will lead to varying learning difficulties. In this scenario, the constant $C_*$ should be adjusted to reflect these different norms, and both $C_*$ and the analysis will depend on the specific norms in question. The subsequent analysis will still hold under these adjustments.
\subsection{Sampling}
In this section, we will explain why the training difficulties differ between $\Omega_1$ and $\Omega_2$ and why local sampling is effective in addressing these challenges.

For the generalization error, the pseudo-dimension \cite{pollard1990empirical} or uniform covering numbers (which can be used to bound the generalization error) are often employed, as discussed in \cite{yang2023nearly,yang2023nearlys,schmidt2020nonparametric,jiao2021error,anthony1999neural,yang2024deeper}. In most cases, the pseudo-dimension differs from the covering number. However, in \cite{anthony1999neural} and \cite{bartlett2019nearly}, it has been shown that they are in the same order with respect to the width and depth of neural networks, particularly in deep neural networks. Based on these facts and the results in \cite{yang2023nearlys}, the following proposition is presented:

\begin{proposition}\label{yanggen}
    For fully connected deep neural network sets $\mathcal{F}_1$ and $\mathcal{F}_2$, for any $M_i > 0$, there exists a constant $c$ such that
    \begin{align}
        \mathbb{E} \mathcal{R}_{D,i}(\Theta_{S,i}) - \mathbb{E} \mathcal{R}_{S,i}(\Theta_{S,i}) \le \frac{ \sqrt{c\cdot\operatorname{VCDim}(\mathcal{F}_{i,2})}}{\sqrt{M_i}},\label{generror}
    \end{align}
    where the result is valid up to a logarithmic term with respect to $\operatorname{VCDim}(\mathcal{F}_{i,2})$ and $M_i$.
\end{proposition}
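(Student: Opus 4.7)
The plan is to reduce the generalization gap at the empirical minimizer $\Theta_{S,i}$ to a uniform deviation over the hypothesis class $\mathcal{F}_i$ and then bound that deviation by the classical VC / pseudo-dimension machinery already invoked in the excerpt. As a first step, I would use the standard inequality
$$\mathbb{E} \mathcal{R}_{D,i}(\Theta_{S,i}) - \mathbb{E} \mathcal{R}_{S,i}(\Theta_{S,i}) \le \mathbb{E} \sup_{\phi_i(\cdot;\Theta_i) \in \mathcal{F}_i} \bigl( \mathcal{R}_{D,i}(\Theta_i) - \mathcal{R}_{S,i}(\Theta_i) \bigr),$$
which holds because $\Theta_{S,i}$ is a sample-dependent element of $\mathcal{F}_i$ and is therefore dominated by the supremum. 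This turns the task into a uniform concentration problem over a single function class.

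Next, I would cast the PDE-residual loss $\mathbf{x}\mapsto |f(\mathbf{x})+\Delta\phi_i(\mathbf{x};\Theta_i)|^{2}$ as an element of a loss class $\mathcal{L}_i$. Because $f$ is fixed and $z\mapsto(z+f)^2$ is Lipschitz once we impose an a priori bound $\|\Delta\phi_i\|_{L^\infty}\le B$ on $\mathcal{F}_i$ (which is implicit in the network spaces considered and follows from bounded parameters and the tanh activation), the uniform covering numbers of $\mathcal{L}_i$ are controlled, up to a universal multiplicative constant, by those of the second-derivative class $\mathcal{F}_{i,2}$ defined in Proposition~\ref{link}. A standard symmetrization plus Dudley chaining then yields
$$\mathbb{E} \sup_{\phi_i\in\mathcal{F}_i}\bigl(\mathcal{R}_{D,i}(\Theta_i)-\mathcal{R}_{S,i}(\Theta_i)\bigr) \;\lesssim\; \frac{1}{\sqrt{M_i}} \int_{0}^{B'} \sqrt{\log \mathcal{N}(\varepsilon, \mathcal{L}_i, L^\infty)}\, d\varepsilon,$$
where $B'$ depends only on $B$ and $\|f\|_{L^\infty}$.

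The last step is to translate the covering number into VC dimension. I would invoke the equivalence between pseudo-dimension and VC dimension for fully connected deep networks stated by Anthony--Bartlett and sharpened in Bartlett et al.\ 2019, exactly as used in \cite{yang2023nearly,yang2023nearlys}: this gives $\log\mathcal{N}(\varepsilon,\mathcal{L}_i,L^\infty) \lesssim \operatorname{VCDim}(\mathcal{F}_{i,2})\log(1/\varepsilon)$. Substituting into the Dudley integral and integrating produces the stated bound $\sqrt{c\cdot\operatorname{VCDim}(\mathcal{F}_{i,2})/M_i}$, with the logarithmic overhead absorbed into the ``up to a logarithmic term'' clause.

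The main obstacle is the transfer of the complexity estimate from $\mathcal{F}_i$ itself to the second-derivative class $\mathcal{F}_{i,2}$, since the loss depends on $\Delta\phi_i$ rather than on $\phi_i$. For tanh networks this transfer is nontrivial because second derivatives involve products such as $\tanh(x)\operatorname{sech}^{2}(x)$; however, these are themselves smooth, uniformly bounded functions computable by a network of comparable architecture, so the VC / pseudo-dimension bounds of Bartlett et al.\ apply in the same order. Securing this step, together with the a priori $L^\infty$ bound on $\Delta\phi_i$ required to make the Lipschitz reduction in the loss class legitimate, is where I would spend most of the technical effort; the remainder is a textbook symmetrization-and-chaining argument.
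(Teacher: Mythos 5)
Your proposal is correct and follows essentially the same route the paper takes: the paper does not spell out a proof but instead cites Yang et al.\ together with the Anthony--Bartlett and Bartlett et al.\ (2019) equivalence of pseudo-dimension and VC-dimension for deep networks, and your symmetrization-plus-Dudley-chaining argument via covering numbers is exactly the standard machinery underlying those references, yielding the stated $\sqrt{\operatorname{VCDim}(\mathcal{F}_{i,2})/M_i}$ rate up to logarithms. You also correctly flag the transfer of the complexity estimate from $\mathcal{F}_i$ to the second-derivative class $\mathcal{F}_{i,2}$ as the nontrivial technical step, which is precisely why the paper states the bound directly in terms of $\operatorname{VCDim}(\mathcal{F}_{i,2})$ rather than $\operatorname{VCDim}(\mathcal{F}_i)$.
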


In \cite{gyorfi2002distribution} and \cite{yang2024deeper}, smaller generalization rates with respect to the number of sample points are obtained, but these results come at the cost of a worse order with respect to the VC-dimension. These results are not used here for two reasons. First, both types of errors have the same ratio of order between the VC dimension and the number of sample points, which does not affect our subsequent analysis. Second, the error in those theorems is not straightforward. To achieve a better order for the sampling error, the approximation error must be used to correct it, which complicates the right-hand side of (\ref{generror}) and makes the analysis more difficult.

\begin{theorem}
    For fully connected deep neural network sets $\mathcal{F}_1$ and $\mathcal{F}_2$, suppose Assumption \ref{assump} holds. Then, for any $\varepsilon > 0$, if $\mathcal{R}_{D,i}(\Theta_{D,i}) \le \varepsilon^2$, we have:
\begin{align}
    &\mathbb{E} \mathcal{R}_{D,1}(\Theta_{S,1}) - \mathbb{E} \mathcal{R}_{S,1}(\Theta_{S,1}) \le \varepsilon^2 \quad \text{when} \quad M_1 \ge \eta \bar{M}, \notag\\ 
    &\mathbb{E} \mathcal{R}_{D,2}(\Theta_{S,2}) - \mathbb{E} \mathcal{R}_{S,2}(\Theta_{S,2}) \le \varepsilon^2 \quad \text{when} \quad M_2 \ge \bar{M},\label{result}
\end{align}
where $\bar{M} = cC_*\varepsilon^{-\frac{d}{s_2-2}-4}$ and $\eta = \varepsilon^{\frac{d(s_1-s_2)}{(s_1-2)(s_2-2)}} > 1$, with $c$ and $C_*$ defined in Proposition \ref{yanggen} and Assumption \ref{link}.
\end{theorem}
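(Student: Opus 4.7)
The plan is to present this theorem as a direct algebraic consequence of Proposition \ref{yanggen} combined with the VC-dimension estimates supplied by Assumption \ref{assump}. What the statement really records is how the smoothness asymmetry ($s_1 < s_2$) on the two subdomains translates into an asymmetric sample-size requirement, and the factor $\eta$ quantifies this asymmetry precisely. No new probabilistic or analytic ingredient is needed beyond the two building blocks already established; the work is to invert an inequality and keep careful track of exponents.

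First I would apply Proposition \ref{yanggen} to each index $i \in \{1,2\}$ separately, obtaining
\begin{equation}
\mathbb{E}\mathcal{R}_{D,i}(\Theta_{S,i}) - \mathbb{E}\mathcal{R}_{S,i}(\Theta_{S,i}) \;\le\; \frac{\sqrt{c\,\operatorname{VCDim}(\mathcal{F}_{i,2})}}{\sqrt{M_i}}.
\end{equation}
Requiring the right-hand side to be at most $\varepsilon^2$ is equivalent to the threshold condition
\begin{equation}
M_i \;\ge\; \frac{c\,\operatorname{VCDim}(\mathcal{F}_{i,2})}{\varepsilon^{4}}.
\end{equation}
Substituting $\operatorname{VCDim}(\mathcal{F}_{1,2}) = C_*\varepsilon^{-d/(s_1-2)}$ and $\operatorname{VCDim}(\mathcal{F}_{2,2}) = C_*\varepsilon^{-d/(s_2-2)}$ from Assumption \ref{assump} gives, for the smoother region, $M_2 \ge cC_*\varepsilon^{-d/(s_2-2)-4} = \bar{M}$, which is exactly the second bound in \eqref{result}. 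For the less smooth region the same substitution yields $M_1 \ge cC_*\varepsilon^{-d/(s_1-2)-4}$.

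To match this against the claimed threshold $\eta\bar{M}$, I would use the exponent identity
\begin{equation}
-\frac{d}{s_1-2} - 4 \;=\; \Bigl(-\frac{d}{s_2-2} - 4\Bigr) + \frac{d(s_1-s_2)}{(s_1-2)(s_2-2)},
\end{equation}
which follows from $\tfrac{d(s_1-s_2)}{(s_1-2)(s_2-2)} - \tfrac{d}{s_2-2} = \tfrac{d(2-s_2)}{(s_1-2)(s_2-2)} = -\tfrac{d}{s_1-2}$. Exponentiating, $cC_*\varepsilon^{-d/(s_1-2)-4} = \eta\bar{M}$ with $\eta = \varepsilon^{d(s_1-s_2)/((s_1-2)(s_2-2))}$, giving the first bound in \eqref{result}.

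Finally I would verify $\eta > 1$: since $2 < s_1 < s_2$, the numerator $s_1 - s_2$ is negative and the denominator is positive, so the exponent defining $\eta$ is negative, hence for $\varepsilon \in (0,1)$ we obtain $\eta > 1$. There is no real obstacle in the argument; the only place demanding attention is the exponent bookkeeping identifying $\eta\bar{M}$ with $cC_*\varepsilon^{-d/(s_1-2)-4}$. The interesting content is interpretive: uniform sampling would allocate $\bar{M}$ points per half-domain, but lower regularity on $\Omega_1$ inflates the budget by the explicit factor $\eta$, and this is exactly what the local oversampling step in Algorithm \ref{alg:smote} is designed to provide.
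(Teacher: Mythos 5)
Your proof is correct and takes the same approach as the paper, which states only that the theorem ``can be directly obtained from Proposition~\ref{yanggen} and Assumption~\ref{assump}''; you have supplied the straightforward inversion of the generalization bound, the substitution of the VC-dimension estimates, and the exponent bookkeeping that the authors left implicit. The one small remark worth keeping is your observation that $\eta>1$ requires $\varepsilon\in(0,1)$, a condition the theorem statement glosses over by writing ``for any $\varepsilon>0$.''
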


The proof can be directly obtained from Proposition \ref{yanggen} and Assumption \ref{link}. Based on this theorem, we understand that when the regularity differs between regions, the less smooth part requires significantly more sample points to reduce the generalization error, especially when the dimensionality $d$ is large and the required error $\varepsilon$ is small. This explains why, after performing uniform sampling, the smoother part is learned well while the less smooth part is far from being adequately learned.

To address this issue, more sample points are needed in $\Omega_1$, the less smooth region, to increase the number of samples to $\eta \bar{M}$. In this step, sampling should be done locally rather than globally, as $\eta$ is typically large. If we were to sample globally, we would end up sampling an additional $\eta \bar{M}$ points in $\Omega_2$, which is unnecessary since this region has already been learned well. \textcolor{black}{In other words, this theorem shows that without applying the RSmote technique, achieving the same sampling error requires a significantly larger number of sampling points. Alternatively, if the total number of sampling points is fixed, the RSmote technique enables us to achieve a smaller sampling error. }

\textcolor{black}{In the analysis, we choose the residual division ratio $\lambda$ to be $\frac{1}{2}$. Our analysis can be generalized to any $\lambda$, and the results in Eqs.~(\ref{result}) will be $M_1 \leq 2\eta\lambda\bar{M}$ and $M_1 \leq 2(1-\lambda)\bar{M}$. Therefore, when $\lambda$ is smaller, fewer sampling points are required in the low-regularity region. However, $\lambda$ cannot be too small in the RSmote calculation. The reason is that during resampling, we need to sample more points to match the number of points in the small residual part and the large residual parts. For very small values of $\lambda$, this requires sampling many points in a small region, which can be challenging and may even lead to training failure. Furthermore, in many cases, the solutions of PDEs exhibit low-dimensional structures in small localized regions, such as the solutions of the Allen-Cahn equation, Burgers’ equation, and Green’s functions \cite{hao2024multiscale}. However, due to the continuum of neural networks, they struggle to accurately approximate these structures in a region larger than the exact low-regularity regions. To mitigate this issue, a larger region is needed to effectively cover the low-regularity parts, which necessitates selecting a larger $\lambda$. Thus, in the experiments, we choose $\lambda = 0.45$, which is close to $\frac{1}{2}$. More simulation results can be found in Section 5.7.  
}

\section{Experiments}
\label{s:exps}

\subsection{Setup}
\textcolor{black}{We make comparisons with the method Residual-based adaptive distribution (RAD) proposed in \cite{wu2023comprehensive} and use the term RAD-$N$ to denote the RAD method that uses extra $N$ points to estimate the distribution.} 
We test our method on four PDE examples: the first two are low-dimensional and the rest are high-dimensional.
We use two metrics to evaluate the performance, including $L_2$ relative error, which is defined as $\frac{\|\hat{u}-u\|_2}{\|u\|_2}$, and GPU memory (MB) requirements for each method at different data sizes.

We train all neural network models with Adam optimizer and L-BFGS optimizer with a learning rate 0.001.
\textcolor{black}{In each sampling iteration, we first use the Adam optimizer for 1000 steps and then switch to the L-BFGS optimizer for another 1000 steps to train the model. Each experiment is run 5 times with different random seeds, and the mean and standard deviation of the errors are computed. All methods are trained for 100 sampling iterations, which also serves as the termination condition.}
In all examples, the hyperbolic tangent (tanh) is selected as the activation function. The ratio $\lambda$ is set to 0.45.
Table \ref{T.param} summarizes the network architecture used for each example.
All the experiments are conducted on a workstation equipped with an NVIDIA-3090 GPU.

\begin{table}[!ht]
\centering
\caption{Summary of the experimental setup.}
\begin{tabular}{lccc}
\toprule[2pt]
 Problems & Depth & Width & Dimension (d) \\
\midrule[1pt]

 \textcolor{black}{Laplace Equation} & 3 & 20 & 2 \\
 \textcolor{black}{Burgers' Equation} & 3 & 64 & 2 \\
 Allen-Cahn Equation & 3 & 64 & 2 \\
 Elliptic Equation & 3 & 2d & 10, 20, 100 \\
 Reaction-Diffusion Equation & 3 & 2d & 20, 30, 50 \\
 \bottomrule[2pt]
\end{tabular}
\label{T.param}
\end{table}

\subsection{\textcolor{black}{Laplace equation}}

\textcolor{black}{The Laplace equation is defined as:
\begin{equation}
\label{e.laplace}
\begin{gathered}
r\frac{dy}{dr} + r^2\frac{dy^2}{dr^2} + \frac{dy^2}{d\theta^2} = 0, \\
y(1,\theta) = \cos(\theta), \\
y(r, \theta +2\pi) = y(r, \theta),  \\
r \in [0, 1],  \theta \in [0, 2\pi].
\end{gathered} 
\end{equation}
The reference solution is $y=r\cos(\theta).$
}

The experimental results presented in Table \ref{T.laplace} showcase the performance of three different methods, namely RAD-50000, RAD-100000, and RSmote, under varying amounts of training data (2000, 5000, and 10000 samples). \textcolor{black}{Fig.~\ref{f.laplace} shows the loss curves of different methods with three different amounts of training data, and Fig.~\ref{f.laplace_field} presents the field and absolute difference maps for the best results.
}

\textcolor{black}{
In general, increased sampling tends to reduce errors. RAD performs best with smaller datasets, achieving the lowest error with 2000 training points. However, this advantage diminishes as the dataset size grows. For 5000 and 10000 points, RSmote outperforms the other methods in terms of accuracy.
Memory usage remains consistent across different dataset sizes, with RSmote using the least memory, followed by RAD-50000, and RAD-100000 using the most.
The loss curves of RSmote show more fluctuating compared to those of RAD method. This is because the solution $y=r\cos(\theta)$ is smooth, and the error distribution is not concentrated (see Fig.~\ref{f.laplace_field} (d)(e)). As a result, RSmote may misjudge which regions need refinement, leading to fluctuations in errors during the data updating process. While for the fields and absolute differences, the RSmote also achieves lower difference.
As dataset size increases, all methods show improved accuracy. However, RSmote's relative advantage becomes more pronounced, while RAD methods benefit more from smaller datasets.
}

\begin{table}[!ht]
\renewcommand\arraystretch{1.1}
\centering\caption{Performance \textcolor{black}{(Mean $\pm$ Std.)} comparisons for different training data sizes on the Laplace equation using RAD-50000, RAD-100000, and RSmote methods. The table presents the evaluation scores (lower is better) and memory consumption (in MB) for models trained with 2000, 5000, and 10000 data points (\#Sampling). The \textbf{bold} indicates the best result.}
\begin{adjustbox}{width=\textwidth}
\begin{tabular}{l|cc|cc|cc}
\toprule[2pt]
Training data & \multicolumn{2}{c|}{\#Sampling=2000} & \multicolumn{2}{c|}{\#Sampling=5000} & \multicolumn{2}{c}{\#Sampling=10000} \\
Evaluation &   Score   &  Memory  & Score   &  Memory  & Score   &  Memory \\
\midrule[1pt]
  RAD-50000 & \textcolor{black}{6.59e-5 $\pm$ 2.63e-5}   &   1284       &  \textcolor{black}{6.52e-5 $\pm$ 1.51e-5}  &   1292       &  \textcolor{black}{6.44e-5 $\pm$ 8.73e-5}  &   1320       \\
  
  RAD-100000 & \textcolor{black}{\textbf{5.83e-5 $\pm$ 2.84e-5}}  & 1558   & \textcolor{black}{5.30e-5 $\pm$ 2.19e-5}  &   1562  &  \textcolor{black}{4.56e-5 $\pm$ 1.22e-5}  &    1580      \\
  
\midrule[1pt]
  RSmote&  \textcolor{black}{5.98e-5 $\pm$ 3.15e-5}  &    \textbf{1100}      &  \textcolor{black}{\textbf{4.92e-5 $\pm$ 1.67e-5}}  &   \textbf{1112}  & \textcolor{black}{\textbf{4.16e-5 $\pm$ 1.83e-5}} &  \textbf{1140}  \\ 
  
\bottomrule[2pt]
\end{tabular}
\end{adjustbox}
\label{T.laplace}
\end{table}

\begin{figure}[!h]
    \centering
    \subfigure[\#Sampling=2000]{\includegraphics[width=0.45\linewidth]{ 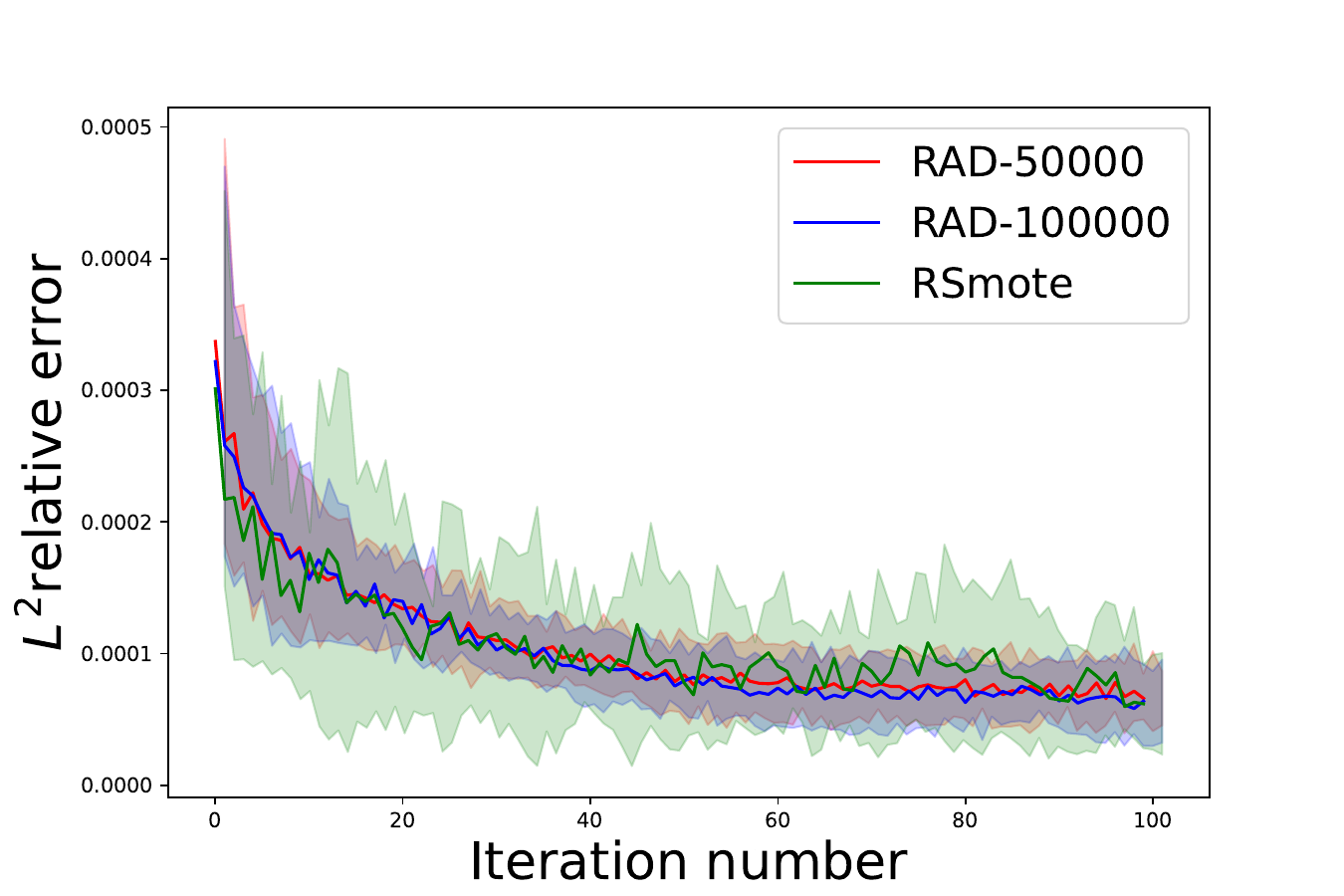}}
    \subfigure[\#Sampling=5000]{\includegraphics[width=0.45\linewidth]{ 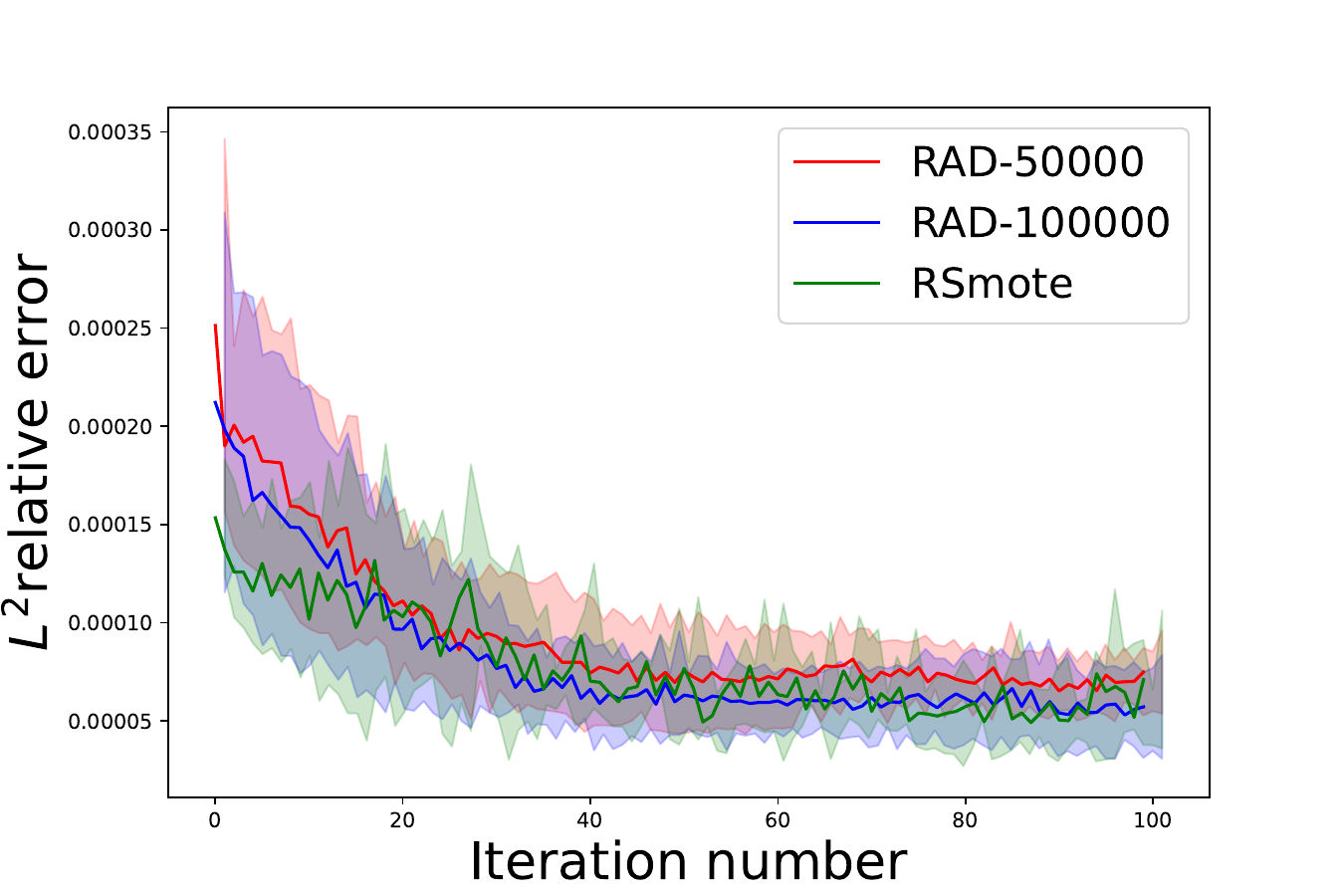}}
    \subfigure[\#Sampling=10000]{\includegraphics[width=0.45\linewidth]{ 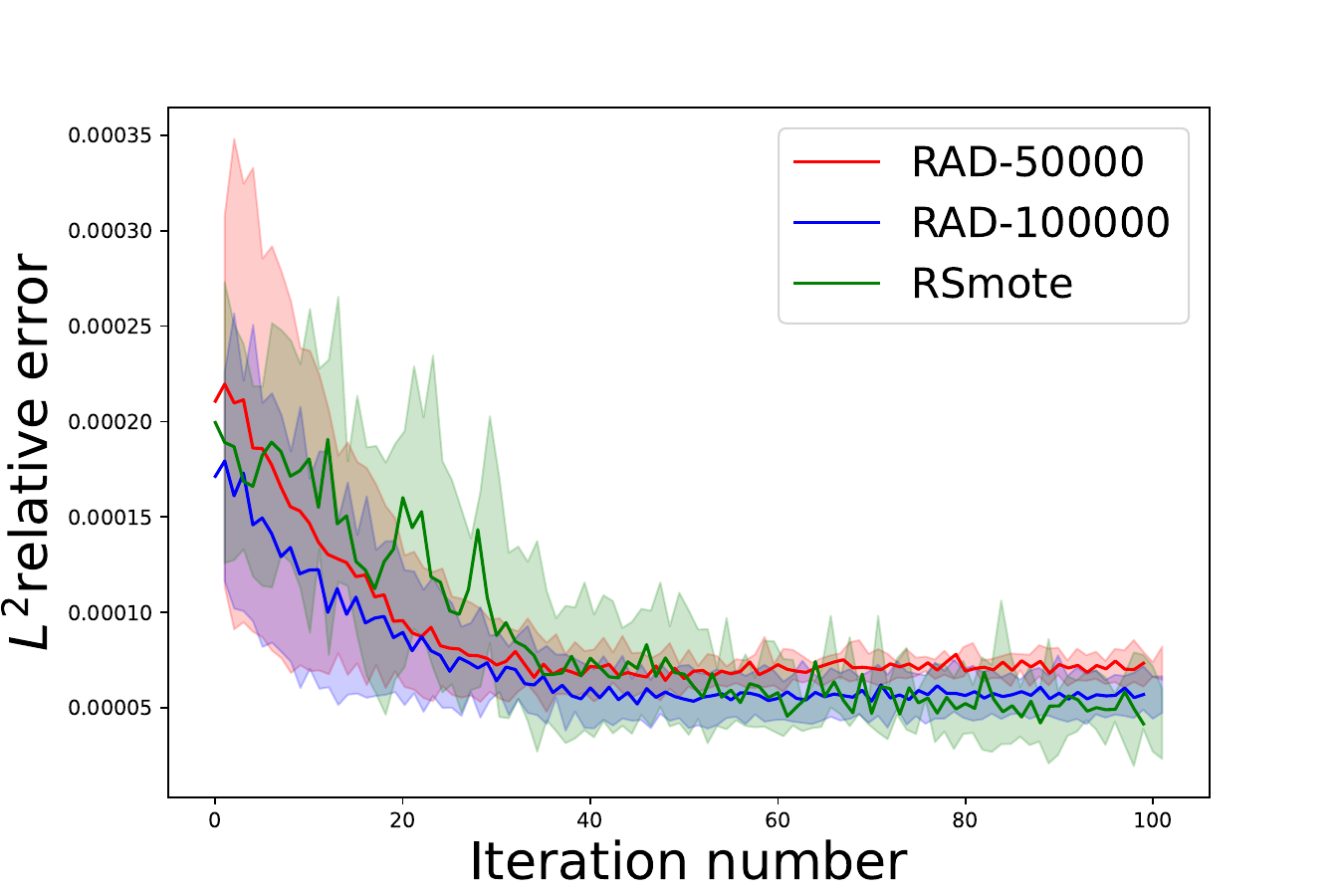}}
    \caption{\textcolor{black}{\textbf{Loss curves for Laplace Equation.} Red line: Mean values of RAD-50000 method;  Blue line: Mean values of RAD-100000 method; Green line: Mean values of RSmote method. The shaded areas represent the corresponding standard deviations.}}
    \label{f.laplace}
\end{figure}

\begin{figure}[!h]
    \centering
    \subfigure[Exact solution]{\includegraphics[width=0.3\linewidth]{ 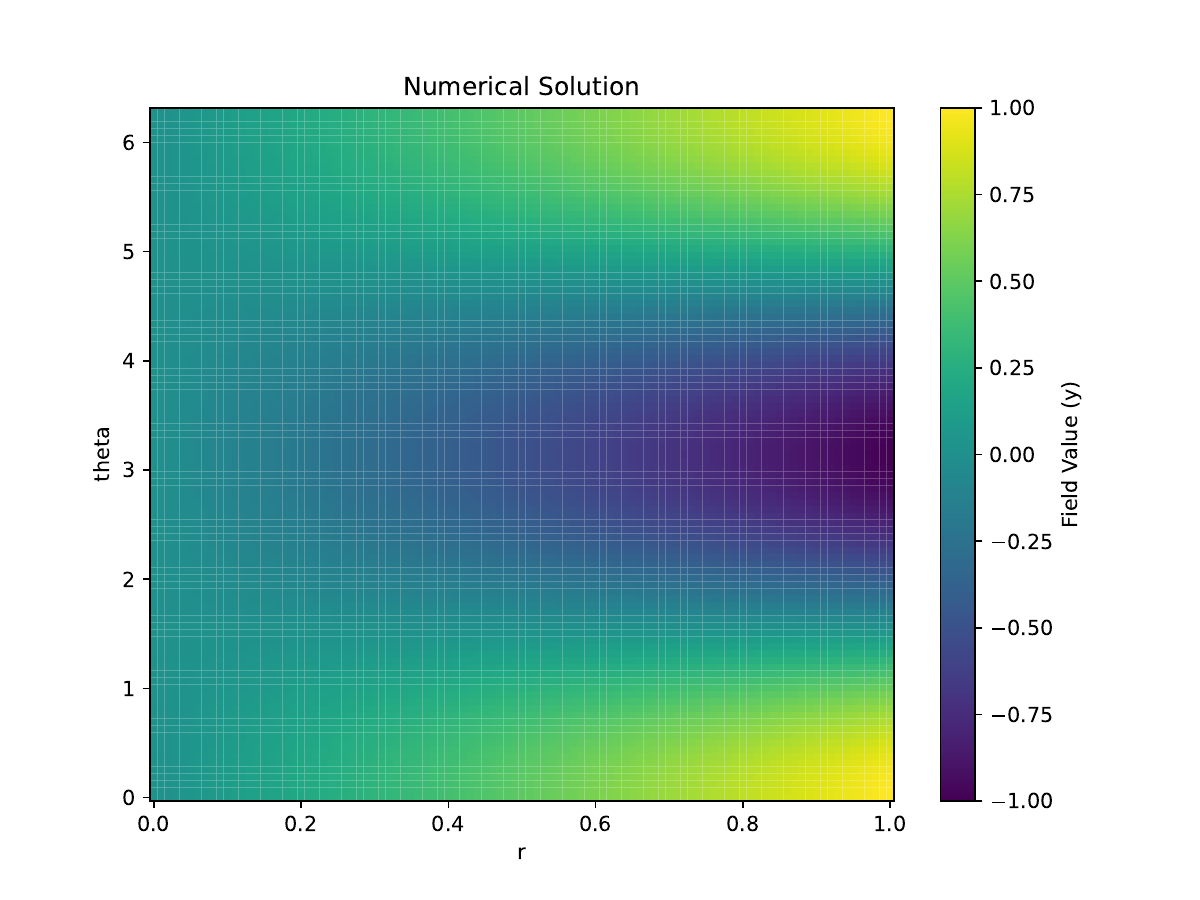}}
    \subfigure[RAD solution]
    {\includegraphics[width=0.3\linewidth]{ 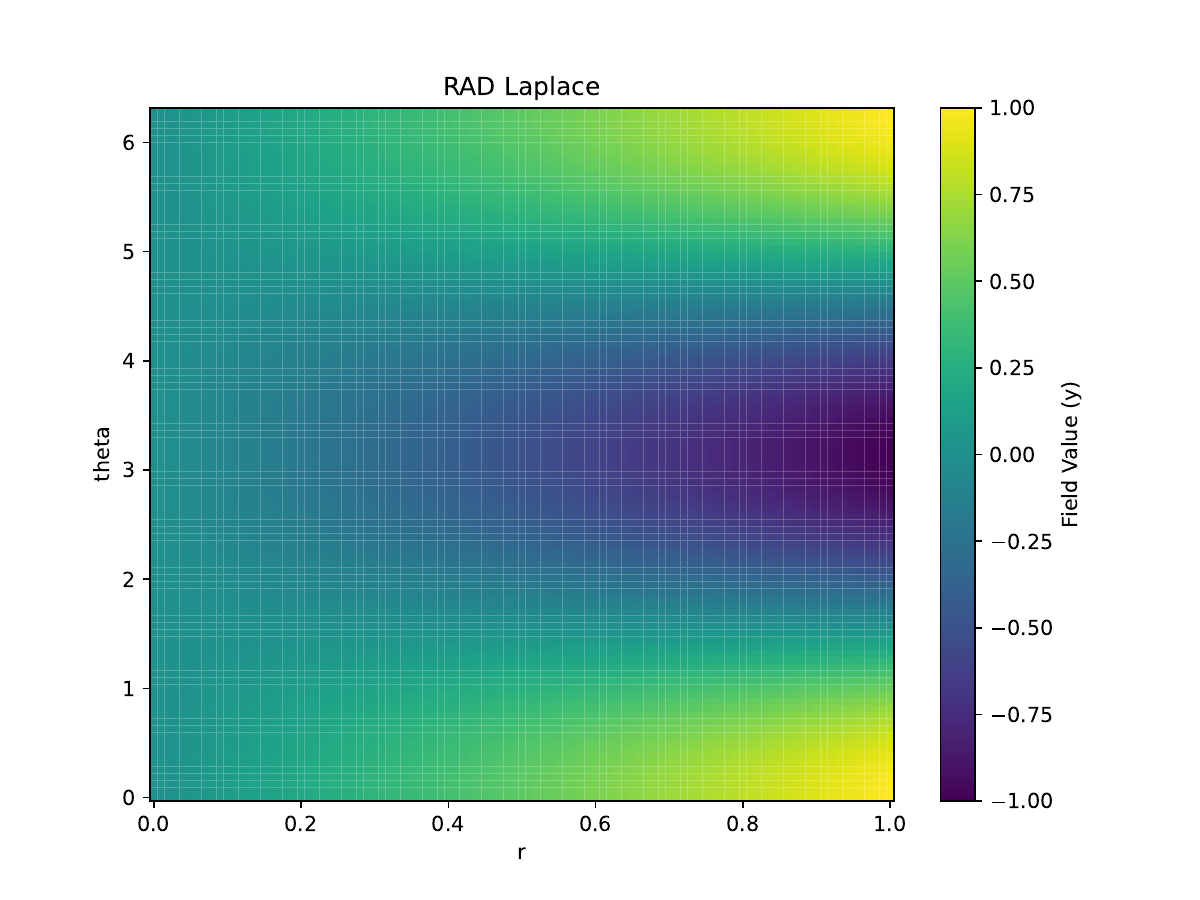}}
    \subfigure[RSmote solution]{\includegraphics[width=0.3\linewidth]{ 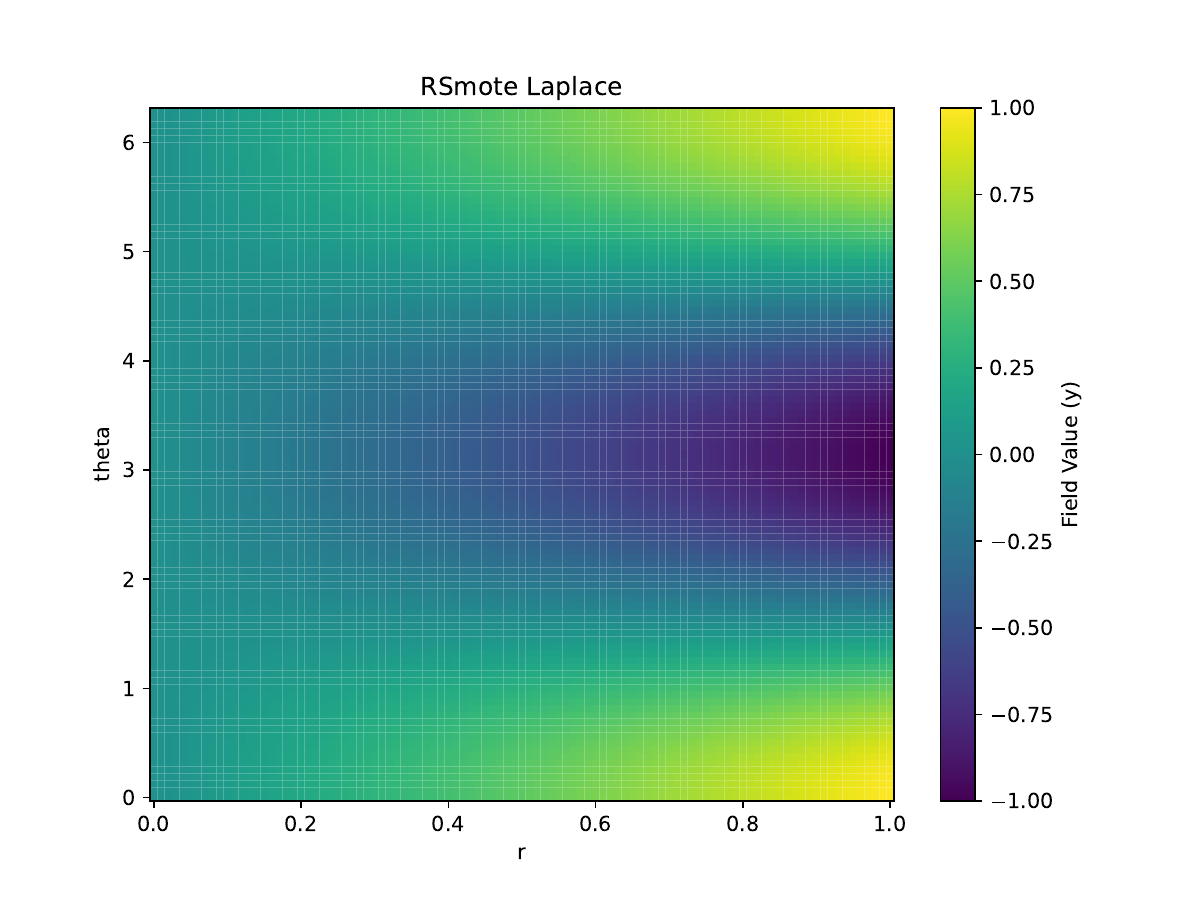}}
    \subfigure[RAD difference]{\includegraphics[width=0.3\linewidth]{ 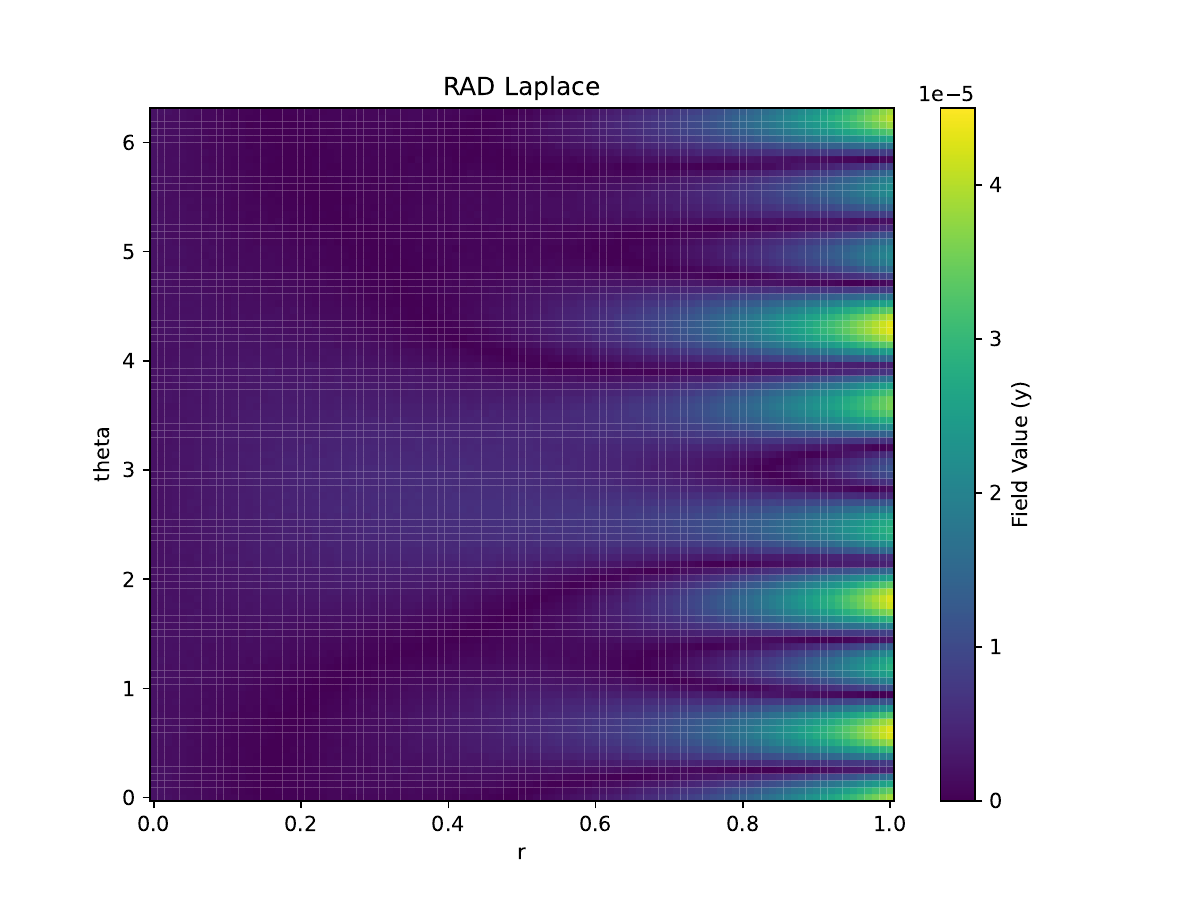}}
    \subfigure[RSmote difference]{\includegraphics[width=0.3\linewidth]{ 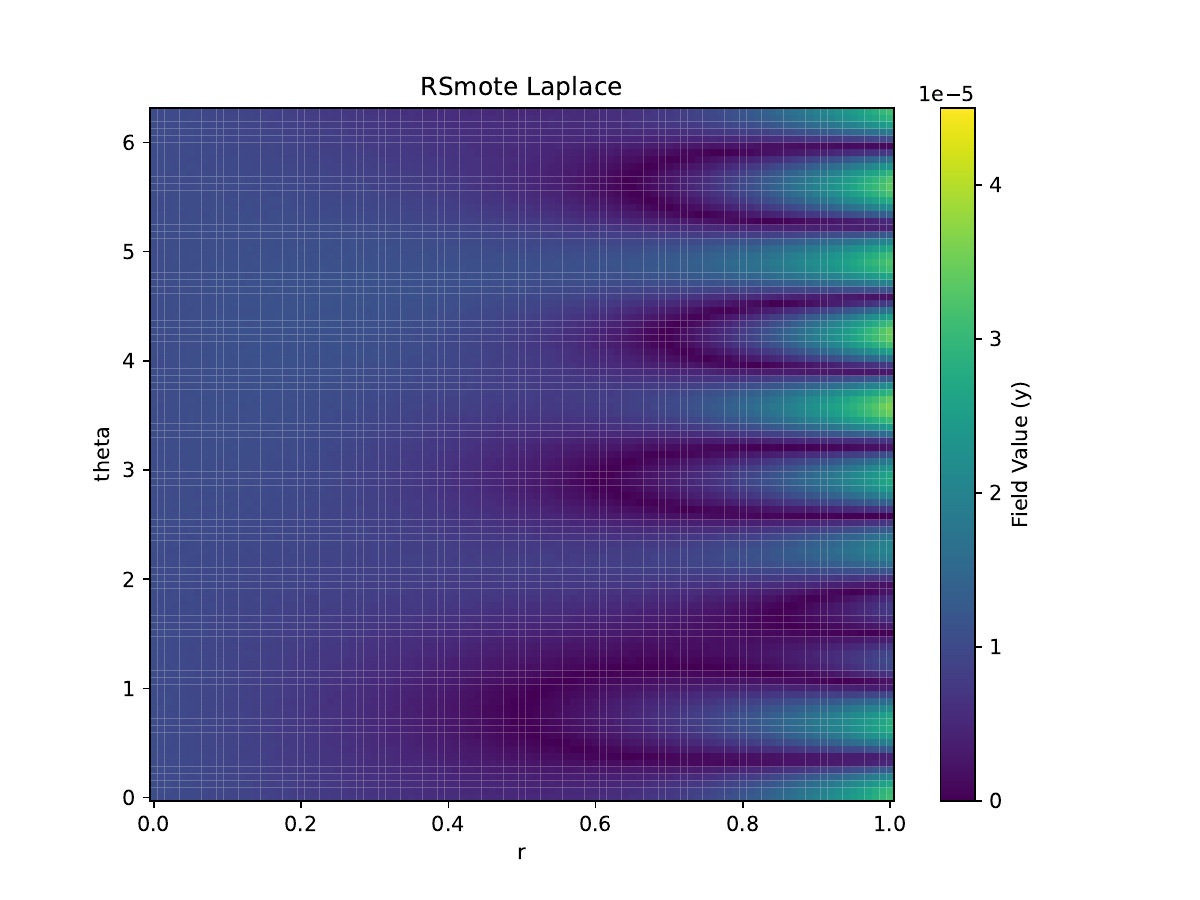}}
    \caption{\textcolor{black}{\textbf{Solution fields for Laplace Equation. (a)-(c): ground truth, RAD solution and RSmote solution; (d)-(e): Absolute differences. }}}
    \label{f.laplace_field}
\end{figure}

\subsection{\textcolor{black}{Burgers' equation}}
\textcolor{black}{
The Burgers’ equation is defined as:
\begin{equation}
\label{e.burges}
\begin{gathered}
u_t + u u_x = \frac{1}{100\pi}u_{xx}, \\
u(x, 0) = -\sin(\pi x), \\
u(-1, t) = u(1, t) = 0,  \\
x\in[-1, 1], t\in[0,1]. 
\end{gathered} 
\end{equation}
}

\textcolor{black}{
The experimental results presented in Table \ref{T.burges} highlight the performance of three methods, namely RAD-50000, RAD-100000, and RSmote, under varying amounts of training data (2000, 5000, and 10000 samples). Fig.~\ref{f.burges} illustrates the loss curves for different methods across the three data sizes, while Fig.~\ref{f.burgers_field} shows the field and absolute difference maps for the best results.}

\textcolor{black}{Overall, increased sampling reduces errors.
RSmote demonstrates superior performance across all datasets, although its improvement is modest for 2000 and 10000 data points. RAD-50000 shows limited benefit from increased data size, while the other two methods exhibit more significant improvements.
Memory usage remains consistent across dataset sizes, with RSmote being the most efficient, followed by RAD-50000, and RAD-100000 consuming the most. In the field maps, RSmote also achieves smaller absolute differences.
The loss curves show similar convergence patterns, with RSmote exhibiting greater fluctuations at 2000 points, RAD displaying more at 5000 points, and both methods experiencing some variability at 10000 points. These fluctuations will be further discussed in \ref{ss.fluctuation}.
}

\begin{table}[!h]
\renewcommand\arraystretch{1.1}
\centering\caption{\textcolor{black}{Performance (Mean $\pm$ Std.) comparisons for different training data sizes on the Burgers' equation using RAD-50000, RAD-100000, and RSmote methods. The table presents the evaluation scores and memory consumption (in MB) for models trained with 2000, 5000, and 10000 data points. The lower the score, the better the performance. The \textbf{bold} indicates the best result.}}
\begin{adjustbox}{width=\textwidth}
\begin{tabular}{l|cc|cc|cc}
\toprule[2pt]
Training data & \multicolumn{2}{c|}{\#Sampling=2000} & \multicolumn{2}{c|}{\#Sampling=5000} & \multicolumn{2}{c}{\#Sampling=10000} \\
Evaluation &   Score   &  Memory  & Score   &  Memory  & Score   &  Memory \\
\midrule[1pt]
  RAD-50000 & \textcolor{black}{8.32e-4 $\pm$ 2.78e-4}   &   \textcolor{black}{1518}       &  \textcolor{black}{5.48e-4 $\pm$ 2.77e-4}  &  \textcolor{black}{1522}        &  \textcolor{black}{4.07e-4 $\pm$ 1.10e-4}  &  \textcolor{black}{1552}       \\
  
  RAD-100000 & \textcolor{black}{7.86e-4 $\pm$ 2.84e-4}  & \textcolor{black}{1882}  & \textcolor{black}{5.01e-4 $\pm$ 5.63e-5}  &   \textcolor{black}{1888}   &  \textcolor{black}{3.90e-4 $\pm$ 1.01e-4}  &    \textcolor{black}{1928}       \\
  
\midrule[1pt]
  RSmote&  \textcolor{black}{\textbf{7.83e-4 $\pm$ 2.48e-4}}  &    \textcolor{black}{\textbf{1154}}      &  \textcolor{black}{\textbf{4.11e-4 $\pm$ 6.31e-5}}  &   \textcolor{black}{\textbf{1160}}   & \textcolor{black}{\textbf{3.87e-4 $\pm$ 7.63e-5}} &  \textcolor{black}{\textbf{1180}}  \\ 
  
\bottomrule[2pt]
\end{tabular}
\end{adjustbox}
\label{T.burges}
\end{table}

\begin{figure}[!h]
    \centering
    \subfigure[\#Sampling=2000]{\includegraphics[width=0.45\linewidth]{ 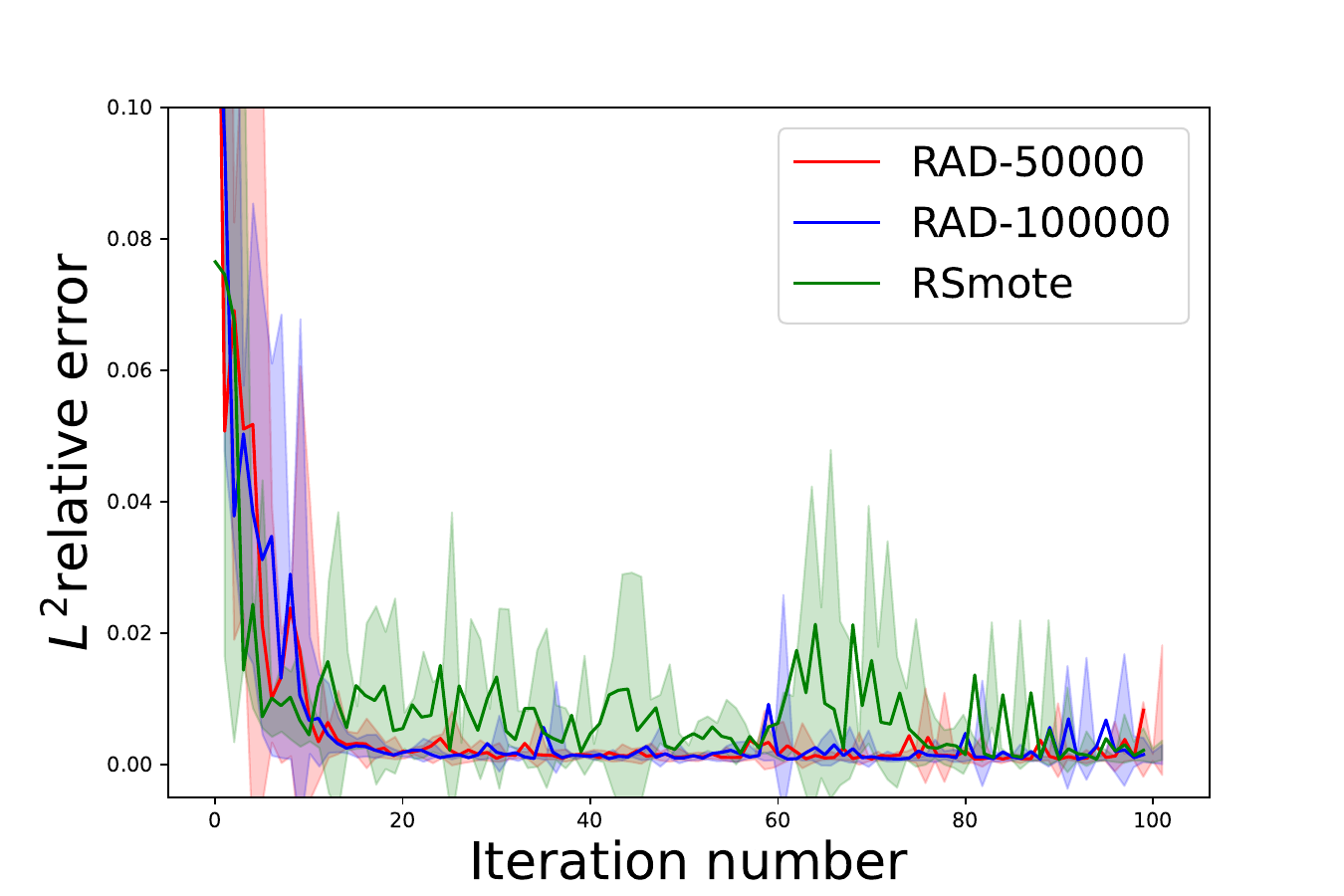}}
    \subfigure[\#Sampling=5000]{\includegraphics[width=0.45\linewidth]{ 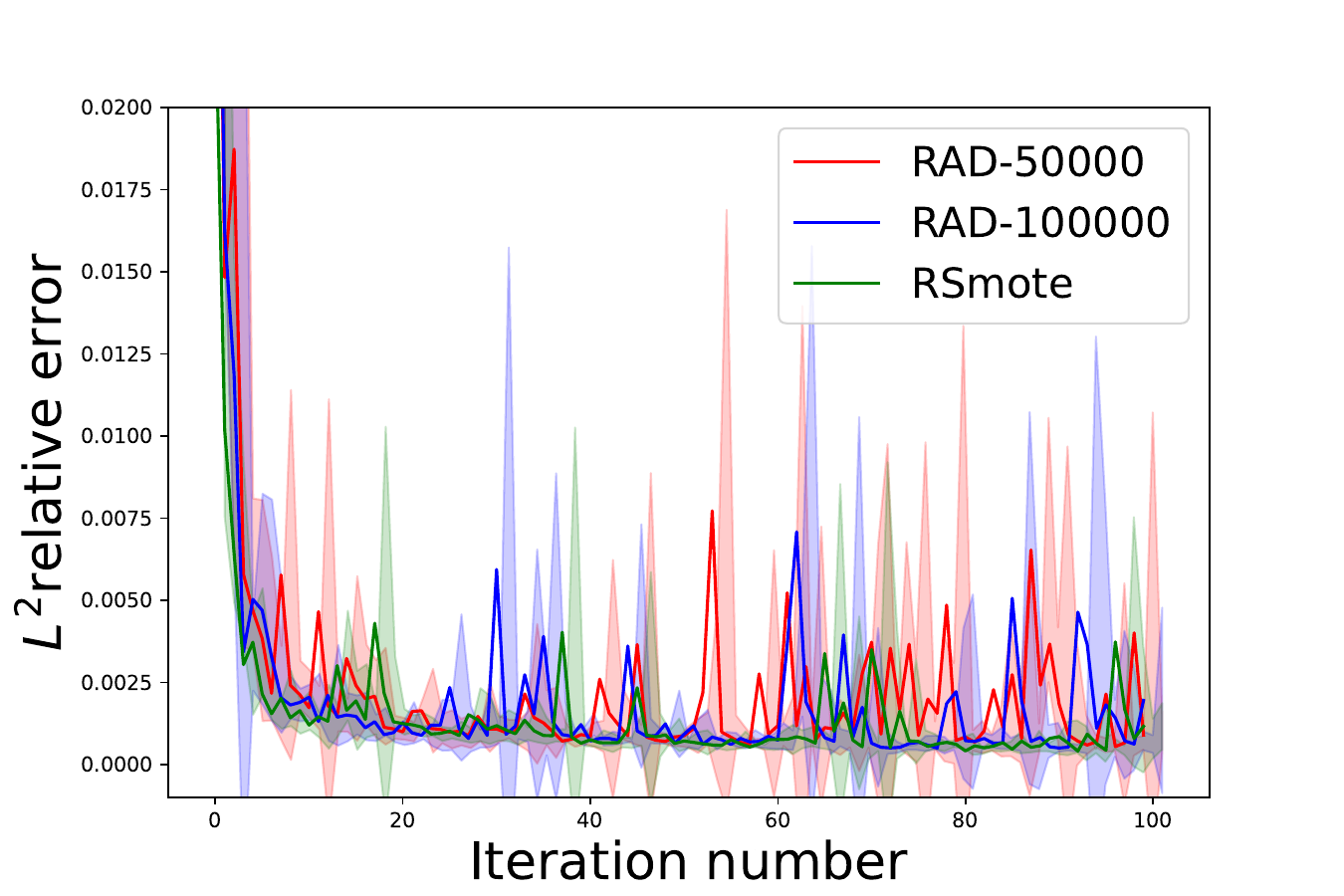}}
    \subfigure[\#Sampling=10000]{\includegraphics[width=0.45\linewidth]{ 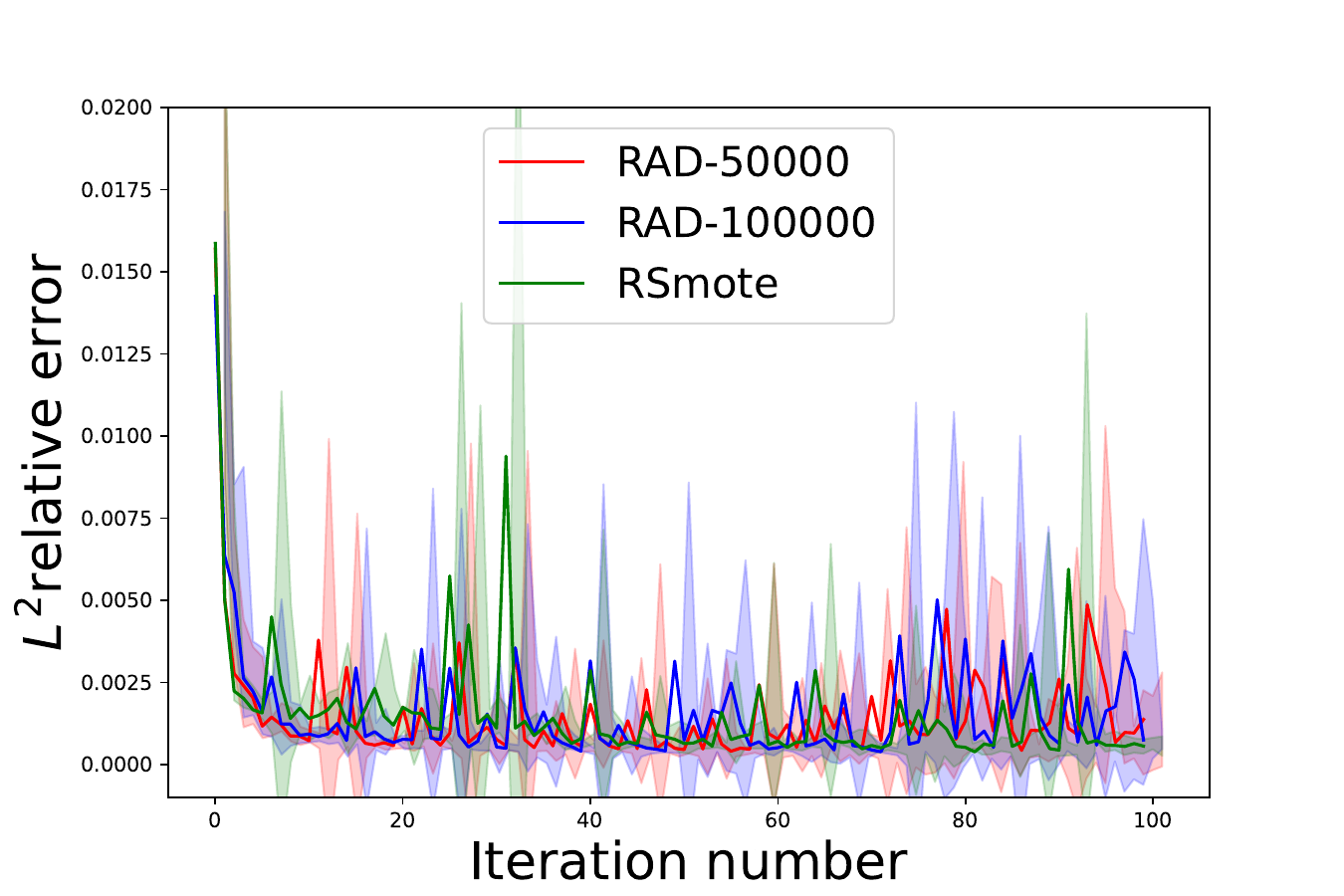}}
    \caption{\textcolor{black}{\textbf{Loss curves for Burgers’ Equation.} Red line: Mean values of RAD-50000 method;  Blue line: Mean values of RAD-100000 method; Green line: Mean values of RSmote method. The shaded areas represent the corresponding standard deviations.}}
    \label{f.burges}
\end{figure}

\begin{figure}[!h]
    \centering
    \subfigure[Exact solution]{\includegraphics[width=0.3\linewidth]{ 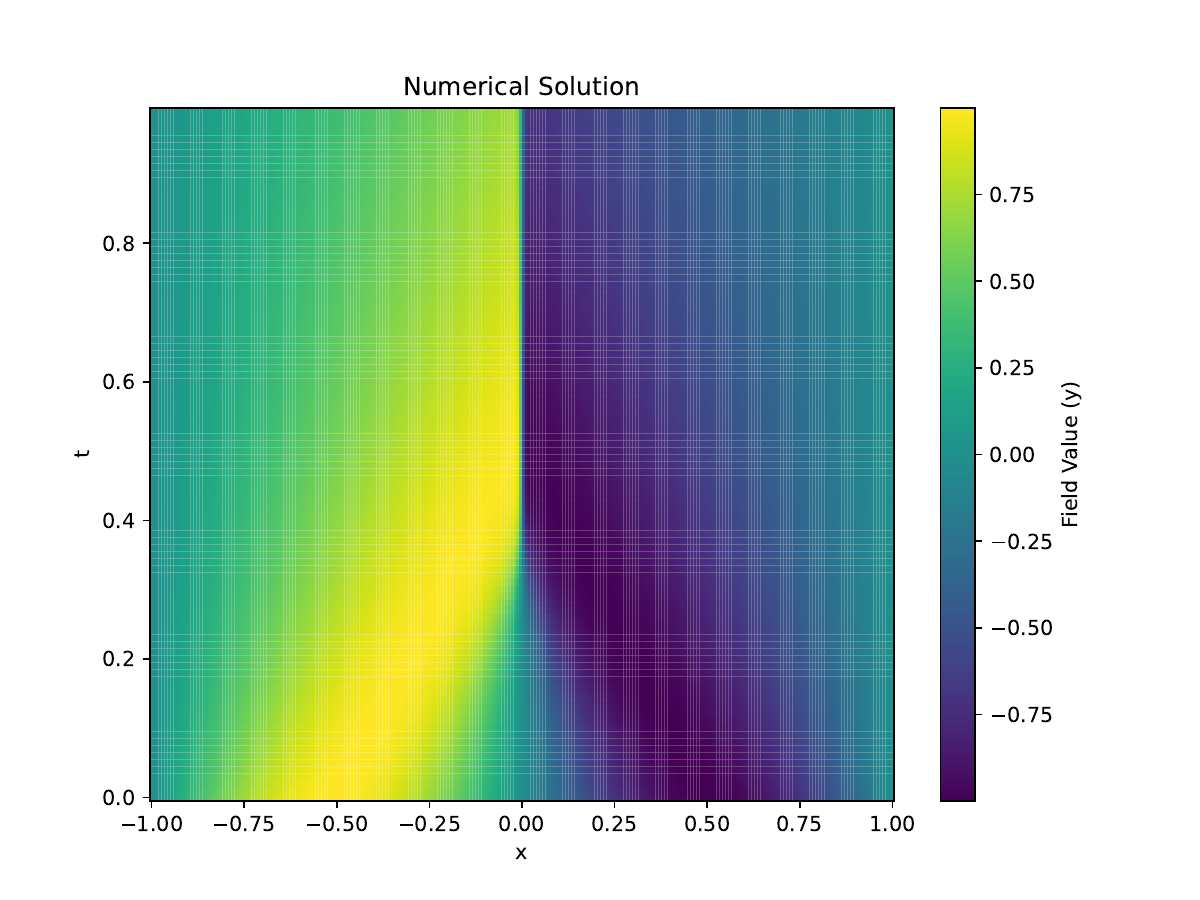}}
    \subfigure[RAD solution]
    {\includegraphics[width=0.3\linewidth]{ 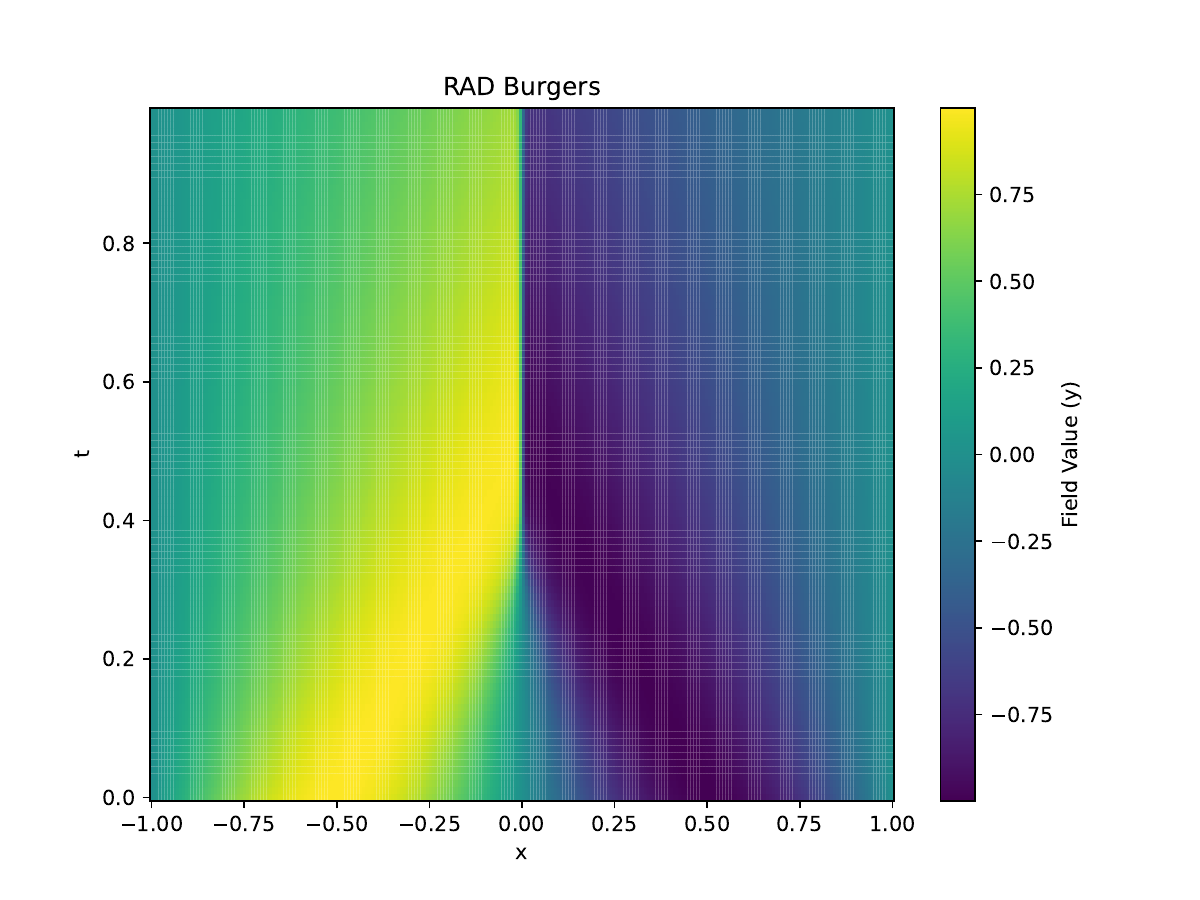}}
    \subfigure[RSmote solution]{\includegraphics[width=0.3\linewidth]{ 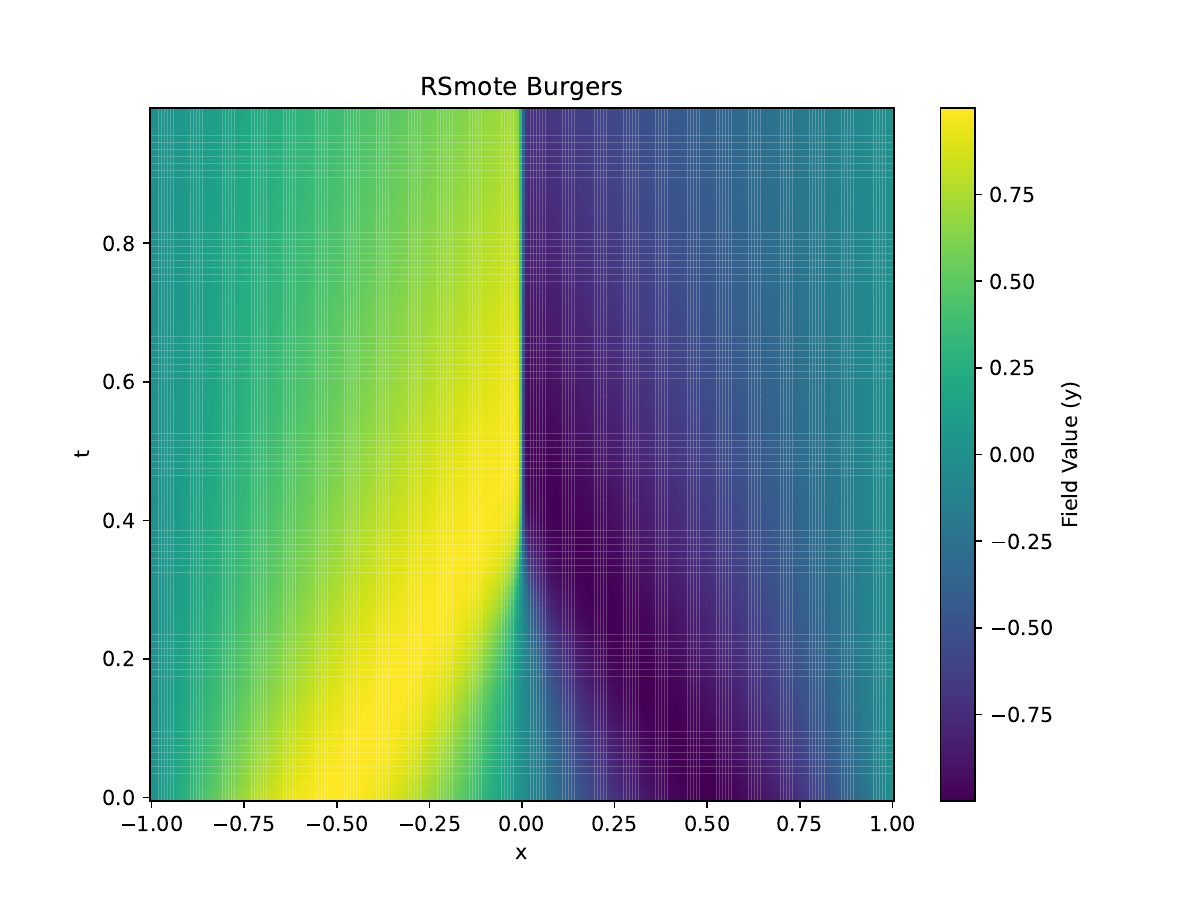}}
    \subfigure[RAD difference]{\includegraphics[width=0.3\linewidth]{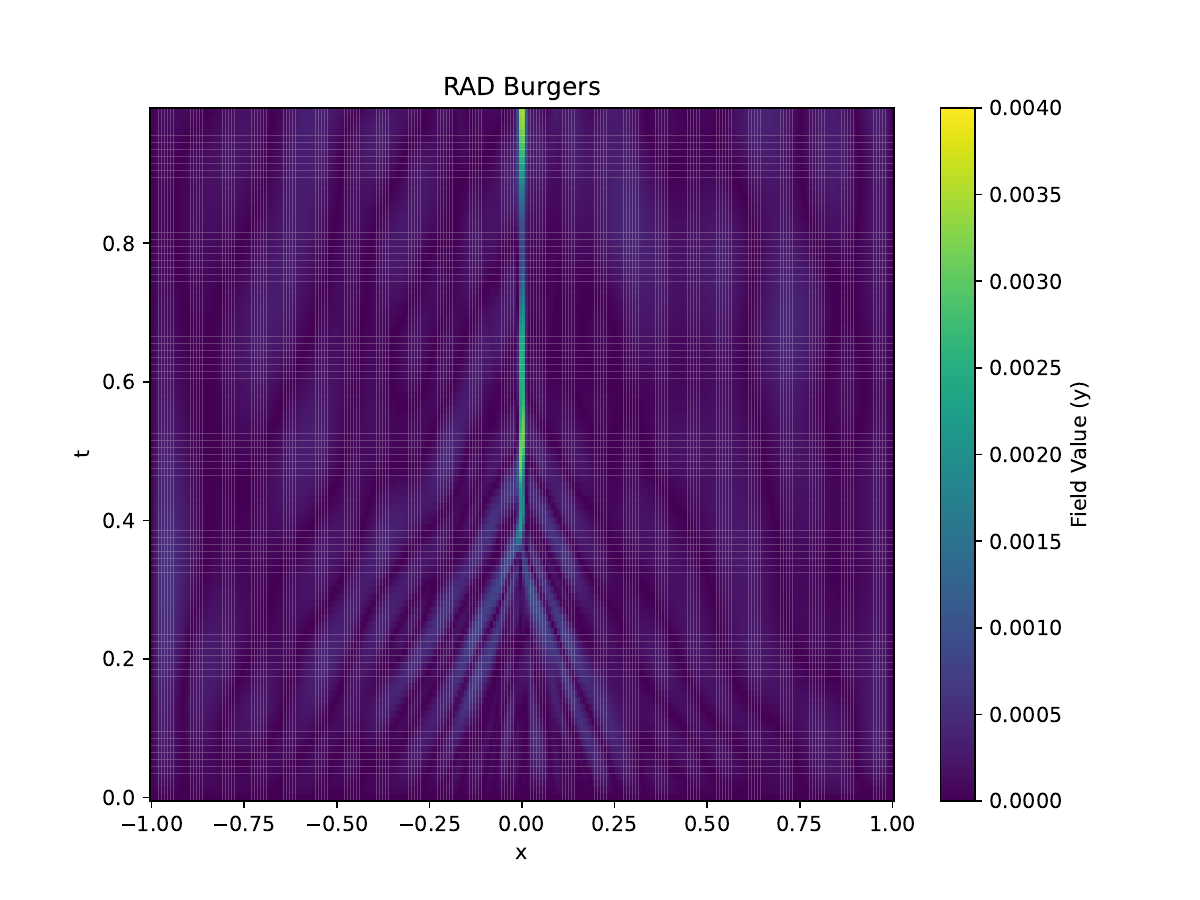}}
    \subfigure[RSmote difference]{\includegraphics[width=0.3\linewidth]{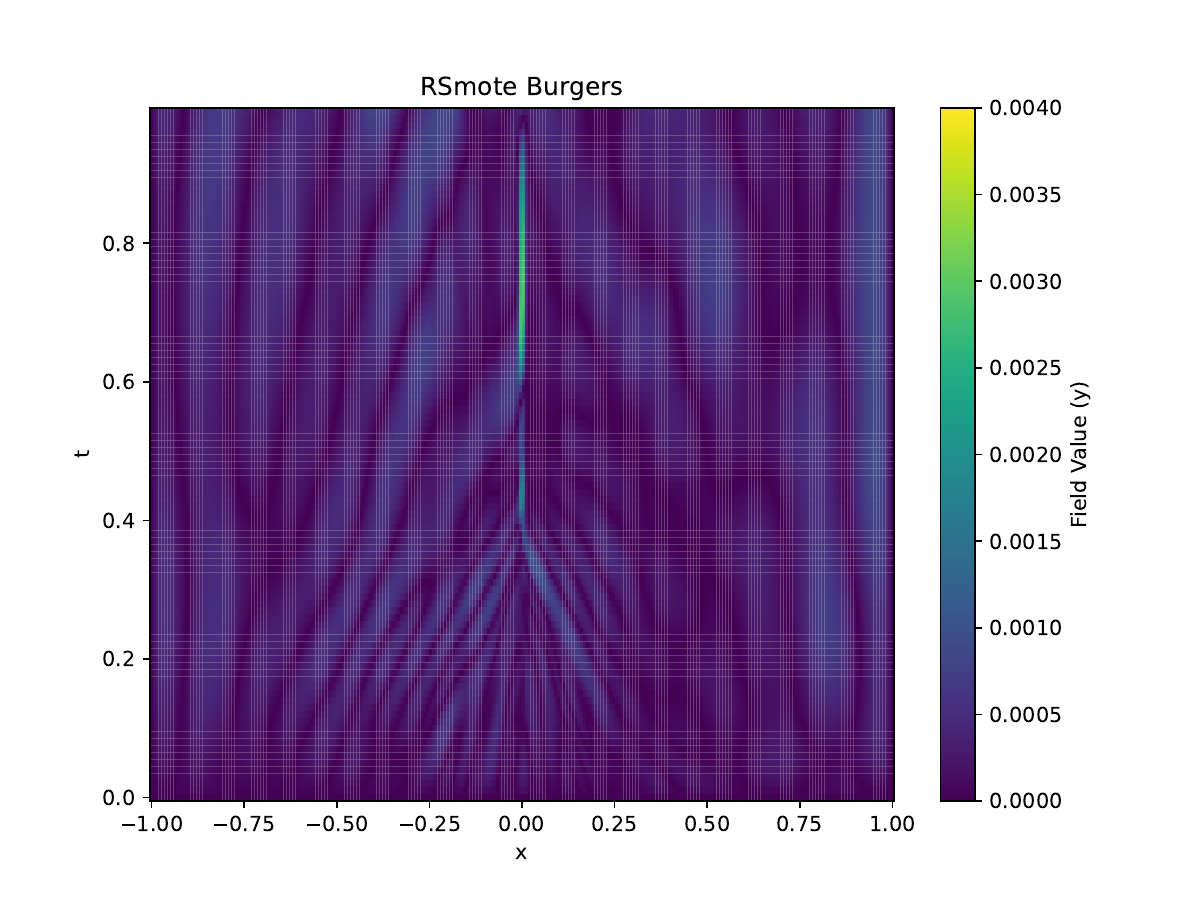}}
    \caption{\textcolor{black}{\textbf{Solution fields for Burgers' Equation. (a)-(c): ground truth, RAD solution and RSmote solution; (d)-(e): Absolute differences. }}}
    \label{f.burgers_field}
\end{figure}

\subsection{Allen-Cahn Equation}

The Allen-Cahn equation is defined as:

\begin{equation}
\label{e.allen}
\begin{gathered}
u_t = 0.001 u_{xx} + 5(u-u^3), \\
u(x, 0) = x^2\cos(\pi x), \\
u(-1, t) = u(1, t) = -1,  \\
x\in[-1, 1], t\in[0,1]. 
\end{gathered} 
\end{equation}

Table \ref{T.allen} presents the results obtained from different methods applied to solve the Allen-Cahn equation, with varying amounts of training data. 
The error is calculated on 20000 test points, \textcolor{black}{Fig.~\ref{f.allen} gives the loss curves and Fig.~\ref{f.allen_field} shows the field maps.}

For the RAD method, we observe that as the training data increases from 2000 to 10000, there is a consistent improvement in the score, indicating better performance in solving the Allen-Cahn equation. Also, the trend in scores aligns with the increase in the number of points used in probability estimation (residual points), showing improved performance. 
However, the memory requirements also exhibit a proportional increase.

In contrast, the RSmote method outperforms the other approaches in terms of both score and memory usage. It achieves significantly lower scores across all data sizes, highlighting its superior accuracy in solving the Allen-Cahn equation. Moreover, it maintains relatively low memory requirements, showcasing efficiency in computational resources.

\begin{table}[!h]
\renewcommand\arraystretch{1.1}
\centering\caption{Performance \textcolor{black}{(Mean $\pm$ Std.)} comparisons for different training data sizes on the Allen-Cahn equation using RAD-50000, RAD-100000, and RSmote methods. The table shows the scores and memory consumption for models trained with 2000, 5000, and 10000 data points. The lower the score, the better the performance. The \textbf{bold} indicates the best result.}
\begin{adjustbox}{width=\textwidth}
\begin{tabular}{l|cc|cc|cc}
\toprule[2pt]
 % & \multicolumn{6}{c}{\# Training data} \\
 % \midrule[1pt]
Training data & \multicolumn{2}{c|}{\#Sampling=2000} & \multicolumn{2}{c|}{\#Sampling=5000} & \multicolumn{2}{c}{\#Sampling=10000} \\
Evaluation &   Score   &  Memory  & Score   &  Memory  & Score   &  Memory \\
\midrule[1pt]
  RAD-50000 &  \textcolor{black}{0.0115 $\pm$ 0.0022}       &   1498       &   \textcolor{black}{0.0070 $\pm$ 0.0017}     &    1504      & \textcolor{black}{0.0061 $\pm$ 0.0019}       &  1518        \\
  
  RAD-100000&  \textcolor{black}{0.0101 $\pm$ 0.0027}      &   1860       &  \textcolor{black}{0.0069 $\pm$ 0.0013}     &  1897        &  \textcolor{black}{0.0051 $\pm$ 0.0023}     &   1920       \\
  
\midrule[1pt]
  RSmote&   \textbf{\textcolor{black}{0.0039 $\pm$ 0.0009}}       &   \textbf{1120}       &  \textbf{\textcolor{black}{0.0037 $\pm$ 0.0005}}         &   \textbf{1132}       &   \textbf{\textcolor{black}{0.0029 $\pm$ 0.0002}}        &   \textbf{1174}     \\ 
\bottomrule[2pt]
\end{tabular}
\end{adjustbox}
\label{T.allen}
\end{table}

\begin{figure}[!h]
    \centering
    \subfigure[\#Sampling=2000]{\includegraphics[width=0.45\linewidth]{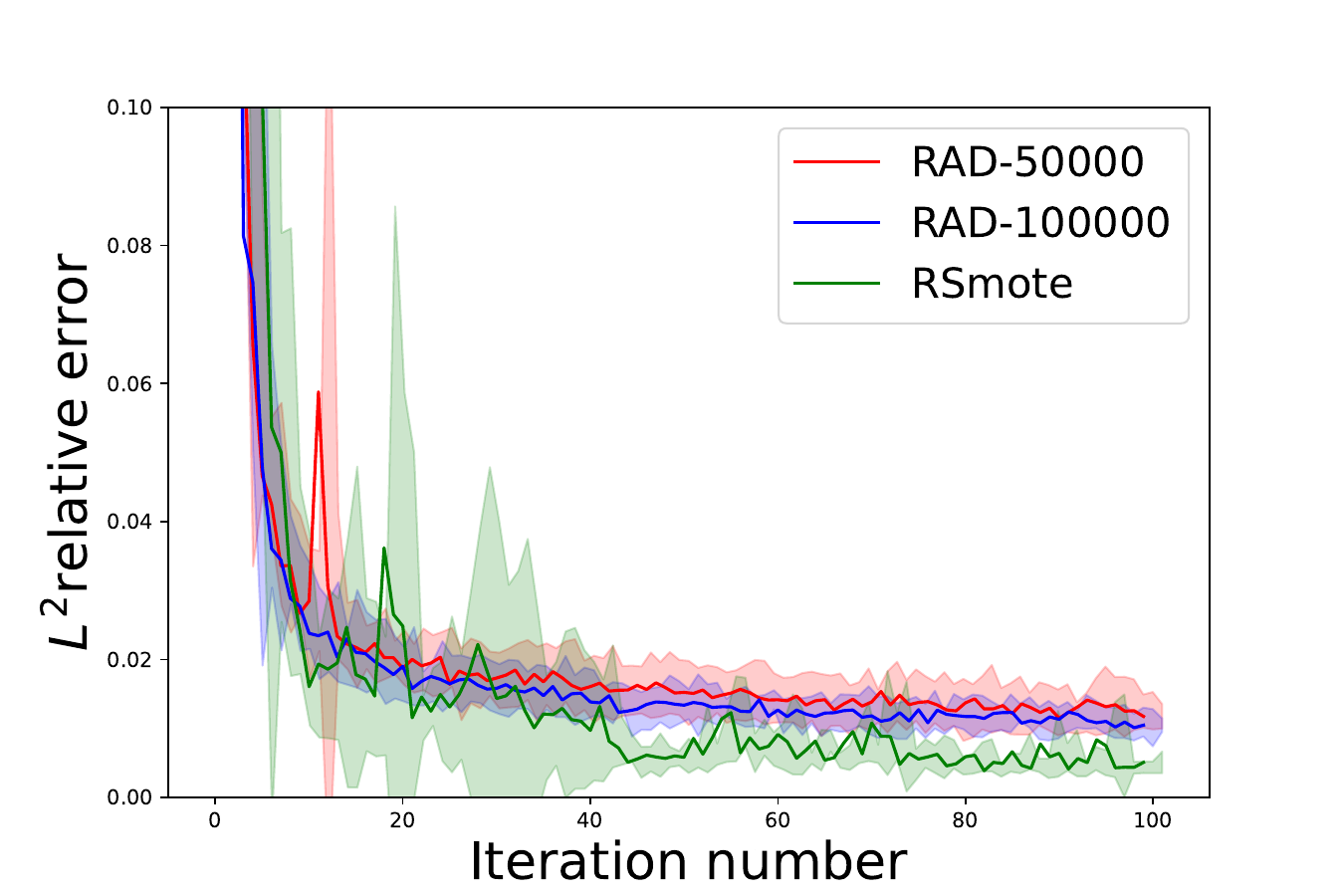}}
    \subfigure[\#Sampling=5000]{\includegraphics[width=0.45\linewidth]{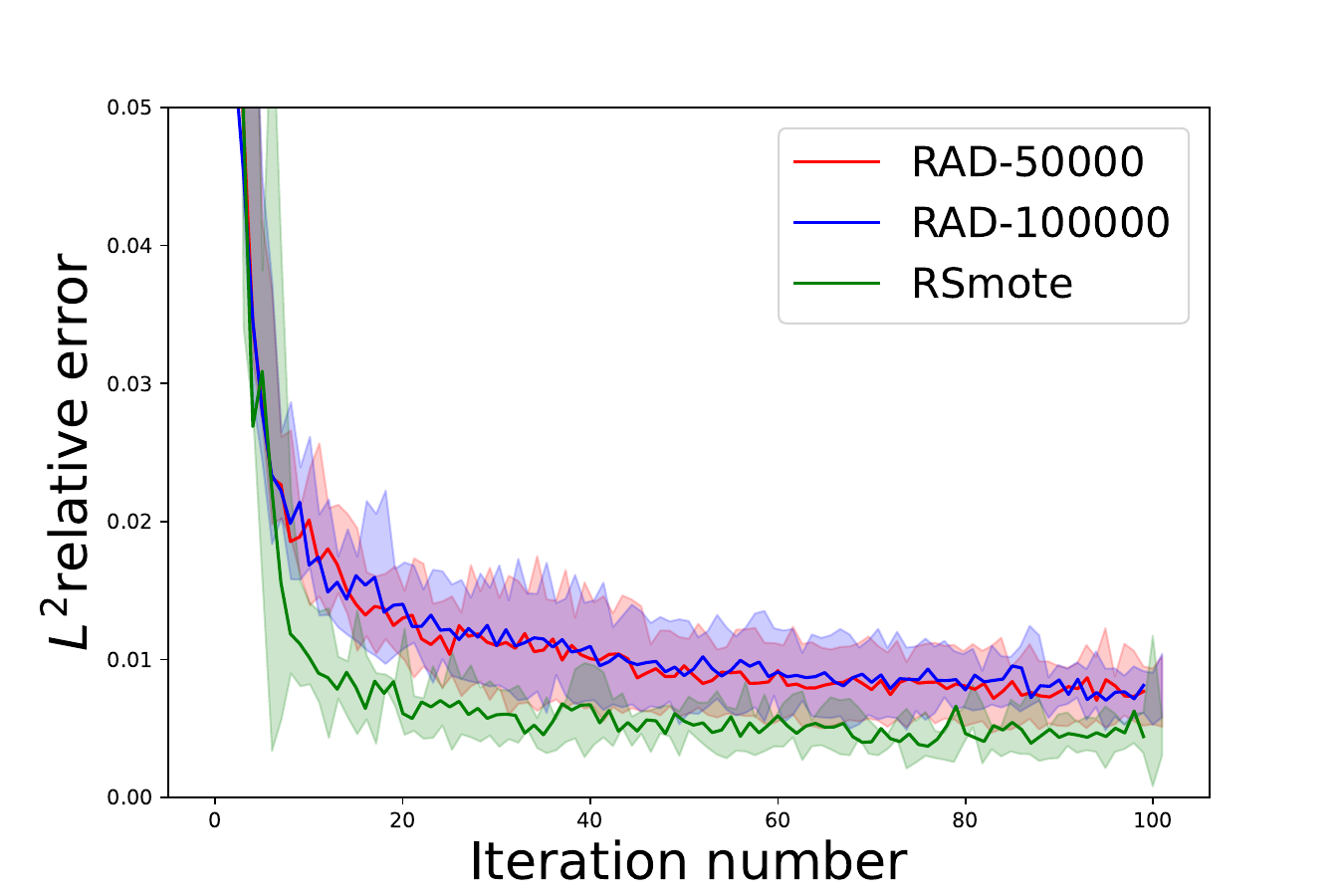}}
    \subfigure[\#Sampling=10000]{\includegraphics[width=0.45\linewidth]{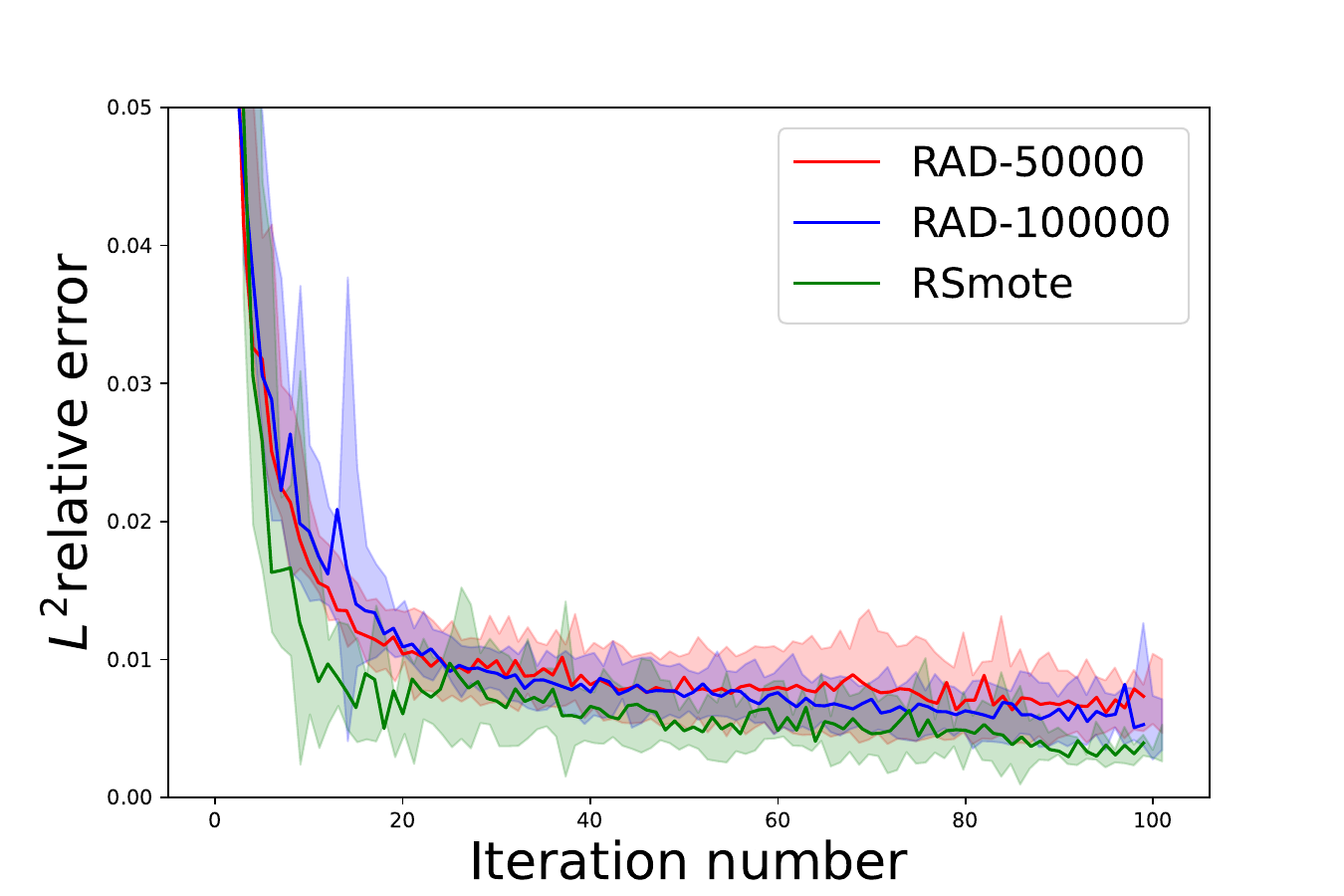}}
    \caption{\textcolor{black}{\textbf{Loss curves for Allen-cahn Equation}. Red line: Mean values of RAD-50000 method;  Blue line: Mean values of RAD-100000 method; Green line: Mean values of RSmote method. The shaded areas represent the corresponding standard deviations.}}
    \label{f.allen}
\end{figure}

\begin{figure}[!h]
    \centering
    \subfigure[Exact solution]{\includegraphics[width=0.3\linewidth]{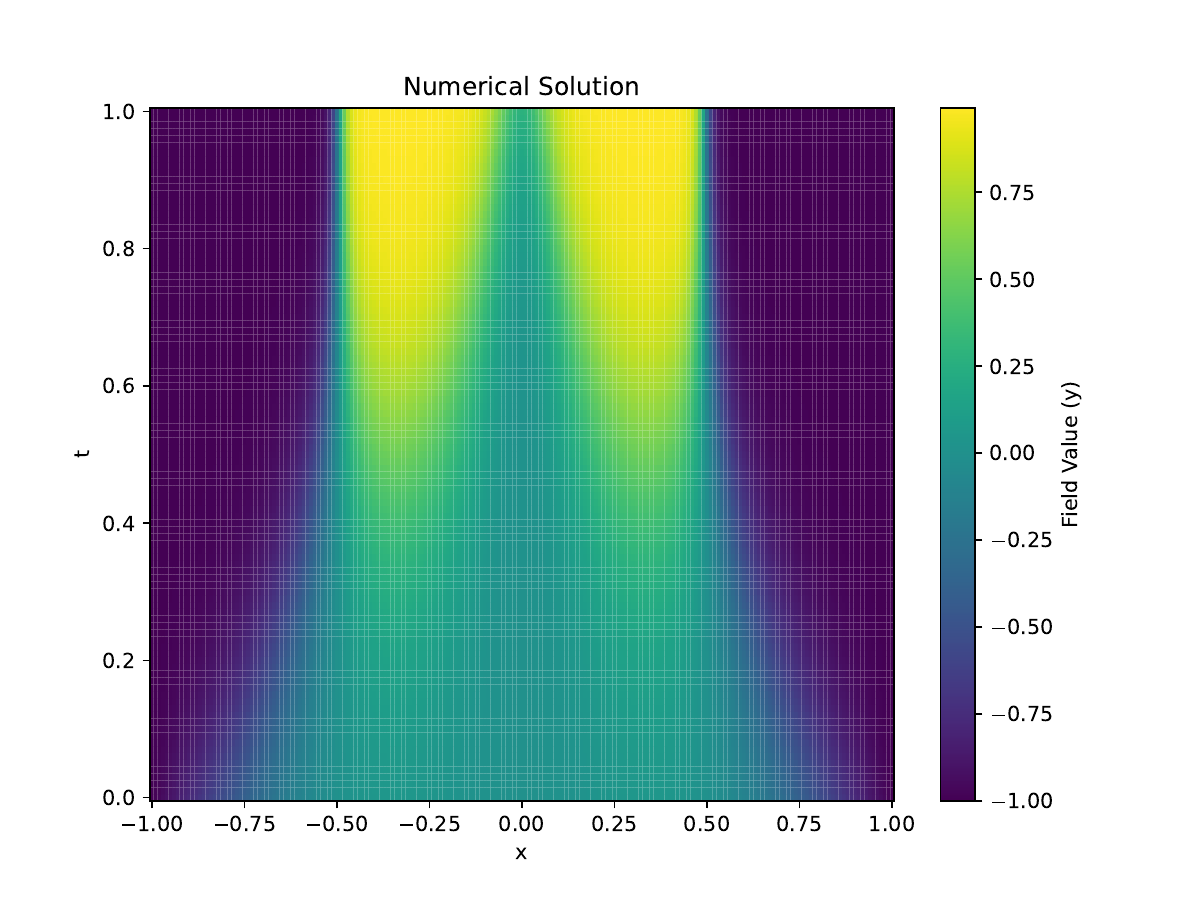}}
    \subfigure[RAD solution]
    {\includegraphics[width=0.3\linewidth]{ 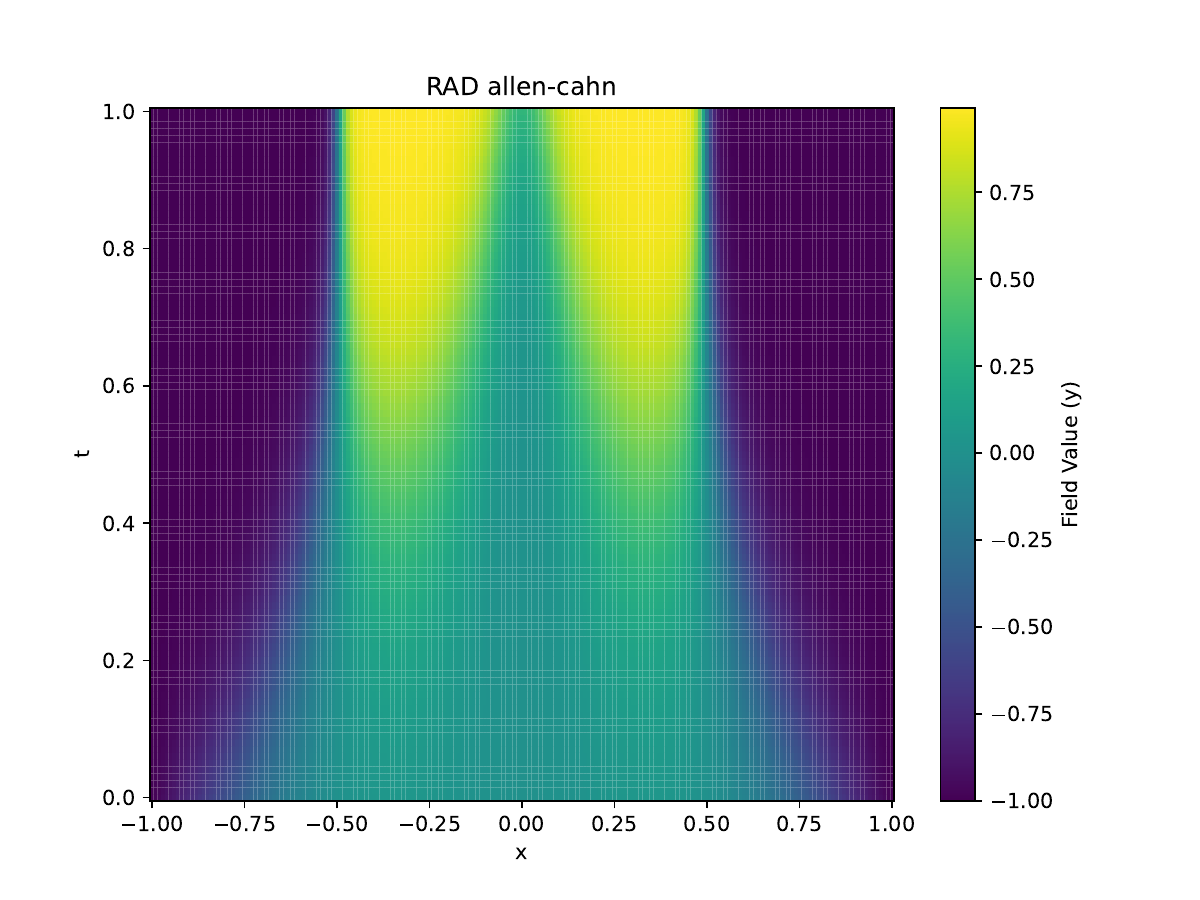}}
    \subfigure[RSmote solution]{\includegraphics[width=0.3\linewidth]{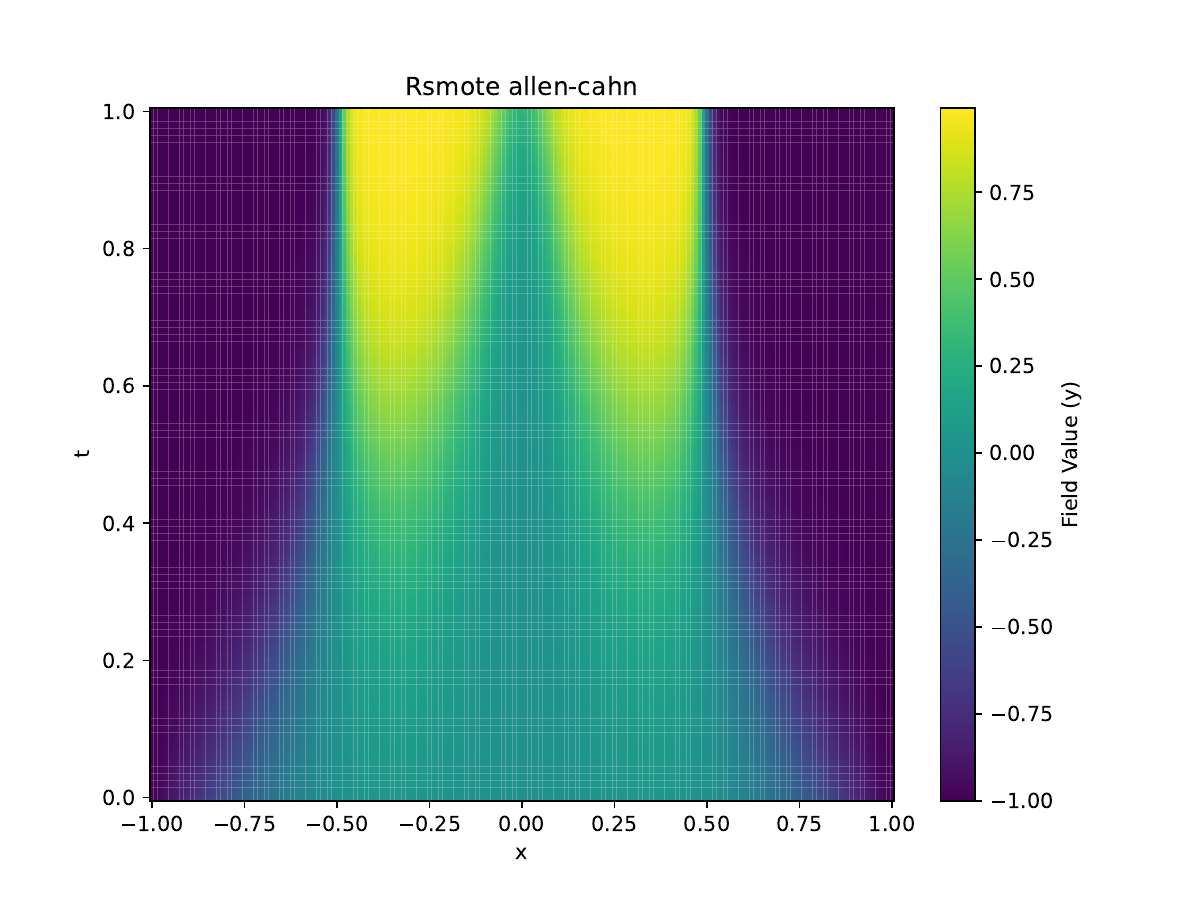}}
    \subfigure[RAD difference]{\includegraphics[width=0.3\linewidth]{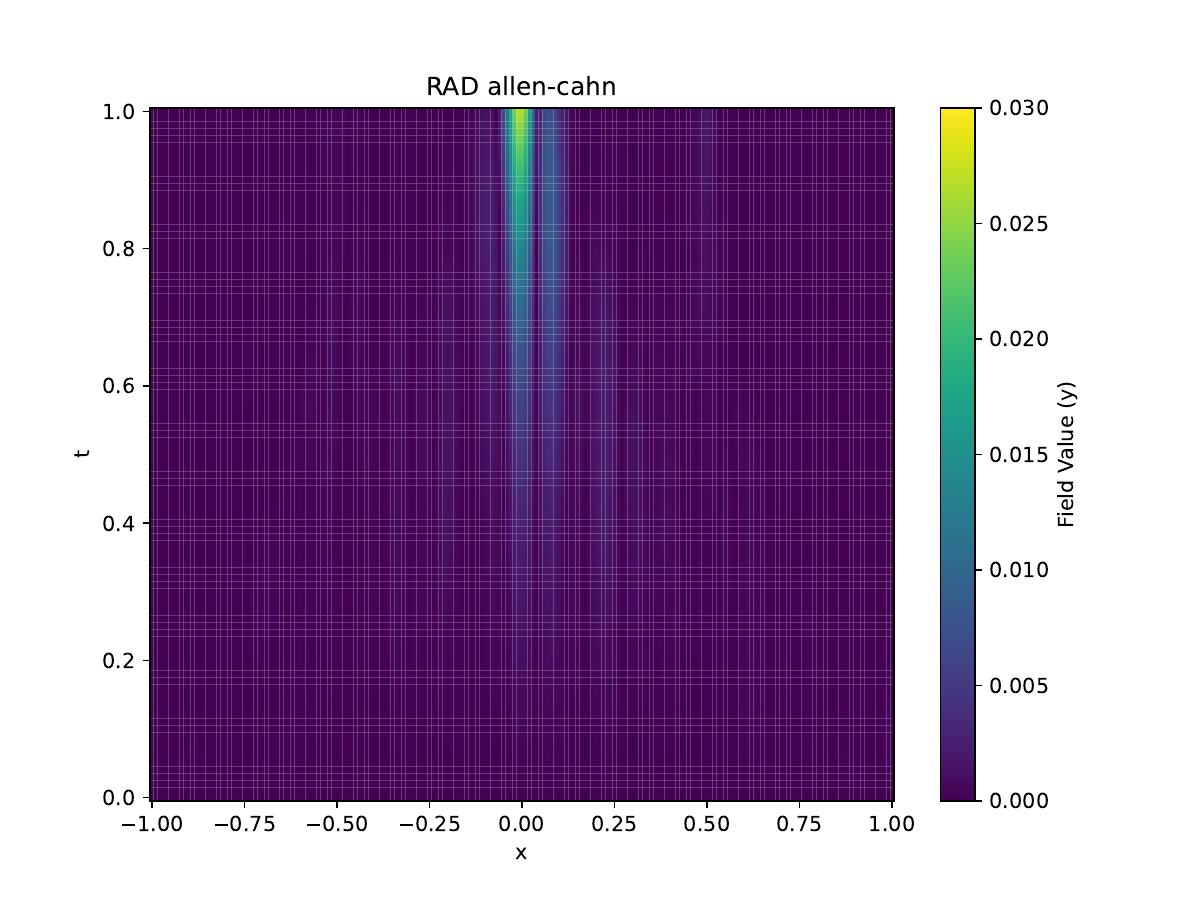}}
    \subfigure[RSmote difference]{\includegraphics[width=0.3\linewidth]{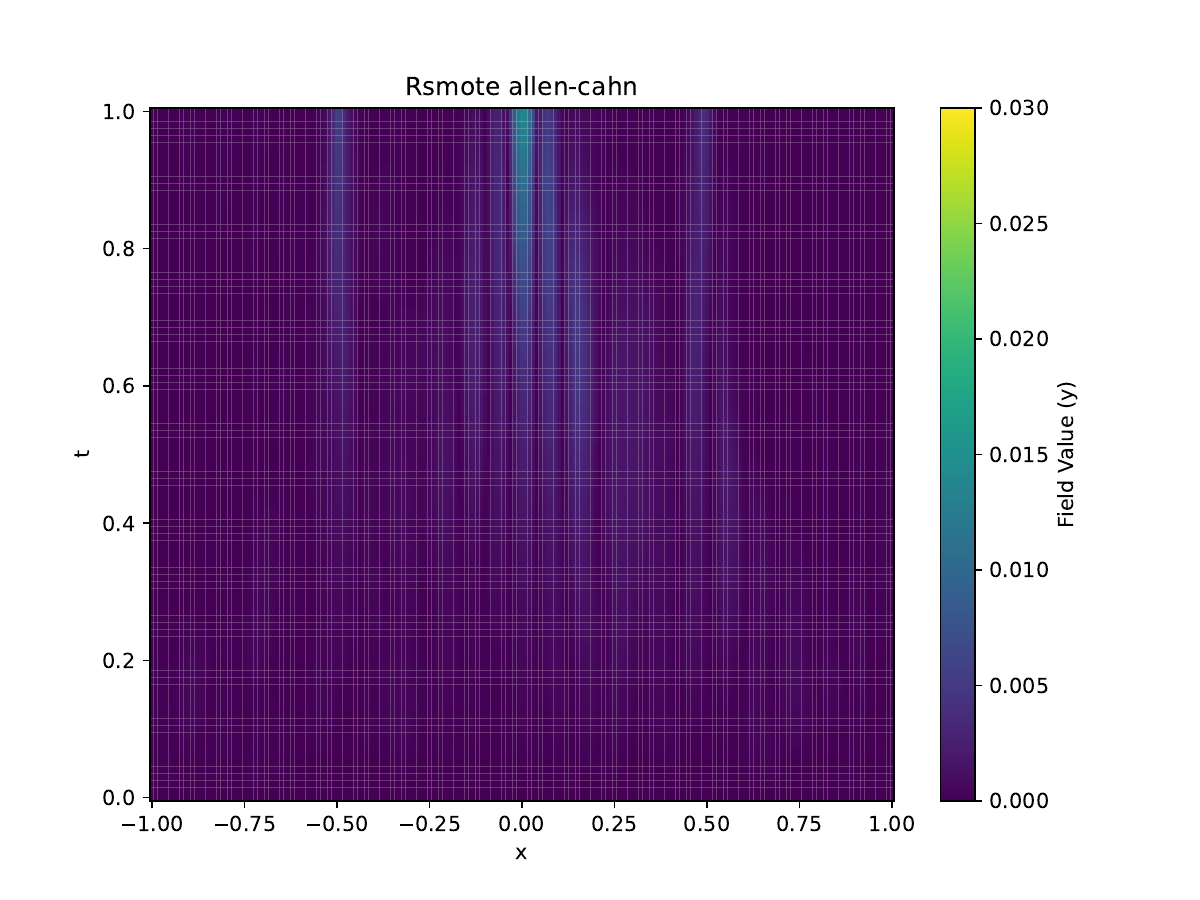}}
    \caption{\textcolor{black}{\textbf{Solution fields for Allen-Cahn Equation. (a)-(c): ground truth, RAD solution and RSmote solution; (d)-(e): Absolute differences.}}}
    \label{f.allen_field}
\end{figure}

\subsection{Elliptic Equation}

\begin{equation}
\label{e.highdimelliptic}
\left\{\begin{array}{l}
-\bigtriangleup u = f,~~ \mathbf{x}\in \Omega\\
u = g,~~ \mathbf{x}\in \partial\Omega, \\
\end{array}\right.    
\end{equation}
where $\Omega=[-1, 1]^d$, $f(\mathbf{x})=\frac{1}{d}(\sin(\frac{1}{d}\sum \limits_{i=1}^{d}x_i)-2)$, which admit the exact solution $u(\mathbf{x})=(\frac{1}{d}\sum \limits_{i=1}^{d}x_i)^2+\sin(\frac{1}{d}\sum \limits_{i=1}^{d}x_i)$.

In this experiment, we evaluated the performance of three different methods, namely RAD-100000, RAD-200000, and RSmote, for an Elliptic equation with different dimensions. The results are presented in Table \ref{T.elliptic} and Fig.~\ref{f.elliptic}. The dimensions of the problem are d=10, d=20, and d=100.

\textcolor{black}{
The experimental results demonstrate the superiority of the proposed RSmote method over the RAD-100000 and RAD-200000 techniques across various dimensions and evaluation metrics. In the low-dimensional case of d=10, RSmote achieved the lowest mean score of 5.9e-4, outperforming both RAD-100000 (1.1e-3) and RAD-200000 (1.0e-3). Moreover, RSmote exhibited a significantly lower memory usage of 1874, compared to 3676 and 4310 for RAD-100000 and RAD-200000, respectively. This trend continued in the higher-dimensional case of d=20, where RSmote attained the best score of 4.5e-3 while requiring substantially less memory (5334) than RAD-100000 (13088) and RAD-200000 (15834). Remarkably, in the high-dimensional scenario of d=100, RSmote was the only method capable of producing a solution, achieving a score of 3.9e-3 with a memory usage of 23018. These results highlight the computational efficiency and scalability of RSmote, making it a superior choice for high-dimensional problems where existing techniques struggle or fail to find solutions.
}

\begin{table}[!ht]
\renewcommand\arraystretch{1.1}
\footnotesize
\centering\caption{Results for a Elliptic equation using RAD-100000, RAD-200000, and RSmote methods. The table gives the $L_2$ relative errors and memory cost for models trained with dimensions d=10, 20, and 100. - represents there exists GPU overflow. The \textbf{bold} indicates the best result.}
\begin{tabular}{l|c|ccc}
\toprule[2pt]
        Dimension  & Metric & RAD-100000 & RAD-200000 & RSmote \\
\midrule[1pt]
\multirow{2}{*}{d=10} & Score & \textcolor{black}{1.1e-3$\pm$ 2.9e-4} & \textcolor{black}{1.0e-3$\pm$ 3.1e-4} & \textcolor{black}{\textbf{5.9e-4$\pm$ 1.5e-4 }} \\
                  & Memory  & 3676 & 4310 & \textbf{1874} \\
                  
\midrule[1pt]
\multirow{2}{*}{d=20} & Score & \textcolor{black}{6.8e-3$\pm$ 2.4e-3} & \textcolor{black}{5.9e-3$\pm$ 2.8e-3} & \textbf{\textcolor{black}{4.5e-3$\pm$ 1.7e-3}} \\
                  & Memory  & 13088 & 15834 & \textbf{5334} \\
                  
\midrule[1pt]
\multirow{2}{*}{d=100} & Score  & - & - & \textbf{\textcolor{black}{3.9e-3$\pm$1.2e-3}} \\
                  & Memory  &-  & - & \textbf{18790} \\
\bottomrule[2pt]
\end{tabular}

\label{T.elliptic}
\end{table}

\begin{figure}[!h]
\centering
\subfigure[Dimension=10]{\includegraphics[width=0.45\linewidth]{ 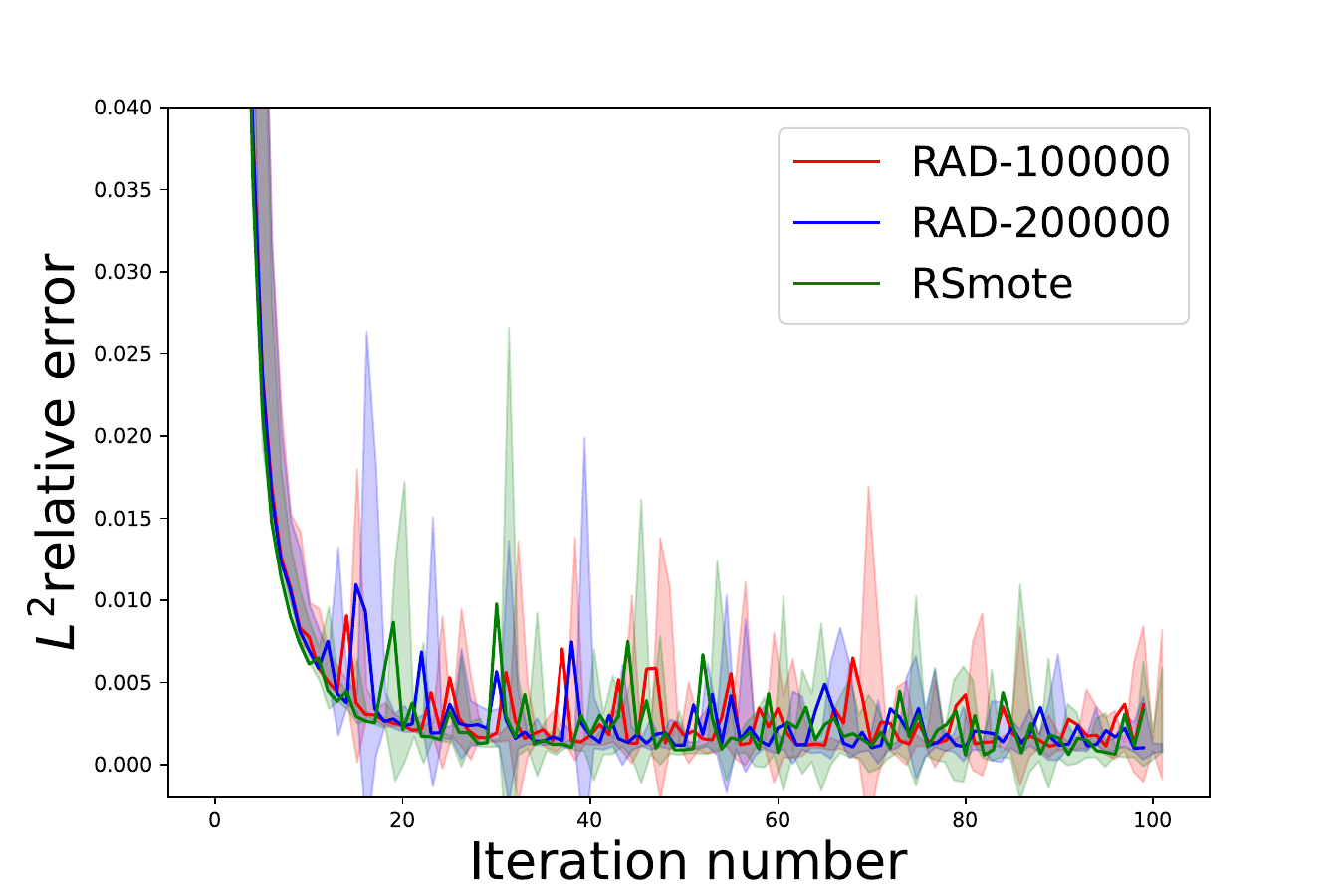}}
\subfigure[Dimension=20]{\includegraphics[width=0.45\linewidth]{ 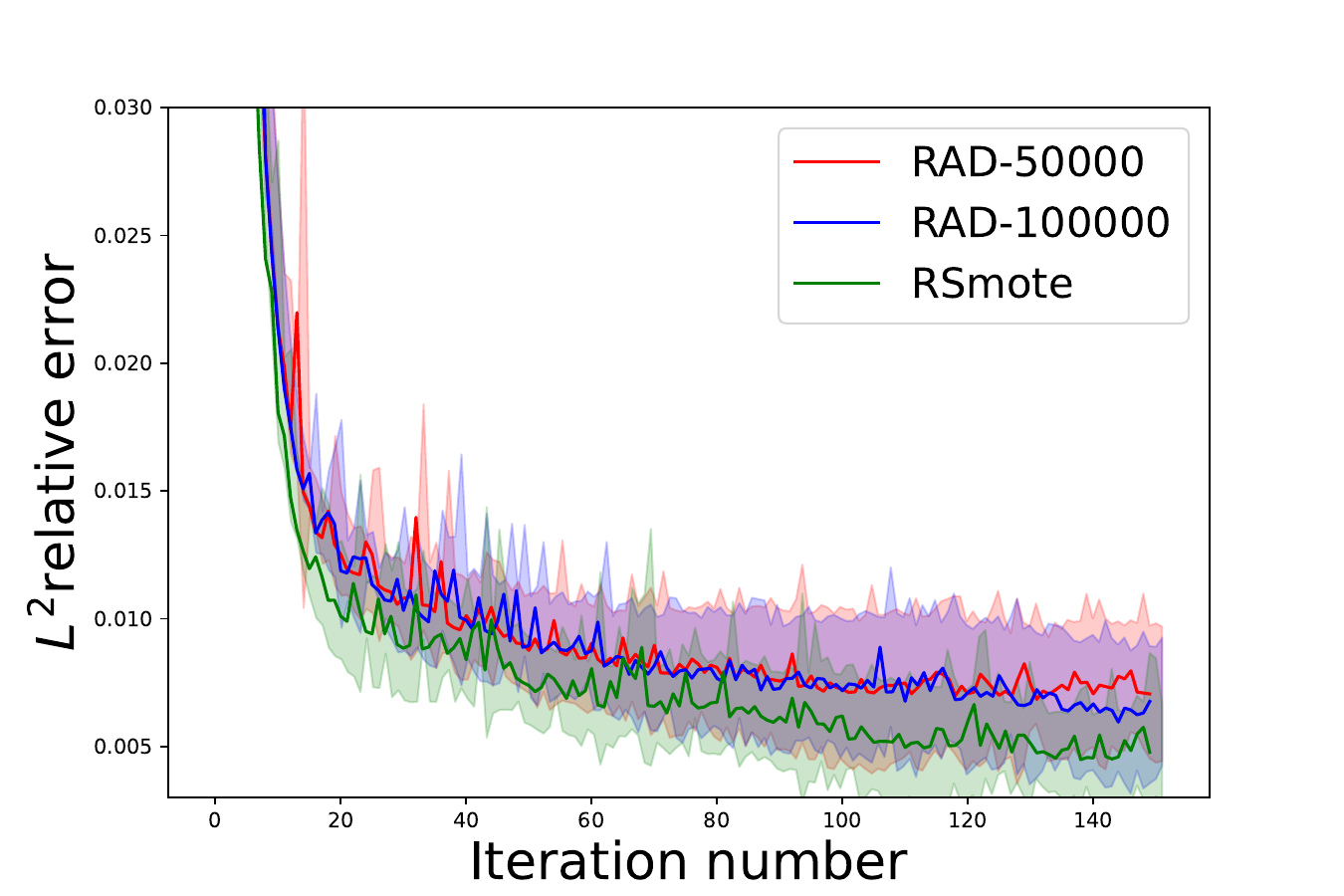}}
\caption{\textcolor{black}{\textbf{Loss curves for Elliptic Equation}. Red line: Mean values of RAD-100000 method;  Blue line: Mean values of RAD-200000 method; Green line: Mean values of RSmote method. The shaded areas represent the corresponding standard deviations.}}
\label{f.elliptic}
\end{figure}

\subsection{Reaction-Diffusion Equation}

\begin{equation}
\label{e.highdimreaction}
\left\{\begin{array}{l}
u_t = \bigtriangleup u - 0.2u - de^{-0.2t},~~ \mathbf{x}\in \Omega \\
u(\mathbf{x}, 0) = \|\mathbf{x}\|^2/2, \\
\end{array}\right. 
\end{equation}
where $d$ is the dimension, $\Omega=[-1, 1]^d$ and $t\in [0,1]$. The analytical solution is $u(\mathbf{x}, t) = \frac{1}{2}\|\mathbf{x}\|^2 e^{-0.2t}$.

In this experiment, we evaluated the performance of two methods, namely RAD-100000 and RSmote, for the Reaction-Diffusion equations with different dimensions. We want to show more samples can lead to better results but require more memory. RSmote will have more advantages because it uses fewer resources and can be applied for higher dimensional PDEs.
The results are presented in Table \ref{T.reaction} and Fig.~\ref{f.reaction}.

\textcolor{black}{
In the 20-dimensional case, RAD achieves a score of 7.4e-3 using 7286 MB of memory and 10000 training data points. Increasing the number of points to 50000 and 80000 improves the score to 7.2e-3 and 7.1e-3, but memory usage rises to 10028 MB and 15476 MB, respectively. In contrast, RSmote outperforms RAD by achieving the best score of 5.2e-3 while using less memory (6374 MB).
This pattern persists in the 30-dimensional case. RAD with 10000 samples scores 1.7e-2 (15182 MB), improving to 1.4e-2 (20710 MB) with 50000 samples. However, RSmote with 80000 points delivers the best score of 8.1e-3 while RAD meets GPU overflow.
For the 50-dimensional case, only RSmote results are available, showing scores of 0.16 and 0.13 for 10000 and 50000 samples, respectively.
}

\textcolor{black}{
The results demonstrate that increased sampling generally improves accuracy, as evidenced by RAD with 80000 samples consistently outperforming RAD with 10000 points in score.} However, this improvement comes at the cost of significantly higher memory usage. RSmote stands out by consistently achieving the best scores while using the least memory across all dimensions and sample sizes, showcasing superior efficiency.

As the dimension increases, the scores (errors) generally decrease, indicating that the problem becomes more challenging. Memory requirements also increase substantially with dimension for all methods. The absence of data for RAD methods at d=50 limits full comparison across all dimensions.

In conclusion, while increasing samples in RAD methods does improve results, it incurs a substantial memory cost. RSmote demonstrates remarkable efficiency, consistently achieving the best accuracy with the lowest memory usage across all tested scenarios. This suggests RSmote may be particularly valuable for high-dimensional problems or when computational resources are limited.

\begin{table}[!h]
\renewcommand\arraystretch{1.1}
\footnotesize
\centering\caption{Results of Reaction-Diffusion equation. The table introduces the scores and memory cost for models trained with different dimensions and diverse data points. - represents there exists GPU overflow. \#Sampling refers to the number of training points. The \textbf{bold} indicates the best result.}
\begin{adjustbox}{width=\textwidth}
\begin{tabular}{l|c|cc|cc|cc}
\toprule[2pt]
\multirow{2}{*}{Dimension} & \multirow{2}{*}{Metric} & \multicolumn{2}{c|}{\#Sampling=10000} & \multicolumn{2}{c|}{\#Sampling=50000} & \multicolumn{2}{c}{\#Sampling=80000}\\
   &    &   RAD-100000 &  RSmote & RAD-100000  & RSmote & RAD-100000  & RSmote\\
   
\midrule[1pt]
\multirow{2}{*}{d=20} & Score & \textcolor{black}{7.4e-3$\pm$1.9e-4} & \textcolor{black}{\textbf{5.4e-3$\pm$2.1e-4}} & \textcolor{black}{7.2e-3$\pm$3.7e-4} & \textcolor{black}{\textbf{5.3e-3$\pm$3.0e-4}} & \textcolor{black}{7.1e-3$\pm$2.8e-4}& \textcolor{black}{\textbf{5.2e-3$\pm$4.4e-4}}\\
     & Memory  & 7286 & \textbf{1858} & 10228 & \textbf{4856}& 15476 & \textbf{6374}\\
                  
\midrule[1pt]
\multirow{2}{*}{d=30} & Score & \textcolor{black}{1.7e-2$\pm$2.8e-3} & \textcolor{black}{\textbf{1.2e-2$\pm$1.5e-3}} & \textcolor{black}{1.4e-2$\pm$2.7e-3}& \textcolor{black}{\textbf{9.0e-3$\pm$1.5e-3}} &- & \textcolor{black}{\textcolor{black}{\textbf{8.1e-3$\pm$5.2e-4}}} \\
    & Memory  & 15182 & \textbf{2700} & 20710 & \textbf{8228}& - & \textbf{12222}\\
                  
\midrule[1pt]
\multirow{2}{*}{d=50} & Score & - & \textcolor{black}{\textbf{0.16$\pm$5.9e-3}} & - & \textcolor{black}{\textbf{0.13$\pm$3.2e-3}} &- &- \\
                  & Memory  & - & \textbf{5692} & - & \textbf{20010} &- &- \\
\bottomrule[2pt]
\end{tabular}
\end{adjustbox}
\label{T.reaction}
\end{table}

\begin{figure}[!ht]
    \centering
    \subfigure[Dimension=20]{\includegraphics[width=0.45\linewidth]{ 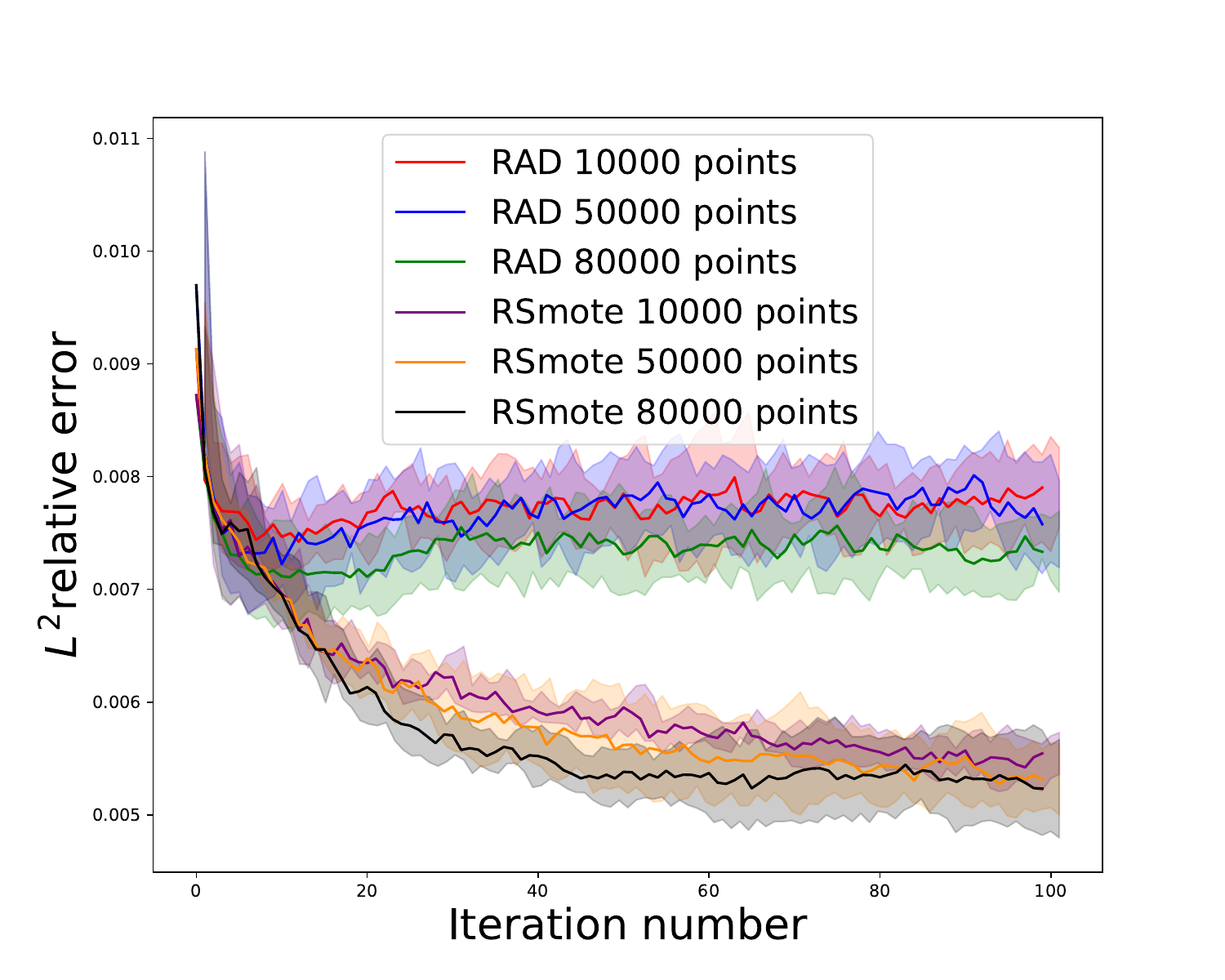}}
    \subfigure[Dimension=30]{\includegraphics[width=0.45\linewidth]{ 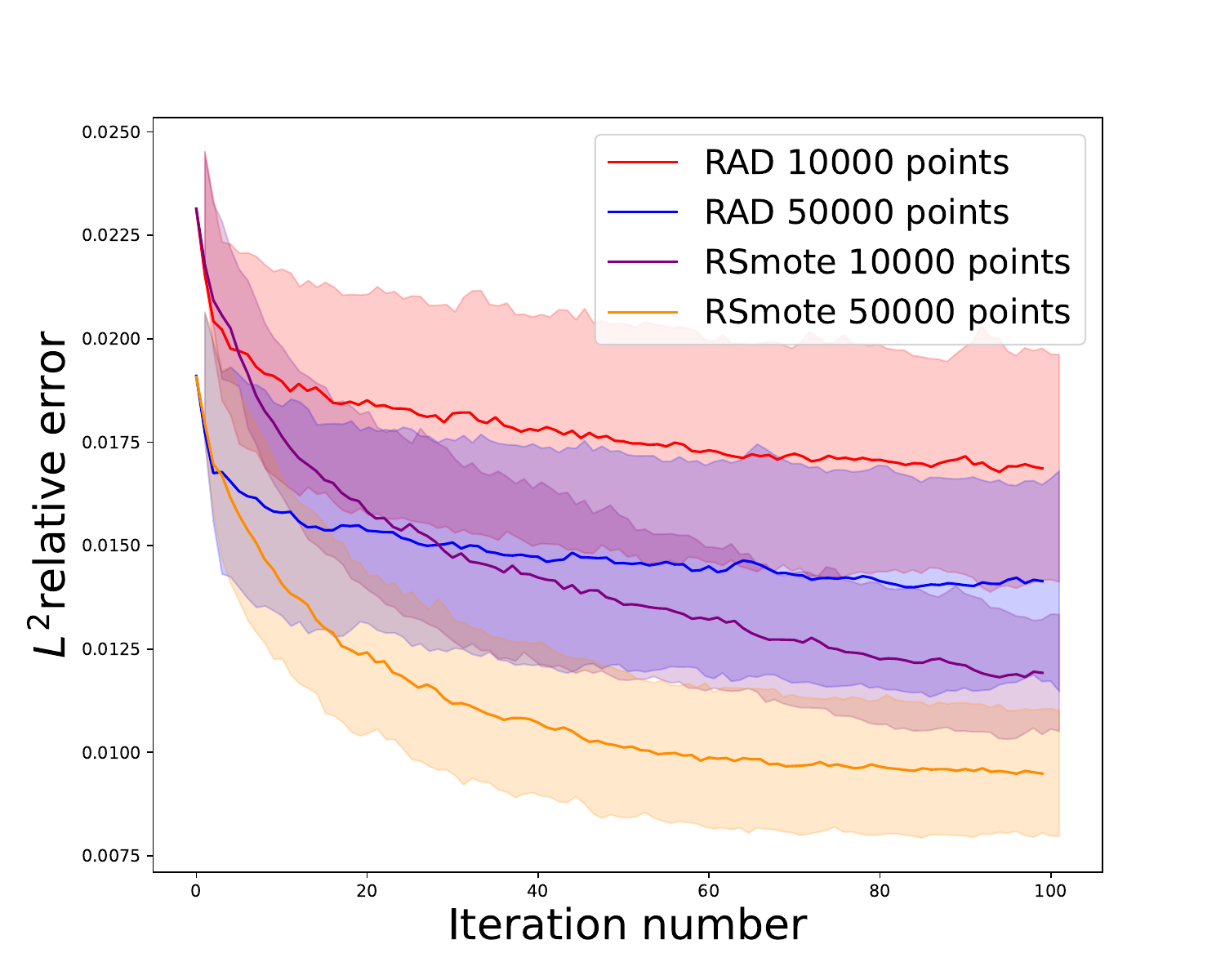}}
    \caption{\textcolor{black}{\textbf{Loss curves for Reaction-Diffusion equation}. Red line: Mean values of RAD-100000 with 10000 training points;  Blue line: Mean values of RAD-100000 with 50000 training points; Green line: Mean values of RAD-100000 with 80000 training points; Purple line: Mean values of RSmote with 10000 training points; Orange line: Mean values of RSmote with 50000 training points; Black line: Mean values of RSmote with 80000 training points. The shaded areas represent the corresponding standard deviations.}}
    \label{f.reaction}
\end{figure}

\subsection{\textcolor{black}{Hyper-parameter selection}}
\textcolor{black}{
In this subsection, we conduct additional experiments to examine the impact of the ratio $\lambda$ on the Allen-Cahn equation. The results are presented in Table \ref{T.allenoscillation} and Fig.~\ref{f.allen_oscillation}. The findings indicate that increasing the ratio significantly improves the model's performance, with the best performance observed at a ratio of 0.45 for both dataset sizes. Additionally, the sensitivity of the ratio decreases as the total number of sampling points increases. 
}

\textcolor{black}{
For a smaller dataset size (\#Sampling = 2000), the mean error decreases significantly as the ratio increases. At a ratio of 0.15, the error is
0.2861 $\pm$ 0.2128, whereas at a ratio of 0.45, the error is drastically reduced to 
0.0039 $\pm$ 0.0009. For a larger dataset size (\#Sampling = 10000), the error also decreases with higher ratios, but at a slower rate. This indicates that higher ratios consistently improve performance, with more pronounced benefits for smaller datasets.
The loss curves further illustrate the convergence behavior of RSmote for different ratios. For both dataset sizes, higher ratios (e.g., 0.45) achieve rapid and stable convergence with smaller error values. Conversely, lower ratios (e.g., 0.15) exhibit slower convergence and higher variability across iterations.}

\textcolor{black}{The results consistently demonstrate that larger sampling ratios enhance the model's stability and accuracy. This improvement arises because smaller ratios lead to highly imbalanced sampling, causing the data to concentrate excessively around large residuals while undersampling other regions. This imbalance ultimately results in diminished performance. However, the performance gain becomes less pronounced with larger datasets. For example, the difference in errors between ratios of 0.35 and 0.45 for \#Sampling = 10000 is relatively small. This suggests that while the RSmote method significantly benefits from higher sampling ratios, the impact is more pronounced for smaller datasets.}

\begin{table}[!h]
\renewcommand\arraystretch{1.1}
\centering\caption{\textcolor{black}{Performance (Mean $\pm$ Std.) comparisons for different training data sizes on the Allen-Cahn equation using the RSmote method with different ratios. The table shows the scores for models trained with 2000 and 10000 data points. The lower the score, the better the performance. The \textbf{bold} indicates the best result.}}
\begin{adjustbox}{width=0.7\textwidth}
\begin{tabular}{l|c|c}
\toprule[2pt]
Ratio & \#Sampling=2000 & \#Sampling=10000 \\
\midrule[1pt]
0.15 &  \textcolor{black}{0.2861 $\pm$ 0.2128}     & \textcolor{black}{0.0051 $\pm$ 0.0013}    \\

0.25 &  \textcolor{black}{0.2927 $\pm$ 0.0944}  &  \textcolor{black}{0.0033 $\pm$ 0.0015}  \\

0.35 &  \textcolor{black}{0.0048 $\pm$ 0.0027} &  \textcolor{black}{0.0030 $\pm$ 0.0004}     \\ 

0.45 &   \textbf{\textcolor{black}{0.0039 $\pm$ 0.0009}} &  \textbf{\textcolor{black}{0.0029 $\pm$ 0.0002}}      \\ 

\bottomrule[2pt]
\end{tabular}
\end{adjustbox}
\label{T.allenoscillation}
\end{table}

\begin{figure}[!h]
    \centering
    \subfigure[\#Sampling=2000]{\includegraphics[width=0.45\linewidth]{ 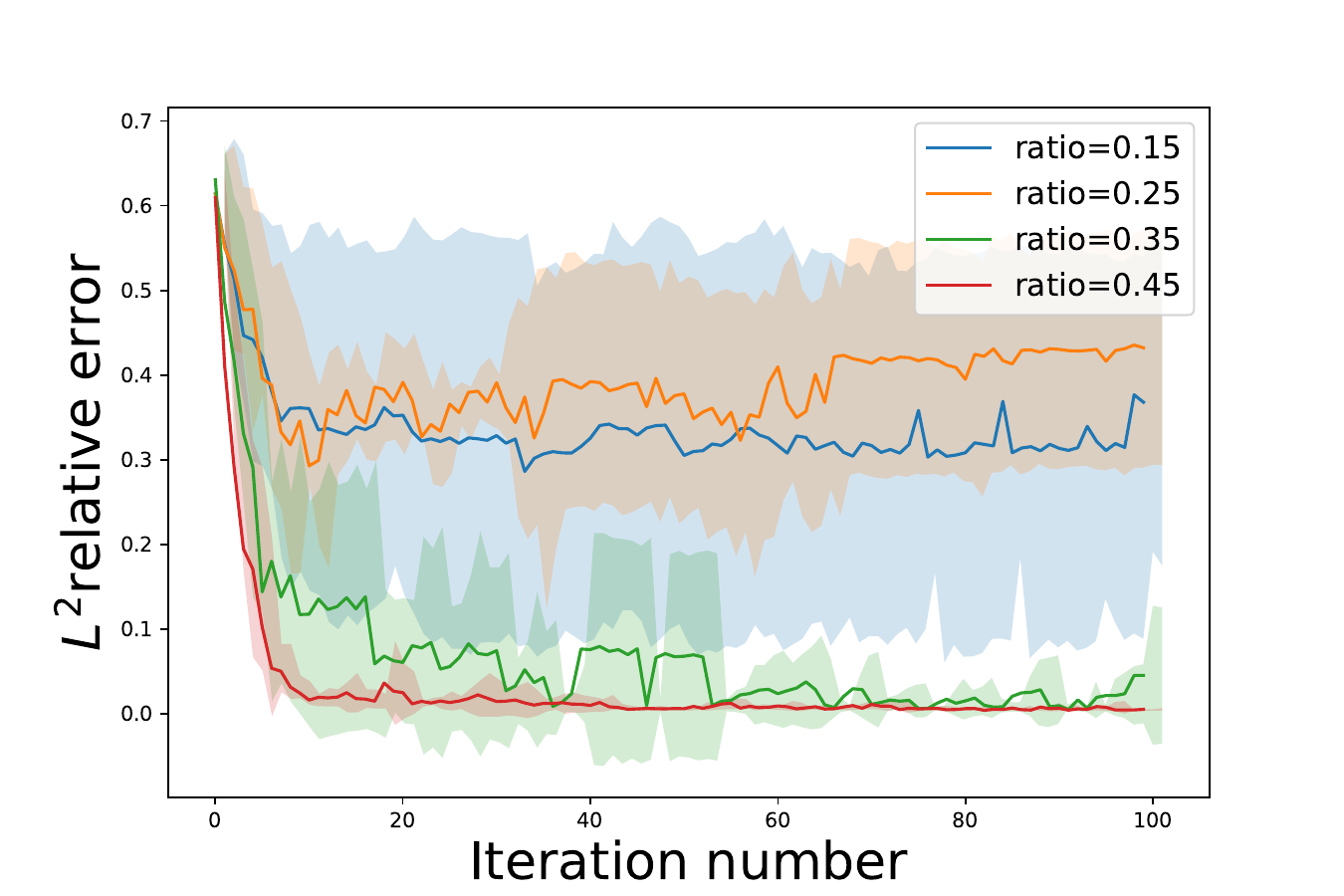}}
    \subfigure[\#Sampling=10000]{\includegraphics[width=0.45\linewidth]{ 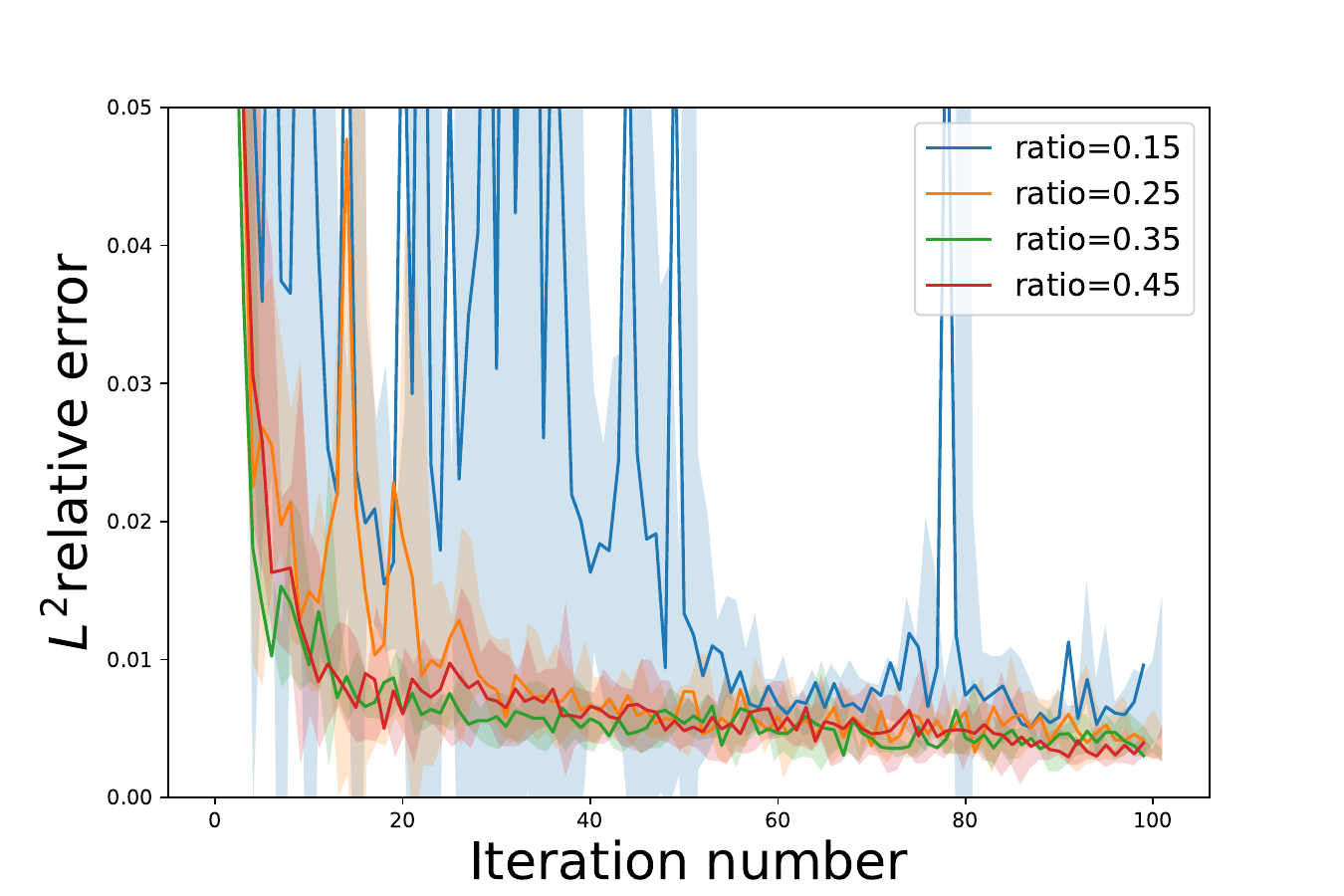}}
    \caption{\textcolor{black}{\textbf{Loss curves for Allen-cahn Equation with different ratios.} Blue line: Mean values of RSmote with a ratio of 0.15; Orange line: Mean values of RSmote with a ratio of 0.25; Green line: Mean values of RSmote with a ratio of 0.35; Red line: Mean values of RSmote with a ratio of 0.45. The shaded areas represent the corresponding standard deviations.}}
    \label{f.allen_oscillation}
\end{figure}

\subsection{\textcolor{black}{Discussion of Fluctuations in Loss Curves}}
\label{ss.fluctuation}

\textcolor{black}{We now discuss the factors contributing to the fluctuations observed in the RSmote method. These factors can be broadly categorized into two aspects: the number of training data points and the nature of the PDE solution.}

\textcolor{black}{Regarding the number of training data points, the loss curves of RSmote exhibit greater fluctuations compared to those of the RAD method when the sample size is 2000. This behavior arises because a small dataset limits the model's ability to approximate the solution accurately and identify regions with large residuals. Additionally, RSmote, being a local algorithm, lacks a mechanism for computing the global residual distribution, as RAD does. Consequently, RSmote requires more iterations to identify regions needing refinement, resulting in fluctuations during the data update process, as shown in Fig.\ref{f.laplace}(a), Fig.\ref{f.burges}(a), and Fig.~\ref{f.allen}(a). However, as the sample size increases, these fluctuations diminish because a larger dataset enables the network to better approximate the solution and more accurately locate regions with large residuals.}

\textcolor{black}{The second factor relates to the characteristics of the PDE itself. For PDEs with smooth solutions, such as the Laplace equation, the residual distribution is less concentrated (see Fig.\ref{f.laplace_field}(d)(e)), making it more challenging for RSmote to correctly identify regions requiring refinement. In contrast, RAD employs a global estimation approach, enabling it to precisely locate regions with large residuals, resulting in smaller fluctuations. However, when the dimensionality increases (see Fig.~\ref{f.elliptic}), both RSmote and RAD exhibit fluctuations. For RSmote, the training data is insufficient to quickly capture the imbalanced properties of the solution, while for RAD, even with a large number of additional data points (100000 and 200000), the sample size remains inadequate to accurately estimate the residual distribution.  When the PDE is particularly challenging, such as the Burgers' equation and Allen-Cahn equation, it includes terms (convection or small diffusion)  that can result in sharp gradients or shock formations. These features make it difficult for the PINN to approximate the sharp transitions uniformly across the domain, leading to oscillations in the loss.  RSmote can efficiently localize the large error region and demonstrates more robust performance with reduced oscillations.
}

% \textcolor{black}{
% The loss curves of RSmote exhibit greater fluctuations compared to those of the RAD method in the Laplace equation example with a sampling number of 2000. However, as the sampling number increases, the fluctuations decrease, as also observed in Figure \ref{f.elliptic} for the high-dimensional elliptic equation with smooth solutions. This behavior occurs because the solution $y = r\cos(\theta)$ is smooth, resulting in an error distribution that is not highly concentrated (see Fig.~\ref{f.laplace_field} (d)(e)). Consequently, RSmote may incorrectly identify regions requiring refinement, causing fluctuations in error during the data updating process. In contrast, for equations such as Burger's equation and the Allen-Cahn equation, there are regions with significantly large errors either in time or space, as shown in Figs. \ref{f.burgers_field} and \ref{f.allen_field}. In these cases, RSmote demonstrates more robust performance with reduced oscillations.}

% \textcolor{black}{In summary, while RSmote may exhibit fluctuations when applied to problems with smooth solutions and distributed errors (e.g., the Laplace equation), it excels in scenarios with concentrated error regions, such as those seen in Burger's and Allen-Cahn equations. This highlights its adaptability and effectiveness in addressing problems with localized complexities.  }

\section{Conclusions}
\label{s:conclu}

In this study, we presented RSmote, an innovative approach to adaptive sampling in Physics-Informed Neural Networks. Our method leverages insights from imbalanced learning to address the limitations of global sampling techniques, particularly in high-dimensional spaces. 
% Through comprehensive experiments across various dimensions, we demonstrated that RSmote offers a compelling balance of accuracy and efficiency.

The results consistently show that RSmote achieves comparable or better accuracy than the RAD method while using substantially less memory. This efficiency becomes increasingly pronounced as the problem dimension increases, with RSmote maintaining its performance up to 100 dimensions where traditional methods struggle or become computationally prohibitive.

The theoretical analysis and the success of RSmote highlight the potential of local, targeted sampling strategies in PINNs. By focusing computational resources on regions of high residuals, we can achieve high accuracy without the need for exhaustive global sampling. This approach not only improves efficiency but also opens up new possibilities for tackling complex, high-dimensional PDEs that were previously challenging due to computational constraints.

Future work could explore the application of RSmote to a broader range of PDEs and investigate its performance in even higher-dimensional spaces. Additionally, further refinement of the over-sampling and resampling techniques could potentially yield even greater improvements in efficiency and accuracy.

\section*{Acknowledgment}
The research results of this article are partially supported by the National Natural Science Foundation of China  12071190, 12271492 (J.Luo and S. Xu).

\section*{Author contributions}
\textbf{Jiaqi Luo}: Methodology, Software, Writing$ - $original draft, Investigation.
\textbf{Yahong Yang}: Methodology, Theory, Writing$ - $original draft, Investigation.
\textbf{Yuan Yuan}: Software, Writing $ - $ original draft, Visualization.
\textbf{Wenrui Hao}: Conceptualization, Methodology $ - $ original draft, Supervision.
\textbf{Shixin Xu}: Conceptualization, Methodology, Writing $ - $ original draft, Supervision.

\section*{Conflicts of interest}
The authors declare that they have no known competing financial interests or personal relationships that could have appeared to influence the work reported in this paper.

\section*{Declaration of generative AI and AI-assisted technologies in the writing process}
During the preparation of this work, the authors used ChatGPT to improve the language and readability. After using this tool/service, the author(s) reviewed and edited the content as needed and take(s) full responsibility for the content of the published article.

\bibliographystyle{elsarticle-num} 
\bibliography{references}

\begin{thebibliography}{10}
\expandafter\ifx\csname url\endcsname\relax
  \def\url#1{\texttt{#1}}\fi
\expandafter\ifx\csname urlprefix\endcsname\relax\def\urlprefix{URL }\fi
\expandafter\ifx\csname href\endcsname\relax
  \def\href#1#2{#2} \def\path#1{#1}\fi

\bibitem{hughes2012finite}
T.~J. Hughes, The finite element method: linear static and dynamic finite
  element analysis, Courier Corporation, 2012.

\bibitem{thomas2013numerical}
J.~W. Thomas, Numerical partial differential equations: finite difference
  methods, Vol.~22, Springer Science \& Business Media, 2013.

\bibitem{shen2011spectral}
J.~Shen, T.~Tang, L.-L. Wang, Spectral methods: algorithms, analysis and
  applications, Vol.~41, Springer Science \& Business Media, 2011.

\bibitem{hu2024tackling}
Z.~Hu, K.~Shukla, G.~E. Karniadakis, K.~Kawaguchi, Tackling the curse of
  dimensionality with physics-informed neural networks, Neural Networks 176
  (2024) 106369.

\bibitem{powell2007approximate}
W.~B. Powell, Approximate Dynamic Programming: Solving the curses of
  dimensionality, Vol. 703, John Wiley \& Sons, 2007.

\bibitem{he2016deep}
K.~He, X.~Zhang, S.~Ren, J.~Sun, Deep residual learning for image recognition,
  in: Proceedings of the IEEE conference on computer vision and pattern
  recognition, 2016, pp. 770--778.

\bibitem{vaswani2017attention}
A.~Vaswani, N.~Shazeer, N.~Parmar, J.~Uszkoreit, L.~Jones, A.~N. Gomez,
  {\L}.~Kaiser, I.~Polosukhin, Attention is all you need, Advances in neural
  information processing systems 30 (2017).

\bibitem{lecun2015deep}
Y.~LeCun, Y.~Bengio, G.~Hinton, Deep learning, nature 521~(7553) (2015)
  436--444.

\bibitem{long2018pde}
Z.~Long, Y.~Lu, X.~Ma, B.~Dong, Pde-net: Learning pdes from data, in:
  International conference on machine learning, PMLR, 2018, pp. 3208--3216.

\bibitem{long2019pde}
Z.~Long, Y.~Lu, B.~Dong, {PDE-Net} 2.0: Learning {PDEs} from data with a
  numeric-symbolic hybrid deep network, Journal of Computational Physics 399
  (2019) 108925.

\bibitem{sirignano2018dgm}
J.~Sirignano, K.~Spiliopoulos, {DGM}: A deep learning algorithm for solving
  partial differential equations, Journal of computational physics 375 (2018)
  1339--1364.

\bibitem{weinan2018deep}
E.~Weinan, B.~Yu, The deep ritz method: A deep learning-based numerical
  algorithm for solving variational problems, Communications in Mathematics and
  Statistics 6~(1) (2018) 1--12.

\bibitem{zang2020weak}
Y.~Zang, G.~Bao, X.~Ye, H.~Zhou, Weak adversarial networks for high-dimensional
  partial differential equations, Journal of Computational Physics 411 (2020)
  109409.

\bibitem{raissi2019physics}
M.~Raissi, P.~Perdikaris, G.~E. Karniadakis, Physics-informed neural networks:
  A deep learning framework for solving forward and inverse problems involving
  nonlinear partial differential equations, Journal of Computational physics
  378 (2019) 686--707.

\bibitem{karniadakis2021physics}
G.~E. Karniadakis, I.~G. Kevrekidis, L.~Lu, P.~Perdikaris, S.~Wang, L.~Yang,
  Physics-informed machine learning, Nature Reviews Physics 3~(6) (2021)
  422--440.

\bibitem{raissi2020hidden}
M.~Raissi, A.~Yazdani, G.~E. Karniadakis, Hidden fluid mechanics: Learning
  velocity and pressure fields from flow visualizations, Science 367~(6481)
  (2020) 1026--1030.

\bibitem{yazdani2020systems}
A.~Yazdani, L.~Lu, M.~Raissi, G.~E. Karniadakis, Systems biology informed deep
  learning for inferring parameters and hidden dynamics, PLoS computational
  biology 16~(11) (2020) e1007575.

\bibitem{lu2021physics}
L.~Lu, R.~Pestourie, W.~Yao, Z.~Wang, F.~Verdugo, S.~G. Johnson,
  Physics-informed neural networks with hard constraints for inverse design,
  SIAM Journal on Scientific Computing 43~(6) (2021) B1105--B1132.

\bibitem{lu2021deepxde}
L.~Lu, X.~Meng, Z.~Mao, G.~E. Karniadakis, {DeepXDE}: A deep learning library
  for solving differential equations, SIAM review 63~(1) (2021) 208--228.

\bibitem{pang2019fpinns}
G.~Pang, L.~Lu, G.~E. Karniadakis, fpinns: Fractional physics-informed neural
  networks, SIAM Journal on Scientific Computing 41~(4) (2019) A2603--A2626.

\bibitem{zhang2019quantifying}
D.~Zhang, L.~Lu, L.~Guo, G.~E. Karniadakis, Quantifying total uncertainty in
  physics-informed neural networks for solving forward and inverse stochastic
  problems, Journal of Computational Physics 397 (2019) 108850.

\bibitem{zheng2024hompinns}
H.~Zheng, Y.~Huang, Z.~Huang, W.~Hao, G.~Lin, Hompinns: Homotopy
  physics-informed neural networks for solving the inverse problems of
  nonlinear differential equations with multiple solutions, Journal of
  Computational Physics 500 (2024) 112751.

\bibitem{huang2022hompinns}
Y.~Huang, W.~Hao, G.~Lin, Hompinns: Homotopy physics-informed neural networks
  for learning multiple solutions of nonlinear elliptic differential equations,
  Computers \& Mathematics with Applications 121 (2022) 62--73.

\bibitem{siegel2023greedy}
J.~W. Siegel, Q.~Hong, X.~Jin, W.~Hao, J.~Xu, Greedy training algorithms for
  neural networks and applications to pdes, Journal of Computational Physics
  484 (2023) 112084.

\bibitem{chen2022randomized}
Q.~Chen, W.~Hao, Randomized newton’s method for solving differential
  equations based on the neural network discretization, Journal of Scientific
  Computing 92~(2) (2022) 49.

\bibitem{wu2023comprehensive}
C.~Wu, M.~Zhu, Q.~Tan, Y.~Kartha, L.~Lu, A comprehensive study of non-adaptive
  and residual-based adaptive sampling for physics-informed neural networks,
  Computer Methods in Applied Mechanics and Engineering 403 (2023) 115671.

\bibitem{nabian2021efficient}
M.~A. Nabian, R.~J. Gladstone, H.~Meidani, Efficient training of
  physics-informed neural networks via importance sampling, Computer-Aided
  Civil and Infrastructure Engineering 36~(8) (2021) 962--977.

\bibitem{gao2023failure}
Z.~Gao, L.~Yan, T.~Zhou, Failure-informed adaptive sampling for pinns, SIAM
  Journal on Scientific Computing 45~(4) (2023) A1971--A1994.

\bibitem{gao2023failure2}
Z.~Gao, T.~Tang, L.~Yan, T.~Zhou, Failure-informed adaptive sampling for pinns,
  part ii: combining with re-sampling and subset simulation, Communications on
  Applied Mathematics and Computation (2023) 1--22.

\bibitem{gu2021selectnet}
Y.~Gu, H.~Yang, C.~Zhou, Selectnet: Self-paced learning for high-dimensional
  partial differential equations, Journal of Computational Physics 441 (2021)
  110444.

\bibitem{tang2023pinns}
K.~Tang, X.~Wan, C.~Yang, Das-pinns: A deep adaptive sampling method for
  solving high-dimensional partial differential equations, Journal of
  Computational Physics 476 (2023) 111868.

\bibitem{gao2023active}
W.~Gao, C.~Wang, Active learning based sampling for high-dimensional nonlinear
  partial differential equations, Journal of Computational Physics 475 (2023)
  111848.

\bibitem{zeng2022adaptive}
S.~Zeng, Z.~Zhang, Q.~Zou, Adaptive deep neural networks methods for
  high-dimensional partial differential equations, Journal of Computational
  Physics 463 (2022) 111232.

\bibitem{mao2023physics}
Z.~Mao, X.~Meng, Physics-informed neural networks with residual/gradient-based
  adaptive sampling methods for solving partial differential equations with
  sharp solutions, Applied Mathematics and Mechanics 44~(7) (2023) 1069--1084.

\bibitem{he2009learning}
H.~He, E.~A. Garcia, Learning from imbalanced data, IEEE Transactions on
  knowledge and data engineering 21~(9) (2009) 1263--1284.

\bibitem{chawla2002smote}
N.~V. Chawla, K.~W. Bowyer, L.~O. Hall, W.~P. Kegelmeyer, {SMOTE}: synthetic
  minority over-sampling technique, Journal of artificial intelligence research
  16 (2002) 321--357.

\bibitem{kovacs2019empirical}
G.~Kov{\'a}cs, An empirical comparison and evaluation of minority oversampling
  techniques on a large number of imbalanced datasets, Applied Soft Computing
  83 (2019) 105662.

\bibitem{evans2022partial}
L.~Evans, Partial differential equations, Vol.~19, American Mathematical
  Society, 2022.

\bibitem{abu1989vapnik}
Y.~Abu-Mostafa, The vapnik-chervonenkis dimension: Information versus
  complexity in learning, Neural Computation 1~(3) (1989) 312--317.

\bibitem{siegel2022optimal}
J.~Siegel, Optimal approximation rates for deep relu neural networks on sobolev
  spaces, arXiv preprint arXiv:2211.14400 (2022).

\bibitem{lu2021deep}
J.~Lu, Z.~Shen, H.~Yang, S.~Zhang, Deep network approximation for smooth
  functions, SIAM Journal on Mathematical Analysis 53~(5) (2021) 5465--5506.

\bibitem{yang2023nearly}
Y.~Yang, H.~Yang, Y.~Xiang, Nearly optimal vc-dimension and pseudo-dimension
  bounds for deep neural network derivatives, Advances in Neural Information
  Processing Systems 36 (2023) 21721--21756.

\bibitem{yang2023nearlys}
Y.~Yang, Y.~Wu, H.~Yang, Y.~Xiang, Nearly optimal approximation rates for deep
  super relu networks on {Sobolev} spaces, arXiv preprint arXiv:2310.10766
  (2023).

\bibitem{yang2024near}
Y.~Yang, Y.~Lu, Near-optimal deep neural network approximation for korobov
  functions with respect to {$L_p$} and {$H_1$} norms, Neural Networks 180
  (2024) 106702.

\bibitem{pollard1990empirical}
D.~Pollard, Empirical processes: theory and applications, Ims, 1990.

\bibitem{schmidt2020nonparametric}
J.~Schmidt-Hieber, Nonparametric regression using deep neural networks with
  relu activation function (2020).

\bibitem{jiao2021error}
Y.~Jiao, Y.~Lai, Y.~Lo, Y.~Wang, Y.~Yang, Error analysis of deep {Ritz} methods
  for elliptic equations, arXiv preprint arXiv:2107.14478 (2021).

\bibitem{anthony1999neural}
M.~Anthony, P.~Bartlett, P.~Bartlett, et~al., Neural network learning:
  Theoretical foundations, Vol.~9, cambridge university press Cambridge, 1999.

\bibitem{yang2024deeper}
Y.~Yang, J.~He, Deeper or wider: A perspective from optimal generalization
  error with {Sobolev} loss, in: Forty-first International Conference on
  Machine Learning, 2024.

\bibitem{bartlett2019nearly}
P.~Bartlett, N.~Harvey, C.~Liaw, A.~Mehrabian, Nearly-tight vc-dimension and
  pseudodimension bounds for piecewise linear neural networks, Journal of
  Machine Learning Research 20~(63) (2019) 1--17.

\bibitem{gyorfi2002distribution}
L.~Gy{\"o}rfi, M.~Kohler, A.~Krzyzak, H.~Walk, et~al., A distribution-free
  theory of nonparametric regression, Vol.~1, Springer, 2002.

\bibitem{hao2024multiscale}
W.~Hao, R.~Li, Y.~Xi, T.~Xu, Y.~Yang, Multiscale neural networks for
  approximating {Green's} functions, arXiv preprint arXiv:2410.18439 (2024).

\end{thebibliography}

\end{document}